\newcommand{\crefp}[1]{(\cref{#1})}
\theoremstyle{plain}
\newtheorem{theorem}{Theorem}[section]
\newtheorem{proposition}[theorem]{Proposition}
\newtheorem{corollary}[theorem]{Corollary}
\theoremstyle{definition}
\newtheorem{definition}[theorem]{Definition}
\newtheorem{assumption}[theorem]{Assumption}
\newtheorem{example}[theorem]{Example}
\def\th@remark{%
	\thm@headfont{\bfseries}%
	\normalfont 
	\thm@preskip\topsep \divide\thm@preskip\tw@
	\thm@postskip\thm@preskip
}
\theoremstyle{remark}
\newtheorem{remark}[theorem]{Remark}
\title{Global Safe Sequential Learning via Efficient Knowledge Transfer}
\author{\name Cen-You Li \email cen-you.li@campus.tu-berlin.de \\
\addr Technical University of Berlin, Germany\\
Bosch Center for Artificial Intelligence, Germany
\AND
\name Olaf Duennbier \email olaf.duennbier@de.bosch.com \\
\addr Robert Bosch GmbH, Germany
\AND
\name Marc Toussaint \email toussaint@tu-berlin.de\\
\addr Technical University of Berlin, Germany
\AND
\name Barbara Rakitsch\thanks{Equal contribution} \email barbara.rakitsch@de.bosch.com\\
\addr Bosch Center for Artificial Intelligence, Germany
\AND
\name Christoph Zimmer\footnotemark[1] \email christoph.zimmer@de.bosch.com\\
\addr Bosch Center for Artificial Intelligence, Germany
}
\def\githublink{\url{https://github.com/cenyou/TransferSafeSequentialLearning}}
\begin{document}

\maketitle

\begin{abstract}
Sequential learning methods, such as active learning and Bayesian optimization, aim to select the most informative data for task learning. In many applications, however, data selection is constrained by unknown safety conditions, motivating the development of safe learning approaches.
A promising line of safe learning methods uses Gaussian processes to model safety conditions, restricting data selection to areas with high safety confidence.
However, these methods are limited to local exploration around an initial seed dataset, as safety confidence centers around observed data points. As a consequence, task exploration is slowed down and safe regions disconnected from the initial seed dataset remain unexplored.
In this paper, we propose safe transfer sequential learning to accelerate task learning and to expand the explorable safe region.
By leveraging abundant offline data from a related source task, our approach guides exploration in the target task more effectively.
We also provide a theoretical analysis to explain why single-task method cannot cope with disconnected regions.
Finally, we introduce a computationally efficient approximation of our method that reduces runtime through pre-computations.
Our experiments demonstrate that this approach, compared to state-of-the-art methods, learns tasks with lower data consumption and enhances global exploration across multiple disjoint safe regions, while maintaining comparable computational efficiency.
\end{abstract}
	
	\section{Introduction}\label{section-introduction}

 Despite the great success of machine learning, acquiring data remains a significant challenge.
One prominent approach is to consider experimental design~\citep{Lindley1956, ChalonerVerdinelli1995, brochu2010tutorial}.
In particular, active learning (AL)~\citep{krause08a, KumarGupta2020} and Bayesian optimization (BO)~\citep{brochu2010tutorial, Snoek_etal12bo} resort to a sequential data selection process in which the most informative data points are incrementally added to the dataset.
The methods begin with a small dataset, iteratively compute an acquisition function to prioritize data points for querying, select new data based on this information, receive observations from the oracle, and update the belief. 
This process is repeated until the learning goal is achieved, or until the acquisition budget is exhausted.
These learning algorithms often utilize Gaussian processes (GPs,~\cite{GPbook}) as surrogate models for the acquisition computation~\citep{krause08a,brochu2010tutorial}.


In many applications, such as spinal cord stimulation~\citep{Harkema2011} and robotic learning~\citep{Berkenkamp_2016, dominik_baumann_gosafe_2021}, data acquisition can introduce safety risks due to unknown safety constraints in the input space. For instance, tuning a robot controller requires testing various controller parameters; however, certain parameter settings may lead to unsafe behaviors, such as a drone flying at high speed toward a human—an issue only observed after executing the controller~\citep{Berkenkamp_2016}. This scenario highlights the need for a safe learning approach that selects data points being safe and maximally informative within safety limits.
One effective approach to safe learning is to model safety constraints using additional GPs~\citep{sui15safeopt, Schreiter2015, ZimmerNEURIPS2018_b197ffde, yanan_sui_stagewise_2018, matteo_turchetta_safe_2019, berkenkamp2020bayesian, Sergeyev2020_safe_bo, dominik_baumann_gosafe_2021, cyli2022}. 
These algorithms begin with a small set of safe observations, and define a safe set to restrict exploration to regions with high safety confidence.
As learning progresses, this safe set expands, allowing the explorable area to grow over time.
Safe learning approaches have also been explored in related fields, such as Markov Decision Processes~\citep{matteo_turchetta_safe_2019} and reinforcement learning~\citep{garcia15a_JMLR}.


While safe learning methods have demonstrated significant impact, several challenges remain. 
First, the GP hyperparameters must be specified before exploration begins~\citep{sui15safeopt, Berkenkamp_2016, berkenkamp2020bayesian} or be fitted using an initially small dataset~\citep{Schreiter2015, ZimmerNEURIPS2018_b197ffde, cyli2022}.
In addition, safe learning algorithms often suffer from local exploration: GP models are typically smooth, with uncertainty increasing beyond the boundaries of the reachable safe set.
This results in slow convergence, and disconnected safe regions are often classified as unsafe and remain unexplored.
We provide a detailed analysis and visual illustration of this issue in~\cref{section4-gp_no_jump}. 
In practice, local exploration complicates the deployment of safe learning algorithms, as domain experts must supply safe data from multiple distinct safe regions.

	
\begin{figure*}[t]
\vskip 0.2in
\begin{center}
\centerline{\includegraphics[width=\textwidth]{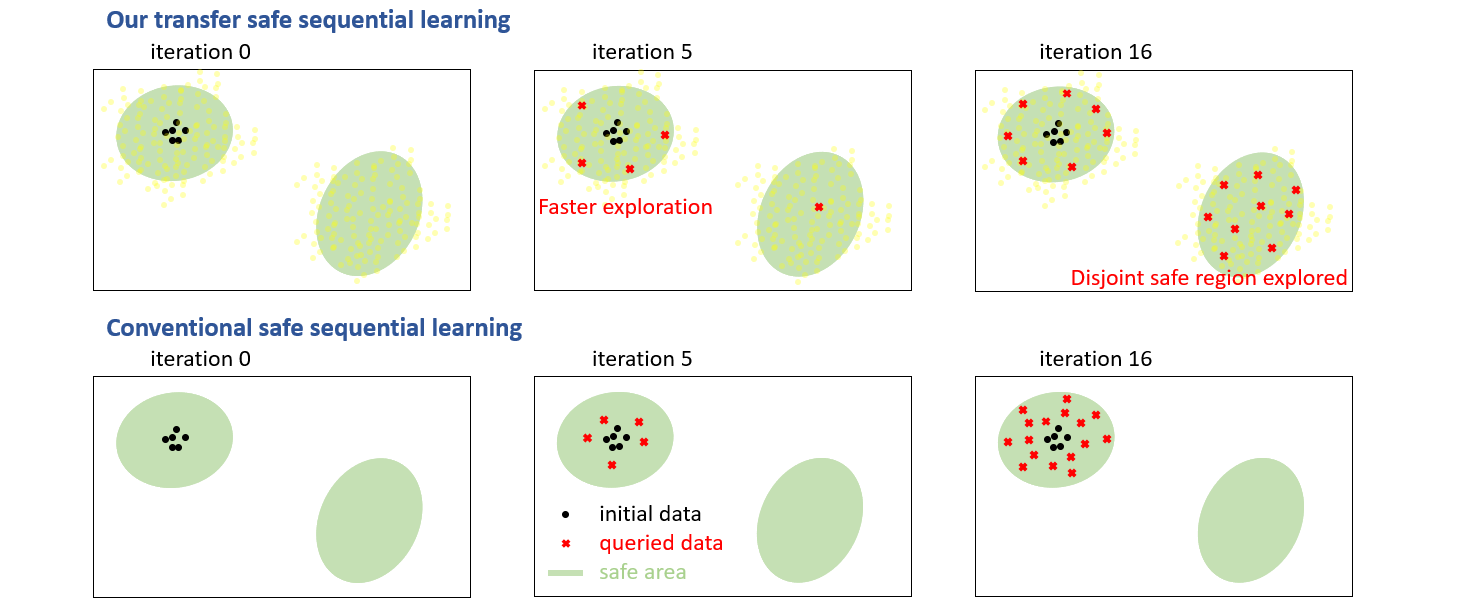}}
\caption{
Illustration: Safe sequential learning with transfer (top) and conventional (bottom) learning.
The light yellow data points represent source data.
The main benefit of transfer learning is to accelerate exploration and identify larger and potentially disjoint safe regions by leveraging the source data.}

\label{figure1}
\end{center}
\vskip -0.2in
\end{figure*}

\paragraph{Our contribution:}\label{section-introduction-our_contribution}
Safe learning generally begins with prior knowledge~\citep{Schreiter2015, sui15safeopt, berkenkamp2020bayesian}.
We assume that correlated experiments have already been performed, and their results are readily available. 
This assumption enables transfer learning, offering two key benefits (see also ~\cref{figure1}): 
(1) Exploration and expansion of safe regions are significantly accelerated, and (2) disconnected safe regions can be explored allowing to discover larger safe regions.
Both advantages are made possible by guidance from the source task.
We empirically demonstrate both of the benefits and provide a theoretical analysis showing that conventional single-task approaches cannot identify unconnected safe regions.
Real-world applications of this approach are ubiquitous, including simulation-to-reality transfer~\citep{Marco2017sim2realRLviaBO}, serial production, and multi-fidelity modeling~\citep{ShiboLi20multifidelitybo}.

Transfer learning can be implemented by jointly modeling the source and target tasks as multi-output  GPs~\citep{JournelHuijbregts1976miningGeo, alvarez2012kernels}.
However, GPs are notorious for their cubic time complexity due to the inversion of Gram matrices~\crefp{section-background-GP}.
Consequently, large volumes of source data significantly increase computational time, which is often a bottleneck in real-world experiments.
To address this, we modularize the multi-output GPs, allowing source-related components to be precomputed and fixed, which reduces the computational complexity while retaining the benefits of transfer learning.

In summary, we
1) introduce the idea of safe transfer sequential learning,
2) derive that conventional single-task approaches cannot discover disjoint safe regions,
3) provide a modularized approach to multi-output GPs that alleviates the computational burden of incorporating the source data, and
4) demonstrate the empirical efficacy on safe AL problems.

\paragraph{Related work:}\label{section-introduction-related_work}
Safe learning is considered in many applications including AL, BO, Markov Decision Processes~\citep{matteo_turchetta_safe_2019} and reinforcement learning~\citep{garcia15a_JMLR}.
In this paper, we focus on GP learning problems, as GPs are considered the gold-standard when it comes to calibrated uncertainties which is particularly important for safe learning under uncertainty.
Previous works~\citep{Gelbart2014ConstrBO, hernandez-lobatob15_constrainedBO, Hernandez-Lobato_etal2016_constrainedBO} investigated constrained learning with GPs by incorporating constraints directly into the acquisition function (e.g., discounting the acquisition score by the probability of constraint violation). 
However, these approaches do not exclude unsafe data from the search pool, and generally address non-safety-critical applications.
A safe set concept was introduced for safe BO~\citep{sui15safeopt} and safe AL~\citep{Schreiter2015}, and later extended to BO with multiple safety constraints~\citep{berkenkamp2020bayesian}, to AL for time series modeling~\citep{ZimmerNEURIPS2018_b197ffde}, and to AL for multi-output problems~\citep{cyli2022}.
For safe BO,~\cite{yanan_sui_stagewise_2018} proposed a two-stage approach, separating safe set exploration and BO.
However, all of these methods suffer from local exploration~\crefp{section4-gp_no_jump}.
Some recent methods address disjoint safe regions. 
For example,~\cite{Sergeyev2020_safe_bo} considered regions separated by small gaps where the constraint functions briefly fall below, but remain near, the safety threshold.
\cite{dominik_baumann_gosafe_2021} proposed a global safe BO method for dynamical systems, assuming that unsafe regions can be approached slowly enough such that an intervention mechanism exists to stop the system in time.
Despite these advances, none of these approaches leverages safe transfer learning, which can allow for global exploration by utilizing prior knowledge from source tasks for a wide range of scenarios.

Transfer learning and multitask learning have gained increasing attention.
In particular, multi-output GP methods have been developed for multitask BO~\citep{Swersky13MTBO, poloczek2017MultiInfoSourceOptim}, sim-to-real transfer for BO~\citep{Marco2017sim2realRLviaBO}, and multitask AL~\citep{Zhang_etal16moAL_AAAI1611879}.
However, GPs face cubic time complexity with respect to the number of observations, a challenge that grows with multiple outputs.
In~\cite{Tighineanuetal22transferGPbo}, the authors assume a specific structure of the multi-output kernel, which allows to factorize the computation with an ensembling technique.
This eases the computational burden for transfer sequential learning.
In our paper, we propose a modularized safe transfer learning that avoids the cubic complexity.

\paragraph{Paper structure:}
The remaining of this paper is structured as follows.
We provide the setup and problem statement in~\cref{section-problem_statement}, background on GPs and safe AL in~\cref{section-background}. 
\cref{section4-gp_no_jump} discusses theoretical perspective of safe learning and demonstrate that safe learning approaches based on standard GPs suffer from local exploration.
\cref{section-our_transfer_method} elaborates our safe transfer learning approach and our modular computation scheme.
\cref{section-experiments} is the experimental study.
Finally, we conclude our paper in~\cref{section-conclusion}.

\section{Safe Transfer Active Learning Setup}\label{section-problem_statement}

Transfer Learning aims to transfer knowledge from previous, \textit{source}, systems to a new, \textit{target}, system. Usually, there exist a lot of data from one or more source systems and only few or no data from the target system. Safe Transfer Active Learning will exploit the knowledge from the source systems' data and allows for safe and active data collection on the target system.
Throughout this paper, we inspect regression problems.

\begin{table}[t]
\caption{Key Notation} \label{table-notation}
\begin{center}
\begin{tabular}{l|l}
\toprule
\textbf{Symbols}
&\textbf{Meaning}\\
\hline 
$N_{\text{init}}$ & number of initial target data points \\
$N_{\text{query}}$ & number of target data points added by AL \\
$N = N_{\text{init}},\ldots, N_{\text{init}}+N_{\text{query}}$ & number of total target points \\
$N_{\text{source}}$ & number of source data points 
\\
\hline
$\mathcal{D}_N=\{ \bm{x}_{1:N}, y_{1:N}, \bm{z}_{1:N} \}$
& dataset of the target task
\\
$\mathcal{D}_{N_{\text{source}}}^{\text{source}}=\{\bm{x}_{s, 1:N_{\text{source}}}, y_{s, 1:N_{\text{source}}}, \bm{z}_{s, 1:N_{\text{source}}}\}$
&
dataset of the source task 
\\
$\bm{z} = (z^1,...,z^J)$
& safety variables of the target task
\\
$\bm{z}_s = (z_s^1,...,z_s^J)$
& safety variables of the source task
\\
\hline
$y=f(\bm{x})+\epsilon_f$ 
& model of the target observation $y$\\
$z^j=q^{j}(\bm{x})+\epsilon_{q^{j}}$
& model of the target safety observation $z^j$ \\
$y_s=f_s(\bm{x})+\epsilon_{f_s}$
& model of the source observation $y_s$ \\
$z_s^j=q^{j}_{s}(\bm{x})+\epsilon_{q^{j}_{s}}$
& model of the source safety observation $z_s^j$
\\
\hline
$f\sim \mathcal{GP}\left( \bm{0}, k_{f} \right)$
& single-output GP prior over target main function $f$
\\
${q}^{j}\sim \mathcal{GP}\left( \bm{0}, k_{{q}^{j}} \right)$
& single-output GP prior over target safety function $q^j$
\\
$\bm{f}\sim \mathcal{GP}\left( \bm{0}, k_{\bm{f}} \right)$
& multi-output GP prior over main functions $f_s$ and $f$
\\
$\bm{q}^{j}\sim \mathcal{GP}\left( \bm{0}, k_{\bm{q}^{j}} \right)$
& multi-output GP prior over safety functions $q_s^j$ and  $q^j$
\\
\bottomrule
\end{tabular}\\
\end{center}
\end{table}

\paragraph{Target and Safety -- Notation:}
Each $D$-dimensional input $\bm{x} \in \mathcal{X} \subseteq \mathbb{R}^D$ has a corresponding regression output $y \in \mathbb{R}$ and safety values jointly expressed as $\bm{z}=(z^1, ..., z^J) \in \mathbb{R}^J$, $J$ is the number of safety variables.
There are $J$ thresholds $T_j \in \mathbb{R}, j=1,...,J$, and an input $\bm{x}$ is safe if the corresponding safety values $z^{j} \geq T_j$ for all $j=1,...,J$.
It is assumed that the underlying functions of $y, z^{1},...,z^{J}$ are all unknown.

\paragraph{Source and Safety -- Notation:}
Similarly, there exist output and safety values of one or more source tasks, again from unknown underlying functions.
The source output value is denoted by $
y_{s} \in \mathbb{R}$ and source safety values by
$\bm{z}_{s}=(z_{s}^1,...,z_{s}^J) \in \mathbb{R}^{J}$, $s$ is the index of source task(s).
The source tasks are defined on the same domain $\mathcal{X}$.
The source and target tasks may have different numbers of constraint variables, but we can add trivial constraints (e.g. $1 \geq -\infty$) to any of the tasks in order to have the same number $J$.
Furthermore, the source data may or may not be measured with the same safety constraints as the target task.
For example, in a simulation-to-reality transfer~\citep{Marco2017sim2realRLviaBO}, the source dataset can be obtained unconstrained.

\paragraph{Datasets -- Notation:}
A dataset over the target task is denoted by $\mathcal{D}_{N} = \{\bm{x}_{1:{N}}, y_{1:{N}}, \bm{z}_{1:{N}}\}$, $\bm{x}_{1:{N}}=\{\bm{x}_1, ..., \bm{x}_{N} \} \subseteq \mathcal{X}$, $y_{1:{N}}=\{y_1, ..., y_{N}\} \subseteq \mathbb{R}$, safety observations $\bm{z}_{1:{N}} \coloneqq \{\bm{z}_n = (z_n^1,...,z_n^J) \}_{n=1}^{N} \subseteq \mathbb{R}^J$, and $N$ is the number of observed data.
In this paper, $N$ is not fixed, as we may actively add new labeled data.
We denote the source data by $\mathcal{D}_{N_{\text{source}}}^{\text{source}}=\{\bm{x}_{s, 1:N_{\text{source}}}, y_{s, 1:N_{\text{source}}}, \bm{z}_{s, 1:N_{\text{source}}}\} \subseteq \mathcal{X} \times \mathbb{R} \times \mathbb{R}^J$, $s$ is the index of source task and  $N_{\text{source}}$ is the number of all source data points.
In our main paper, we consider only one source task for simplicity, while~\cref{appendix-mogp_detail-mogp_more_source} provides formulation on more source tasks.
Please also see~\cref{table-notation} for a summary of our notation.

\paragraph{Safe Active Learning Procedure:}
The goal of safe AL is to collect data actively and safely on the target system, such that the final dataset helps to model the regression output $y$ on the safe region of input space $\mathcal{X}$, i.e. subset of $\mathcal{X}$ corresponding to $z^{1}\geq T_1, ..., z^{J}\geq T_J$.

Concretely speaking, we are given a small amount of data on the target task, i.e. $\mathcal{D}_{N}$ where the initial size $N=N_{\text{init}}$ is small.
The initial data are typically given by a domain expert and are safe, i.e. for $\mathcal{D}_{N_{\text{init}}} = \{\bm{x}_{1:{N_{\text{init}}}}, y_{1:{N_{\text{init}}}}, \bm{z}_{1:{N_{\text{init}}}}\}, \forall n=1,...,N_{\text{init}}, \bm{z}_n=(z_n^1,...,z_n^J)$  satisfy the safety constraints $z_n^1 \geq T_1, ..., z_n^J \geq T_J$.

At each $N$, one seeks the next point $\bm{x}_* \in \mathcal{X}_{\text{pool}} \subseteq \mathcal{X}$ to be evaluated.
$\mathcal{X}_{\text{pool}} \subseteq \mathcal{X}$ is the search pool which can be the entire space $\mathcal{X}$ or a predefined subspace of $\mathcal{X}$, depending on the applications.
The evaluation is budget consuming and safety critical, and it will return a noisy $y_*$ and noisy safety values $\bm{z}_*$.
Ideally, we need to make sure that $\bm{z}_*=(z_*^{1},...,z_*^{J})$ respect the safety constraints $z_*^{j} \geq T_j$ for all $ j=1,...,J$ and that $y_*$ is informative for the modeling of target $y$. 
As the safety outputs are unknown when an $\bm{x}_*$ is selected, guaranteeing safety is challenging.
Safe learning methods resort to allowing queries that are safe only with high probability~\citep{sui15safeopt,yanan_sui_stagewise_2018,ZimmerNEURIPS2018_b197ffde,cyli2022}.

Afterward, the labeled point is added to $\mathcal{D}_{N}$ (observed dataset becomes $\mathcal{D}_{N+1}$), and we proceed to the next iterations.
$N$ is initially $N_{\text{init}}$ and grows to $N_{\text{init}}+N_{\text{query}}$.
$N_{\text{query}}$ is the number of the learning iterations, i.e. the number of data points actively added.

\paragraph{Safe Transfer Active Learning Aim:}
In particular, this paper aims to build a new safe transfer AL, a safe AL algorithm with multi-output GPs, so that we leverage the information of the source data $\mathcal{D}_{N_{\text{source}}}^{\text{source}}$ to explore a larger safe area.
Our algorithm aims to
\begin{itemize}
    \item (i) collect as few (small $N_{\text{query}}$) data as possible for building an accurate regression model of $y$ (in the safe part of the input domain $\mathcal{X}$),
    \item (ii) collect the data in a safe way and hereby explore the safe region including its boundaries, 
    \item (iii) in particular explore larger safe areas than benchmarks in a faster way. 
\end{itemize}

\section{Background: Gaussian Processes of Single-Task, Safe Active Learning}\label{section-background}

In this section, we introduce Gaussian Processes (GPs) and state-of-the-art safe Active Learning (safe AL).
GPs are the workhorse of safe AL in which they are routinely applied to select safe and informative data points~\citep{Schreiter2015,ZimmerNEURIPS2018_b197ffde,cyli2022}.

\subsection{Gaussian Processes (GPs)}\label{section-background-GP}

Suppose we aim to model the output $y$ and the safety observations $z^{1}, ..., z^{J}$ with GPs.
Here, we introduce the modeling scheme and the underlying assumptions.
The first assumption is that the data represent functional values blurred with i.i.d. Gaussian noises.

\begin{assumption}[Data: target task]\label{assump1-data_generation_process}
Assume $y = f(\bm{x}) + \epsilon_{f}$, where $\epsilon_{f} \sim \mathcal{N}\left( 0, \sigma_{f}^2 \right)$, for our target observations.
We further assume that $z^j = q^{j}(\bm{x}) + \epsilon_{ q^{j} }$, where $\epsilon_{ q^{j} } \sim \mathcal{N}\left( 0, \sigma_{q^{j}}^2 \right)$, and $j=1,\ldots, J$ indexes the safety constraints.
All of the noise variances $\{\sigma_f^2, \sigma^2_{q^1}, \ldots, \sigma^2_{q^J} \}$ are positive.
\end{assumption}
We then place a GP assumption on each of the underlying functions $f, q^{1}, ..., q^{J}$.
A GP is a stochastic process defined by a mean and a kernel function~\citep{GPbook, KanHenSejSri18, KernelBook}.
In this work, we set the mean to zero — a common practice, as normalized data typically justifies this assumption.
The kernel function, $\mathcal{X} \times \mathcal{X} \rightarrow \mathbb{R}$, specifies the covariance of function values at different input points.
Without prior knowledge of the data, we make the standard assumption that the governing kernels are stationary.
The GP assumption is then formulated as the following.
\begin{assumption}[Model: single-task]\label{assump2-zero_mean_stationary_GP}
For each function, $g \in \{ f, q^{1}, ..., q^{J} \}$, we assume that $g \sim \mathcal{GP}(0, k_{g})$ with a stationary kernel with bounded variance, $k_g(\bm{x}, \bm{x}') \coloneqq k_g(\bm{x} - \bm{x}') \leq 1$.
\end{assumption}
Bounding the kernels by one provides advantages in theoretical analysis~\citep{Srinivas_2012} and is not restrictive since the data is usually normalized to unit variance.

\cref{assump1-data_generation_process} and~\cref{assump2-zero_mean_stationary_GP} provide the predictive distribution of the functions $f, q^{1}, ..., q^{J}$.
We write down the distribution for the function $f$ at a test point $\bm{x}_*$: 	
\begin{align}
\label{eq_GP_pred}
    p\left( f(\bm{x}_{*}) | \bm{x}_{1:N}, y_{1:N} \right) = \mathcal{N}\left( \mu_{f, N}(\bm{x}_{*}) , \sigma_{f, N}^2(\bm{x}_{*}) \right),
\end{align}
where 
\begin{align}
\begin{split}\label{eqn1-GP_posterior}
\mu_{f, N}(\bm{x}_{*})
&= k_{f}(\bm{x}_{1:N}, \bm{x}_{*})^T \left( \bm{K}_f + \sigma_{f}^2 I \right)^{-1} y_{1:N}, \\
\sigma_{f,N}^2(\bm{x}_{*})
&= k_f(\bm{x}_{*}, \bm{x}_{*}) - k_{f}(\bm{x}_{1:N}, \bm{x}_{*})^T \left( \bm{K}_f + \sigma_{f}^2 I \right)^{-1} k_{f}(\bm{x}_{1:N}, \bm{x}_{*}).
	\end{split}
	\end{align}
We use the notation $k_{f}(\bm{x}_{1:N}, \bm{x}_{*})=\left( k_{f}(\bm{x}_1, \bm{x}_*), ..., k_{f}(\bm{x}_N, \bm{x}_*) \right)\in \mathbb{R}^{N \times 1}$ to denote the kernel vector between the training points $\bm{x}_{1:N}$ and the test point $\bm{x}_*$.
The kernel matrix $\bm{K}_{f} \in \mathbb{R}^{N\times N}$ contains the covariances between the training points $\bm{x}_{1:N}$ with $\left[ \bm{K}_{f} \right]_{m,n} = k_{f}(\bm{x}_{m}, \bm{x}_n), m,n=1,...,N$.

Typically, $k_f$ is parameterized and can be jointly fitted with the noise variance $\sigma_f^2$. Common fitting techniques involve computing the marginal likelihood, $  \mathcal{N}\left( y_{1:N}|\bm{0}, \bm{K}_f + \sigma_f^2 I \right)$, where the
the runtime complexity is $\mathcal{O}\left( N^3 \right)$, dominated by the inversion of the Gram matrix $\left( \bm{K}_f + \sigma_{f}^2 I \right)^{-1}$.

The predictive distributions of the safety functions $q^{1}, ..., q^{J}$ can be obtained by replacing $f$ with $q^{1}, ..., q^{J}$ and the outputs $y_{1:N}$ with $z_{1:N}^j, j=1,...,J$ in~\cref{eq_GP_pred,eqn1-GP_posterior}.
 Similarly, the log-likelihood can be maximized for each $q^{j}$ by jointly learning $k_{q^j}$ and $\sigma^2_{q^{j}}$ in the same manner, $j=1,\ldots, J$.

\begin{remark}
In our paper, all safety measurements $z^{1},...,z^{J}$ are modeled independently.
If the variables are not independent, our analysis and arguments still apply, as the dependent constraints can be grouped, and the problem reduces back to the independent case.
\end{remark}

\subsection{Safe Active Learning (Safe AL)}\label{section-background_sal}

\begin{algorithm}[b]
\caption{Safe AL}
\label{alg-SL}
\begin{algorithmic}[1]
\Require \cref{assump1-data_generation_process},~\cref{assump2-zero_mean_stationary_GP}, $\mathcal{D}_{N_{\text{init}}}, \mathcal{X}_{\text{pool}}$, $\beta$ or $\alpha$, $N_{\text{query}}$, thresholds $T_1,...,T_J$
\For{$N=N_{\text{init}}, ..., N_{\text{init}} + N_{\text{query}} - 1$}
\State Fit GPs $f, q_1, ..., q_J$ with $\mathcal{D}_{N}$
\State $\mathcal{S}_N \gets \cap_{j=1}^{J} \{ \bm{x} \in \mathcal{X}_{\text{pool}} | \mu_{q^{j}, N}(\bm{x}) - \beta^{1/2} \sigma_{q^{j}, N}(\bm{x}) \geq T_j\}$~\crefp{eqn1-GP_posterior,eqn3-safe_set}
\State $\bm{x}_*\gets \text{argmax}_{\bm{x} \in \mathcal{S}_N } \ a(\bm{x} ),
\ a(\bm{x} )=
\sum_{g \in \{f, q^{1}, ..., q^{J}\} } H_g \left[ \bm{x}|\mathcal{D}_N \right]$
\State Query $\bm{x}_*$ to get evaluations $y_*$ and $\bm{z}_*$
\State $\mathcal{D}_{N+1} \gets \mathcal{D}_{N} \cup \{ \bm{x}_*, y_*, \bm{z}_*  \}, \mathcal{X}_{\text{pool}} \gets \mathcal{X}_{\text{pool}} \setminus \{ \bm{x}_* \}$
\EndFor
\State \textbf{Return} $\mathcal{D}_{N_{\text{init}} + N_{\text{query}}}$, trained models $f, q_1, ..., q_J$
\end{algorithmic}
\end{algorithm}

In this section, we introduce state-of-the-art safe AL.
The state-of-the-art safe AL cannot exploit knowledge in form of source data.
Although source data is not considered in this section, we will still write \textit{target task} to make the distinction between the two tasks clear.


Safe AL~\citep{Schreiter2015,ZimmerNEURIPS2018_b197ffde,cyli2022} aims to select data actively and safely to learn about a target task.
At a given number of available target data $N$, the goal is to select $\bm{x}_* \in \mathcal{X}_{\text{pool}} \subseteq \mathcal{X}$, that gives us a safe $\bm{z}_*=(z_*^{1},...,z_*^{J})$ and informative $y_*$ (informative in the context of modeling).

The key of safe AL is the safety and data selection criteria.
Commonly, these criteria employ GPs due to the well calibrated predictive uncertainty~\citep{Schreiter2015,ZimmerNEURIPS2018_b197ffde,cyli2022}. It is worth highlighting a closely related field, safe Bayesian optimization (BO)~\citep{sui15safeopt,yanan_sui_stagewise_2018,berkenkamp2020bayesian}, which follows a similar procedure except that the goal is to search for an optimum $y_*$ subject to safety constraints $z_*^1 \geq T_1,...,z_*^J \geq T_J$.

\paragraph{Safety Condition:} The safety variables are modeled by GP functions $q^{1}, ..., q^{J}$~\crefp{section-background-GP}, and the core is to compare the safety confidence bounds~\crefp{eqn1-GP_posterior} with the thresholds $T_1,...,T_J \in \mathbb{R}$.
At each iteration $N$, we can compute the safety probability $p\left( q^{j}(\bm{x}) | \bm{x}_{1:N}, z_{1:N}^{j} \right) = \mathcal{N}\left( \mu_{q^{j}, N}(\bm{x}) , \sigma_{q^{j}, N}^2(\bm{x}) \right)$, for each safety constraint $j=1,...,J$.
\cite{sui15safeopt} defines the safe set $\mathcal{S}_N \subseteq \mathcal{X}_{\text{pool}}$ as
\begin{align}\label{eqn3-safe_set}
\begin{split}
\mathcal{S}_N = \cap_{j=1}^{J} \{ \bm{x} \in \mathcal{X}_{\text{pool}} | \mu_{q^{j}, N}(\bm{x}) - \beta^{1/2} \sigma_{q^{j}, N}(\bm{x}) \geq T_j\},
\end{split}
\end{align}
where $\beta \in \mathbb{R}^+$ is a parameter for probabilistic tolerance control.
This definition is equivalent to $\forall \bm{x} \in \mathcal{S}_N, p\left( q^{1}(\bm{x}) \geq T_1, ..., q^{J}(\bm{x}) \geq T_J \right) \geq \left( 1 - \alpha \right)^J$ when $\alpha = 1 - \Phi(\beta^{1/2})$ ($\alpha \in [0, 1]$) where $\Phi$ is the standard Gaussian cumulative distribution function (CDF)~\citep{Schreiter2015, ZimmerNEURIPS2018_b197ffde, cyli2022}.
Note that $\alpha > 0.5$ corresponds to $\beta^{1/2} < 0$ (assume this exists) which is usually not considered because safe learning aims for high safety confidence while $\alpha > 0.5$ indicates a safety confidence of at most $50\%$ per constraint - so at most a random guess.

\paragraph{Information Criterion:} Safe AL queries a new point by mapping safe candidate inputs to acquisition scores:
\begin{align}\label{eqn4-acquisition_optimization}
\begin{split}
\bm{x}_*
&= \text{argmax}_{\bm{x} \in \mathcal{S}_N } a\left( \bm{x} \right),
\end{split}
\end{align}
where $a(\cdot)$ is an acquisition function.
Notice here that $a(\cdot)$ and $\mathcal{S}_{N}$ both depend on the observed dataset $\mathcal{D}_N$.
In AL problems, a prominent acquisition function is the predictive entropy~\citep{Schreiter2015, ZimmerNEURIPS2018_b197ffde, cyli2022}: $a(\bm{x}) = H_f \left[ \bm{x}|\mathcal{D}_N \right] = \frac{1}{2} \log\left( 2 \pi e \sigma^2_{f, N}(\bm{x}) \right)$, where $\sigma^2_{f, N}$ is defined in~\cref{eqn1-GP_posterior}.
To accelerate the exploration of the safety functions,~\cite{berkenkamp2020bayesian} incorporate the information of the safety functions by using $a(\bm{x})=\text{max}_{ g\in\{f, q^{1},...,q^{J}\} } \{ H_{g}\left[ \bm{x} | \mathcal{D}_N \right] \}$.
Our acquisition function is built upon this and is written in~\cref{alg-SL}.

Please note again the close connection to BO: it is possible to exchange the acquisition function by the SafeOpt criteria if one wants to address safe BO problems~\citep{sui15safeopt, berkenkamp2020bayesian, rothfuss_meta-learning_2022}).

\paragraph{Constrained Acquisition Optimization:}
Solving~\cref{eqn4-acquisition_optimization} is challenging.
In the literature~\citep{Schreiter2015, cyli2022, sui15safeopt, berkenkamp2020bayesian}, this is solved on a discrete pool with finite elements, i.e. $N_{\text{pool}} \coloneqq |\mathcal{X}_{\text{pool}}| < \infty$.
One would compute the GP posteriors on the entire pool $\mathcal{X}_{\text{pool}}$ to determine the safe set, then optimize the acquisition scores over the safe set.
In our paper, we inherit this finite discrete pool setting.

\paragraph{Time Complexity:}
The complexity comprises $\mathcal{O}\left( N^3 \right)$ for GP training and $\mathcal{O}\left( N_{\text{pool}} N^2 \right)$ for GP inference, assuming the Gram matrices, $\left( \bm{K}_g + \sigma_{g}^2 I \right)^{-1}, \forall g\in \{f, q^1,...,q^J\}$, are already computed during the training~\crefp{eqn1-GP_posterior}.
Importantly, GP inference is only performed once per query, whereas GP training (or more specifically, its most computationally expensive step, the matrix inversion) is repeated multiple times during parameter learning.
As a consequence, the size of the discretized pool $N_{\text{pool}}$ can be much larger than the training dataset $N$, e.g. $\mathcal{X}_{\text{pool}}$ can include up to tens or even hundreds of thousands of samples.

The whole learning process is summarized in~\cref{alg-SL}.
In the next section, we provide theoretical analysis to demonstrate the presence of local exploration in safe learning approaches.

\section{Safe Learning Solely on Target Task: Local Exploration}\label{section4-gp_no_jump}

Before we introduce our safe transfer AL approach in the next~\cref{section-our_transfer_method}, we analyze a shortcoming of the standard safe AL~\crefp{alg-SL}. We quantify the upper bound for an explorable safe region, and prove that safe AL is limited to local exploration within the given bound.
Note, that the analysis will not involve the acquisition function, and therefore the result applies not only to  safe AL but also to GP based safe BO settings with safe set as in~\cref{eqn3-safe_set}.

Given observations $\mathcal{D}_N$, we would like to know, until how far into the input space the safety confidence is sufficiently high.

\paragraph{Correlation weakened with increasing distance:}
The conventional safe AL~\crefp{alg-SL} builds models based on a standard GP assumption (\cref{assump1-data_generation_process},~\cref{assump2-zero_mean_stationary_GP}), and then the explorable region is obtained by quantifying the safety confidence, conditioned on observed data $\mathcal{D}_N$~\crefp{eqn3-safe_set}.
The safety confidence is calculated from the GP predictive distributions~\crefp{eqn1-GP_posterior}, and it thus depends on the kernel to correlate input points of various locations.
Commonly used stationary kernels measure the distance between a pair of points, while the actual output values do not matter (for two points $\bm{x}, \bm{x}' \in \mathcal{X}$, $\| \bm{x} - \bm{x}' \|$ determines the covariance).
These kernels have the property that closer points have higher kernel values, indicating stronger correlation, while distant points result in small kernel values.
We first formulate this property as the following.
\begin{restatable}{definition}{kernelDeltaR}
\label{property1-kernel_convergence}
We call a kernel $k$ a kernel with \textit{correlation weakened by distance} if 
$k: \mathcal{X} \times \mathcal{X} \rightarrow \mathbb{R}$ fulfills the following property: $\forall \delta > 0$, $\exists r > 0$ s.t. $\|\bm{x} - \bm{x}'\| \geq r \Rightarrow k(\bm{x}, \bm{x}') \leq \delta$ under $L2$ norm.
\end{restatable}
This definition will later help us quantify the upper bound of an explorable region. 
We provide expressions of popular stationary kernels (RBF kernel and Mat{\'e}rn kernels), as well as their relations between input distance $r$ and covariance $\delta$ in the~\cref{appendix-r_delta_relation_for_common_kernels}.

\paragraph{Low correlation leading to small safety probability:} With~\cref{property1-kernel_convergence}, we can now derive a theorem measuring the explorable region.
The main idea is:
when an unlabeled point $\bm{x}_*$ is far away from the observed inputs, the value of $\delta$ can become very small (i.e. small covariance measured by the kernel).
Thus, for each $j=1,...,J$, the model weakly correlates $q^{j}(\bm{x}_*)$ to the observations.
As a result, the predictive mean is close to zero (GP prior) and the predictive uncertainty is large, both of which imply that the method has small safety confidence,\footnote{A small safety confidence indicates $\exists j=1,...,J$, $p\left( (q^{j}(\bm{x}_*) \geq T_j) | \bm{x}_{1:N}, z_{1:N}^j \right)$ is not high enough.
} at least if $\forall j=1,...,J, q^{j} \geq T_j$ is not a trivial condition, e.g. a trivial condition would be if a function $q^{j}$ has values majorly distributed in $[-1, 1]$ but thresholded at $T_j = -2$. 

\begin{restatable}[Local exploration of single-output GPs]{theorem}{thmGPnoJump}
\label{thm-BoundSafety}
We are given $\bm{x}_{1:N} \subseteq \mathcal{X}$.
For any safety constraint indexed by $j=1,...,J$, let $z_{1:N}^j \coloneqq (z_1^j, ..., z_N^j )$ be the observed noisy safety values and let $\|(z_1^j, ..., z_N^j)\|\leq \sqrt{N}$.
The safety value $z^{j}=q^{j}(\bm{x}) + \epsilon_{q^{j}}$ satisfies the GP assumptions (\cref{assump1-data_generation_process},~\cref{assump2-zero_mean_stationary_GP}): $q^{j} \sim \mathcal{GP}(0, k_{q^{j}}),
k_{q^{j}}(\cdot, \cdot) \leq 1,
\epsilon_{q^{j}}~\sim \mathcal{N}\left( 0, \sigma_{q^{j}}^2 \right)$.
The kernel $k_{q^{j}}$ is a kernel with correlation weakened by distance~\crefp{property1-kernel_convergence}.
Denote $k_{scale}^j \coloneqq max \ k_{q^{j}}(\cdot, \cdot)$.
Then $\forall \delta \in (0, \sqrt{k_{scale}^j} \sigma_{q^{j}} / \sqrt{N}), \exists r>0$ s.t. $\forall \bm{x}_* \in \mathcal{X}$ that fulfill $ \text{min}_{ \bm{x}_i \in \bm{x}_{1:N} } \|\bm{x}_* - \bm{x}_i\| \geq r$,
the probability thresholded on a constant $T_j$ is bounded by $p\left( (q^{j}(\bm{x}_*) \geq T_j) | \bm{x}_{1:N}, z_{1:N}^j \right) \leq \Phi\left( \frac{N\delta/\sigma_{q^{j}}^2 - T_j}{\sqrt{k_{scale}^j-(\sqrt{N}\delta/\sigma_{q^{j}})^2}} \right)$.
$\Phi$ is the CDF of standard Gaussian.
\end{restatable}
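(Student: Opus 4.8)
The plan is to reduce everything to the standardized ratio inside the Gaussian CDF and then bound its two ingredients separately. By \cref{eq_GP_pred} the posterior $q^{j}(\bm{x}_*)\mid \bm{x}_{1:N}, z_{1:N}^j$ is Gaussian, so the safety probability equals $\Phi\!\left(\frac{\mu_{q^{j},N}(\bm{x}_*)-T_j}{\sigma_{q^{j},N}(\bm{x}_*)}\right)$. Since $\Phi$ is increasing, it suffices to upper bound the ratio $\frac{\mu_{q^{j},N}(\bm{x}_*)-T_j}{\sigma_{q^{j},N}(\bm{x}_*)}$ by $\frac{N\delta/\sigma_{q^{j}}^2-T_j}{\sqrt{k_{scale}^j-(\sqrt{N}\delta/\sigma_{q^{j}})^2}}$. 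First I would invoke \cref{property1-kernel_convergence}: given $\delta$, choose $r$ so that $\|\bm{x}_*-\bm{x}_i\|\ge r$ forces $k_{q^{j}}(\bm{x}_i,\bm{x}_*)\le\delta$ for every $i$; using nonnegativity of the common stationary kernels this yields $\|k_{q^{j}}(\bm{x}_{1:N},\bm{x}_*)\|\le\sqrt{N}\delta$.

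Next I would bound the posterior mean and variance from \cref{eqn1-GP_posterior}. For the mean, Cauchy--Schwarz together with $\|(\bm{K}_{q^j}+\sigma_{q^{j}}^2 I)^{-1}\|_{\mathrm{op}}\le 1/\sigma_{q^{j}}^2$ (which holds because $\bm{K}_{q^j}\succeq 0$) and the hypothesis $\|z_{1:N}^j\|\le\sqrt{N}$ give $\mu_{q^{j},N}(\bm{x}_*)\le \sqrt{N}\delta\cdot\sigma_{q^{j}}^{-2}\cdot\sqrt{N}=N\delta/\sigma_{q^{j}}^2=:A$. For the variance, writing $k_{q^{j}}(\bm{x}_*,\bm{x}_*)=k_{scale}^j$ (the stationary maximum) and bounding the subtracted quadratic form by $\sigma_{q^{j}}^{-2}\|k_{q^{j}}(\bm{x}_{1:N},\bm{x}_*)\|^2$ yields $\sigma_{q^{j},N}^2(\bm{x}_*)\ge k_{scale}^j-N\delta^2/\sigma_{q^{j}}^2=:B^2$; the restriction $\delta<\sqrt{k_{scale}^j}\,\sigma_{q^{j}}/\sqrt{N}$ is exactly what keeps $B^2>0$.

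The last step, assembling these into the ratio bound, is where I expect the real difficulty. When $A\ge T_j$ the conclusion is immediate: $\mu-T_j\le A-T_j$ is nonnegative and $\sigma\ge B$ give $\frac{\mu-T_j}{\sigma}\le\frac{A-T_j}{B}$. The delicate regime is $A<T_j$ — precisely the one that demonstrates vanishing safety confidence far from the data — because there the numerator is negative and a crude pairing of $\mu\le A$ with $\sigma\ge B$ points the wrong way (a negative numerator is made larger, not smaller, by enlarging $\sigma$). To close this I would exploit that the mean and the variance are not independent: both are governed by $\|k_{q^{j}}(\bm{x}_{1:N},\bm{x}_*)\|$, which couples them through $\mu_{q^{j},N}(\bm{x}_*)\le \tfrac{\sqrt{N}}{\sigma_{q^{j}}}\sqrt{k_{scale}^j-\sigma_{q^{j},N}^2(\bm{x}_*)}$ (a Cauchy--Schwarz bound in the $(\bm{K}_{q^j}+\sigma_{q^{j}}^2 I)^{-1}$-inner product, using that the subtracted quadratic form equals $k_{scale}^j-\sigma_{q^{j},N}^2$). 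Substituting this into the ratio and maximizing over the admissible range $\sigma_{q^{j},N}(\bm{x}_*)\in[B,\sqrt{k_{scale}^j}]$, I would verify that the maximizer is attained at the left endpoint $\sigma=B$, which reproduces exactly $\frac{A-T_j}{B}$ and hence the claimed bound. Checking that this one-dimensional extremum genuinely sits at $\sigma=B$ rather than in the interior (together with the mild kernel-nonnegativity point needed for $\|k_{q^{j}}(\bm{x}_{1:N},\bm{x}_*)\|\le\sqrt{N}\delta$) is the crux of the argument and the step I would treat most carefully.
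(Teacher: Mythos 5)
Your first two bounds are exactly those in the paper's own proof (\cref{appendix-gp_no_jump}): \cref{proposition-Hv_norm,proposition-BoundNoisyKNNLower} give $\|(\bm{K}_{q^j}+\sigma_{q^{j}}^2I)^{-1}\|_{\mathrm{op}}\le\sigma_{q^{j}}^{-2}$, and \cref{corollary-BoundGPposterior} is precisely your mean and variance estimates (including the same implicit use of kernel nonnegativity). The divergence is in the final assembly, and your instinct there is correct: the paper concludes directly that
\begin{equation*}
\frac{\mu_{q^{j},N}(\bm{x}_*)-T_j}{\sigma_{q^{j},N}(\bm{x}_*)}\;\le\;\frac{N\delta/\sigma_{q^{j}}^2-T_j}{\sqrt{k_{scale}^j-N\delta^2/\sigma_{q^{j}}^2}}\;=\;\frac{A-T_j}{B},
\end{equation*}
with no case distinction, i.e.\ it performs exactly the ``crude pairing'' you reject. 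That implication is valid only when $A-T_j\ge 0$; it is unjustified whenever $T_j>N\delta/\sigma_{q^{j}}^2$, which is not an exotic corner case but precisely the regime produced by \cref{thm-beta_of_threshold} when $T_j>0$ and $\delta$ is chosen small. So you have identified a genuine gap in the paper's argument.

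However, the repair you defer to the end does not go through as stated. Writing $c=\sqrt{N}/\sigma_{q^{j}}$ and maximizing $\tilde g(\sigma)=\bigl(c\sqrt{k_{scale}^j-\sigma^2}-T_j\bigr)/\sigma$ over $\sigma\in[B,\sqrt{k_{scale}^j}\,]$, the derivative has the sign of $T_j\sqrt{k_{scale}^j-\sigma^2}-c\,k_{scale}^j$, so $\tilde g$ is nonincreasing on the whole interval --- maximizer at $\sigma=B$, as you claim --- if and only if $T_j\le k_{scale}^j/\delta$. For $T_j>k_{scale}^j/\delta$ the maximizer is interior and strictly exceeds $(A-T_j)/B$, and this is not an artifact of loose bounding: once $T_j>\sqrt{k_{scale}^j}\bigl(\sqrt{k_{scale}^j}+B\bigr)/\delta$, the theorem itself is false, because for a decaying kernel (e.g.\ RBF) a point receding to infinity has posterior tending to the prior $\mathcal{N}(0,k_{scale}^j)$, hence safety probability tending to $\Phi\bigl(-T_j/\sqrt{k_{scale}^j}\bigr)>\Phi\bigl((A-T_j)/B\bigr)$. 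Concretely, $N=1$, $k_{scale}^j=1$, $\sigma_{q^{j}}^2=0.01$, $\delta=0.05$, $z_1=1$, $T_j=40$ gives a claimed bound of $\Phi(-40.4)$, while all sufficiently distant points have probability close to $\Phi(-40)$. The honest conclusion of your line of attack is therefore a corrected statement carrying the extra hypothesis $T_j\le k_{scale}^j/\delta$: under it, your coupling-plus-endpoint argument does close the negative-numerator case that the paper mishandles; beyond it your crux step fails, and for $T_j$ large enough no proof can exist. Since that hypothesis is vacuous in every practically relevant regime (normalized data, thresholds of order one, $\delta$ small), your approach is the right fix --- but as written, both your proof and the paper's stop short of a complete argument.
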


%

To prove this theorem, we need~\cref{eqn1-GP_posterior} to compute the safety likelihood, and then we use the eigenvalues of the kernel Gram matrix together with~\cref{property1-kernel_convergence} to derive the final bound (see~\cref{appendix-gp_no_jump} for the detailed proof).

Our theorem provides the maximum safety probability of a point as a function of its distance to the observed data in $\mathcal{X}$.
The safe set tolerance parameter $\beta$ or $\alpha$~\crefp{eqn3-safe_set} can be used to compute the covariance bound $\delta$.
For example, when $J=1$ and $\beta^{1/2}=2$, which means $p(q(\bm{x}) \geq T) \geq \Phi(2)$ is safe~\crefp{eqn3-safe_set}, we choose a $\delta$ such that $\Phi\left( \frac{N\delta/\sigma_{q}^2 - T}{\sqrt{k_{scale}-(\sqrt{N}\delta/\sigma_{q})^2}} \right) \leq \Phi(2)$ ($j$ omitted when $J=1$).
Such a $\delta$ exists in all situation of our interest, as we will soon discuss.
Given a $\delta$, we can then determine a corresponding radius $r$ (see e.g.~\cref{appendix-r_delta_relation_for_common_kernels}).
Interpreting $r$ as the radius around the observed data, the safety confidence outside this region always remains low.
Since safety confidence decides the explorable regions~\crefp{eqn3-safe_set,eqn4-acquisition_optimization}, this theorem measures an upper bound of explorable safe area.
The upper bound is given for one safety constraint, and we can see from~\cref{eqn3-safe_set} that the final bound of safety confidence is the product of the $\Phi$ term over different $j$.
In other words, the more safety constraints, the smaller the explorable regions may be, which is intuitive.

Notice that $\| (z_1^j, ..., z_N^j) \| \leq \sqrt{N}$ ($\forall j=1,...,J$) is not very restrictive because an unit-variance dataset has $\| (z_1^j, ..., z_N^j) \| = \sqrt{N}$.
Note further that $\delta \in \left(0, \sqrt{k_{scale}^j} \sigma_{q^{j}} / \sqrt{N}\right)$ implies $k_{scale}^j > (\sqrt{N}\delta/\sigma_{q^{j}})^2$, which means the bound
$\Phi\left( \frac{N\delta/\sigma_{q^{j}}^2 - T_j}{\sqrt{k_{scale}^j-(\sqrt{N}\delta/\sigma_{q^{j}})^2}} \right)$
is always valid.

This theorem indicates that a standard GP with commonly used kernels explores only neighboring regions of the initial $\bm{x}_{1:N}$.
The theorem is independent of the acquisition functions, and thus the local exploration problems present in all safe learning methods based on standard GPs.

\paragraph{Existence of $\delta$ for common safe learning situations:}

We would like to illustrate an example of using our theorem to compute an explorable bound.
Before that, we will make a statement relating the safety level $\beta$ to the quantity $\delta$ used in Theorem \ref{thm-BoundSafety}. This shows that a $\delta$ and, therefore, local exploration is present in all but some (at least) uncommon scenarios, which are in fact out of interest for the sake of safe exploration.


\begin{restatable}[Existence of $\delta$]{corollary}{betaOfT}
\label{thm-beta_of_threshold}
We are given the assumptions in~\cref{thm-BoundSafety}.
For each $j=1,...,J$, if either
(1) $T_j \geq 0, \beta^{1/2} > 0$ or
(2) $T_j < 0, \beta^{1/2} > \frac{|T_j|}{ \sqrt{k_{scale}^j} }$,
then $\exists \delta \in (0, \sqrt{k_{scale}^j} \sigma_{q^{j}} / \sqrt{N})$ s.t. $\Phi\left( \frac{N\delta/\sigma_{q^{j}}^2 - T_j}{\sqrt{k_{scale}^j-(\sqrt{N}\delta/\sigma_{q^{j}})^2}} \right) \leq \Phi(\beta^{1/2})$.
\end{restatable}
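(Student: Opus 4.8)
The plan is to exploit the strict monotonicity of the Gaussian CDF $\Phi$. Since $\Phi$ is increasing, the claimed inequality $\Phi(\cdot) \le \Phi(\beta^{1/2})$ is equivalent to exhibiting some $\delta$ in the open interval $\left(0, \sqrt{k_{scale}^j}\,\sigma_{q^{j}}/\sqrt{N}\right)$ for which $h_j(\delta) \le \beta^{1/2}$, where I abbreviate $h_j(\delta) \coloneqq \left( N\delta/\sigma_{q^{j}}^2 - T_j \right) \big/ \sqrt{k_{scale}^j - (\sqrt{N}\delta/\sigma_{q^{j}})^2}$. This reduces the corollary, for each fixed $j$, to a one-dimensional analysis of the scalar map $\delta \mapsto h_j(\delta)$ on that interval, and I may treat each constraint $j$ separately since the hypotheses are stated per $j$.

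First I would record that $h_j$ is continuous on $\left(0, \sqrt{k_{scale}^j}\,\sigma_{q^{j}}/\sqrt{N}\right)$: its numerator is affine in $\delta$, and throughout the open interval the radicand $k_{scale}^j - (\sqrt{N}\delta/\sigma_{q^{j}})^2$ is strictly positive---this positivity is precisely the interval constraint that makes the bound in \cref{thm-BoundSafety} well-defined---so the denominator is positive and continuous. Next I would compute the one-sided limit as $\delta \to 0^+$: the numerator tends to $-T_j$ and the denominator to $\sqrt{k_{scale}^j}$, so $\lim_{\delta \to 0^+} h_j(\delta) = -T_j/\sqrt{k_{scale}^j}$.

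The core of the argument is then a case split on the sign of $T_j$ matching the two hypotheses. In case (1), $T_j \ge 0$ forces this limit to satisfy $-T_j/\sqrt{k_{scale}^j} \le 0$, which lies strictly below $\beta^{1/2}$ exactly because $\beta^{1/2} > 0$. In case (2), $T_j < 0$ gives the limit $|T_j|/\sqrt{k_{scale}^j}$, and the hypothesis $\beta^{1/2} > |T_j|/\sqrt{k_{scale}^j}$ is what places this limit strictly below $\beta^{1/2}$. In both cases $\lim_{\delta \to 0^+} h_j(\delta) < \beta^{1/2}$, so by continuity there is a right-neighborhood $(0,\varepsilon)$ of $0$ on which $h_j(\delta) < \beta^{1/2}$; selecting any $\delta \in \left(0, \min\{\varepsilon, \sqrt{k_{scale}^j}\,\sigma_{q^{j}}/\sqrt{N}\}\right)$ then yields the required $\delta$, and applying $\Phi$ to both sides recovers the stated bound.

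I do not expect a genuinely hard step, as the conclusion follows from continuity and a single limit computation; the work is mostly careful bookkeeping. The two points demanding attention are confirming that the chosen $\delta$ indeed lies in the open interval so that \cref{thm-BoundSafety} applies and the radical is real, and making the role of each hypothesis explicit---in particular that the strict inequality $\beta^{1/2} > 0$ in case (1) is exactly what excludes the degenerate boundary $T_j = 0$, where the limit equals $0$. As an optional sharpening, in the regime $T_j \le 0$ one can verify by differentiation that $h_j$ is monotonically increasing, so its infimum is attained in the $\delta \to 0^+$ limit; this makes transparent that the threshold condition on $\beta^{1/2}$ in case (2) is essentially tight.
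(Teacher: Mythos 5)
Your proof is correct and takes essentially the same approach as the paper's: both compute the one-sided limit $\lim_{\delta\to 0^+}$ of the bound expression (equal to $-T_j/\sqrt{k_{scale}^j}$), check that it lies strictly below $\beta^{1/2}$ under either hypothesis, and conclude by continuity together with the monotonicity of $\Phi$. The only cosmetic difference is that the paper handles case (1) by first discarding the $-T_j$ term and taking the limit of the reduced expression, whereas you treat both cases uniformly through a single limit computation.
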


This corollary can be proved by inspecting the boundary of each constant (detailed in~\cref{appendix-gp_no_jump}).

The key insight is that, a $\delta$ in~\cref{thm-BoundSafety}, which bounds the safety likelihood, always exists for common selection of safety level $\beta^{1/2}$.
There are two scenarios considered in~\cref{thm-beta_of_threshold}.
The first scenario, $T_j \geq 0, \beta^{1/2}>0$ is common because $\beta^{1/2} > 0$ is always desired for safe exploration and stricter safety thresholds $T_1 \geq 0,...,T_J \geq 0$ may also occur.
In the second scenario, the thresholds are softer, i.e. one or more of the thresholds $T_1,...,T_J$ are smaller than zero.
It turns out that $\beta^{1/2} > |T_j| / \sqrt{k_{scale}^j}, j=1,...,J $ is desired as well for safe learning.
Consider $|T_j| / \sqrt{k_{scale}^j} \geq \beta^{\frac{1}{2}} > 0$ for a $j\in\{1,...,J\}$, which is the scenario not fulfilling the condition.
We focus on normalized variable $z_j$, where the underlying function is modeled by a GP $q^{j} \sim \mathcal{GP}(0, k_{q^{j}})$.
When this model extrapolates in regions where data are absent, the inference is highly based on the prior $q^{j}(\bm{x}) \sim \mathcal{N}(0, k_{scale}^{j})$.
The safe set considers $p(q^{j}(\bm{x})\geq T_j) \geq \Phi(\beta^{1/2})$ as a safety condition on the $j$-th constraint, but the prior indicates $\Phi(-T_j / \sqrt{k_{scale}^{j}} ) \geq \Phi(\beta^{1/2})$ which becomes a trivial condition when $|T_j| / \sqrt{k_{scale}^j} \geq \beta^{\frac{1}{2}} > 0$.
In other words, any input $\bm{x}$ has a safe prior unless the data disagree.
This is a scenario that is not of interest for safe learning.

Therefore, for all common selection of safety level $\beta^{1/2}$,~\cref{thm-beta_of_threshold} implies that we can find a $\delta$ and apply~\cref{thm-BoundSafety} to quantify the upper bound of explorable safe set, which shows the presence of local exploration.

\begin{figure}[t]
\vskip 0.2in
\begin{center}
\centerline{\includegraphics[width=\columnwidth]{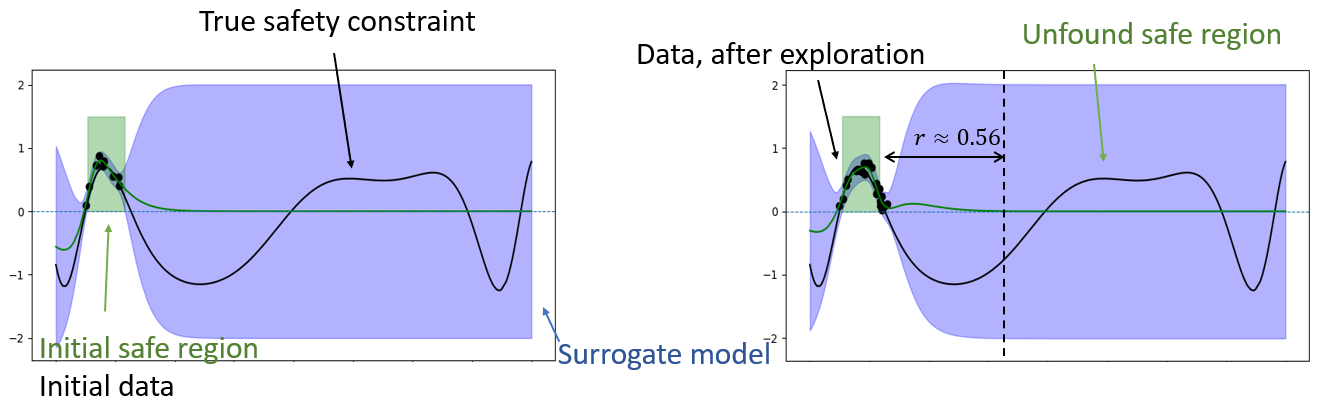}}
\caption{
A safety function (shown in black) with two safe regions above threshold zero. 
Left graphics: 
Based on the initial data within one of the safe regions, a GP surrogate is trained. The blue line represents the mean prediction, while the blue shaded area indicates the uncertainty (e.g., confidence interval) around the mean. 
The green area indicates the learned safe area.
Right graphics: 
After exploration, more points are sampled within the first safe region. However, the gap to the second safe region exceeds 
$r$, preventing the discovery of the second region, rendering the learned safe area almost unchanged.
The true safety function used here is $q(x)= \sin\left( 10 x^3 - 5x - 10 \right) + \frac{1}{3} x^2 - \frac{1}{2}$. The observations are with noise drawn from $\mathcal{N}(0, 0.1^2)$.}
\label{figure2-local_safe_exploration}
\end{center}
\vskip -0.2in
\end{figure}

\paragraph{Illustrating the theoretical result:} In the following, we plug exact numbers into~\cref{thm-BoundSafety} for an illustration.
	
\begin{example}\label{example-explorable_bound}
We consider a one-dimensional toy dataset visualized in~\cref{figure2-local_safe_exploration}.
Assume $N=10$, $\sigma_{q}^2 = 0.01$ and constraint $T=0$.
We omit safety constraint index $j$, since $J=1$ in this case.
In this example, the generated data have $\|z_{1:N}\| \leq \sqrt{10}$.
$\sigma_{q} / \sqrt{N}$ is roughly $0.0316$.
We train an unit-variance ($k_{scale}=max \ k_{q}(\cdot, \cdot)=1$, theorem requires $\delta < 0.0316$) Mat{\'e}rn-5/2 kernel on this example, resulting in a learned lengthscale of around $0.1256$.
The Mat{\'e}rn-5/2 kernel is a kernel with correlation weakened by distance~\crefp{property1-kernel_convergence}.
In particular, $r=4.485*0.1256=0.563316 \Rightarrow \delta \leq 0.002$~\crefp{appendix-r_delta_relation_for_common_kernels}, noticing that $\delta = 0.002 \Rightarrow \Phi\left( \frac{N\delta/\sigma_{q}^2 - T}{\sqrt{1-(\sqrt{N}\delta/\sigma_{q})^2}} \right) \approx \Phi(2)$.
Consider $\beta^{1/2}=2$, then it is safe only when the safety likelihood is above $\Phi(2)$.
We can thus know from~\cref{thm-BoundSafety} that safe regions that are $0.563316$ further from the observed ones are always identified as unsafe and is not explorable.
In~\cref{figure2-local_safe_exploration}, the two safe regions are more than $0.7$ distant from each other, indicating that the right safe region is never explored by conventional safe learning methods.
\end{example}

\paragraph{GP might even explore less in practice:}
Our probability bound $\Phi\left( \frac{N\delta/\sigma_{q^{j}}^2 - T_j}{\sqrt{k_{scale}^j-(\sqrt{N}\delta/\sigma_{q^{j}})^2}} \right)$ (for each $j=1,...,J$) is the worst case obtained with very mild assumptions.
	Empirically, the explorable regions found by GP models are smaller (see~\cref{figure2-local_safe_exploration}).

\paragraph{Transfer learning may overcome the local exploration:}
We extended the~\cref{example-explorable_bound} to compare the standard GP model against a transfer task GP which will be introduced in the next~\cref{section-our_transfer_method}.
In~\cref{figure3-gp_local_vs_lmc_global}, a linear model of corregionalization is trained (a kind of multitask GP,~\cite{alvarez2012kernels}).
On the right region, which is beyond the explorable bound of a standard single-task GP, the transfer task GP incorporates the source data allowing to build high safety confidence.
As a result, the right region can be included into the explorable safe set (detailed in the following~\cref{section-our_transfer_method}).
We will also confirm in our experiments in Section~\ref{section-experiments} that guidance from source data enables our new safe transfer AL framework to explore beyond the immediate neighborhood of the target points $\bm{x}_{1:N}$.

To summarize, we see in this section that the safe set of standard GPs~\crefp{eqn3-safe_set} is limited to a local region.
In the next section, we transfer knowledge from the source data to expand the exploration beyond the seed dataset of the target task.

\begin{figure}[t]
\vskip 0.2in
\begin{center}
\centerline{\includegraphics[width=\columnwidth]{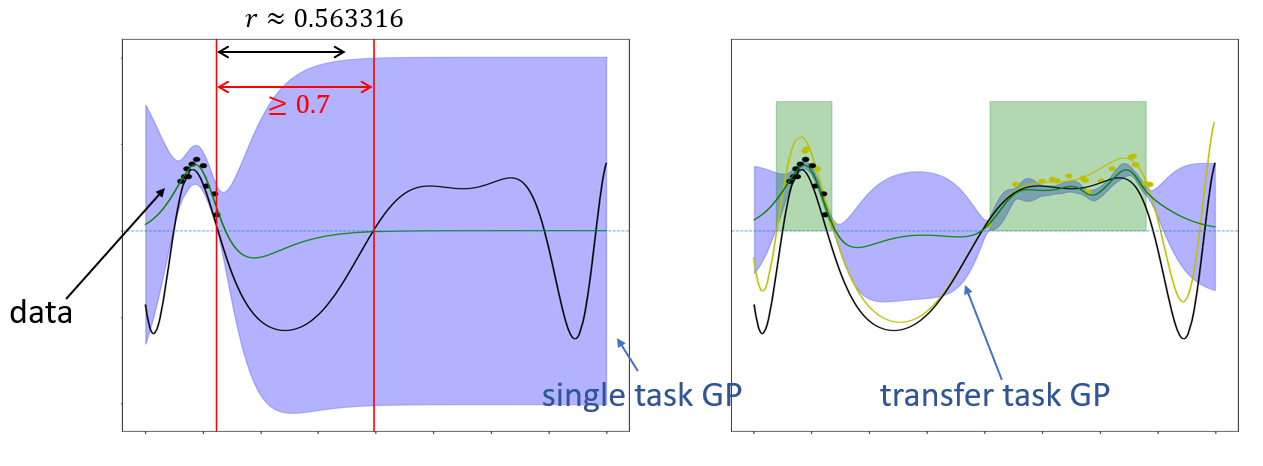}}
\caption{
The same safety constraint as in~\cref{figure2-local_safe_exploration} with two safe regions. Left: the single-task GP cannot reach the right safe region as the distance is greater than the radius $r$. Right: The multitask GP is able to exploit knowledge from the source data and build high safety confidence on the right region.
The source data comes from the function $q_s(x)=\sin\left( 10 x^3 - 5x - 10 \right) + \sin( x^2 ) - \frac{1}{2}$ and is shown in yellow.
}
\label{figure3-gp_local_vs_lmc_global}
\end{center}
\vskip -0.2in
\end{figure}

\section{Safe Transfer Active Learning and Source Pre-Computation}\label{section-our_transfer_method}

In this section, we formulate our safe transfer AL method.
We start from introducing transfer task GPs, then we leverage the source data and the transfer task GPs to adapt~\cref{alg-SL}.
We state the resulting new constrained optimization problem for safe transfer AL.
We then explain the complexity and consider a modular computation to facilitate the algorithm.
We conclude the section by describing our kernel choices for the experiments.

\subsection{Background: Transfer Task GPs}\label{section-our_transfer_method-MOGP}

In the presence of a source task, one can model the source task and the target task jointly with multi-output GPs~\citep{JournelHuijbregts1976miningGeo, alvarez2012kernels, Tighineanuetal22transferGPbo}.
The key idea is to augment the input with a task index variable, allowing the model to distinguish between tasks while sharing information across them.
Leveraging a source task in this way improves data efficiency on the target system, as relevant information can flow from the source to the target task.
To proceed, it is necessary to first make a hypothesis on the source data, similar to~\cref{assump1-data_generation_process} made for standard GPs.

\begin{assumption}[Data: source task]\label{assump3-data_generation_process_source}
The source data are modeled as $
y_{s} = f_s(\bm{x}) + \epsilon_{f_s},
z_{s}^j = q^{j}_{s}(\bm{x}) + \epsilon_{q^{j}_{s}}
$, where
$\{f_s, q^{1}_{s},..., q^{J}_{s}\}$ are unknown source main and safety functions, and
$s$ indexes the source task.
We assume additive noise distributed as
$\epsilon_{f_s} \sim \mathcal{N}\left( 0, \sigma_{f_s}^2 \right)$,
$\epsilon_{q^{j}_{s}} \sim \mathcal{N}\left( 0, \sigma_{q^{j}_{s}}^2 \right)$
with all noise variances
$\{\sigma_{f_s}^2, \sigma_{q_s^{1}}^2,..., \sigma_{q_s^{J}}^2\}$ being positive.
\end{assumption}

Next, we introduce task indices, with $s$ for the source task, and $t$ for the target task.
These indices allow us to describe the source and target functions jointly as multitask functions.
The data are then concatenated with the task indices, and, based on~\cref{assump1-data_generation_process} and~\cref{assump3-data_generation_process_source}, we define the multitask functions $\bm{f},\bm{q}^{1},...,\bm{q}^{J}: \mathcal{X} \times \{ \text{task indices} \} \rightarrow \mathbb{R}$, where $\bm{f}(\cdot, s) = f_s(\cdot)$ corresponds to the source main function, $\bm{f}(\cdot, t) = f(\cdot)$ to the target main function, $\bm{q}^{j}(\cdot, s) = q^{j}_{s}(\cdot)$ to the source safety function and $\bm{q}^{j}(\cdot, t) = q^{j}(\cdot)$ to the target safety function.
We use bold symbols to indicate the multitask functions.
Subsequently, we assign GP priors to the multitask functions. 
\begin{assumption}[Model: multitask]\label{assump3-mogp}
For each multitask function $\bm{g} \in 
\{\bm{f},\bm{q}^1,\ldots,\bm{q}^J\}$, we assume
$ \bm{g} \sim \mathcal{GP}\left( \bm{0}, k_{\bm{g}} \right)$ with 
kernel $ k_{\bm{g}}: (\mathcal{X}\times \{ \text{task index} \}) \times (\mathcal{X}\times \{ \text{task index} \}) \rightarrow \mathbb{R}$.
\end{assumption}

Note that the structure of our new assumption resembles~\cref{assump2-zero_mean_stationary_GP} of a standard GP.
However, the GP is now defined jointly over the source and target task, allowing information to flow between them. 
Example kernels are provided in~\cref{section-our_transfer_method-separable_kernel}.
We proceed by presenting the predictive distribution for the main target task, leveraging source and target data. 
To incorporate task indices into the given input data, we use a hat notation:
We denote the source input, $\bm{x}_{s,1:N_{\text{source}}}$, paired with the source index $s$, with $\hat{\bm{x}}_{s, 1:N_{\text{source}}} \coloneqq \{ (\bm{x}_{s, i}, s) | \bm{x}_{s,i} \in \bm{x}_{s,1:N_{\text{source}}} \}$.
Analogously, the target training and test points, $\bm{x}_{1:N}$ and $\bm{x}_{*}$, paired with the target index $t$,  are represented as $\hat{\bm{x}}_{1:N} \coloneqq \{ (\bm{x}_i, t) | \bm{x}_i \in \bm{x}_{1:N} \}$ and
$\hat{\bm{x}}_{*} \coloneqq (\bm{x}_*, t)$.
The predictive distribution is then given by:
\begin{eqnarray*}
    p\left( \bm{f}(\bm{x}_{*}, t) | \bm{x}_{1:N}, y_{1:N}, \bm{x}_{s, 1:N_{\text{source}}}, y_{s, 1:N_{\text{source}}} \right) = \mathcal{N}\left( \mu_{\bm{f}, N}(\bm{x}_{*}) , \sigma_{\bm{f}, N}^2(\bm{x}_{*}) \right),
\end{eqnarray*}
where the predictive mean $\mu_{\bm{f}, N}(\bm{x}_{*})$ and variance $\sigma_{\bm{f},N}^2(\bm{x}_{*})$ are given by
\begin{align}
\begin{split}
\label{eqn2-TransferGP_posterior}
\mu_{\bm{f}, N}(\bm{x}_{*}) &=
	\bm{v}_{f}^T
	\Omega_{\bm{f}}^{-1}
	\begin{pmatrix}
	y_{s, 1:N_{\text{source}}} \\ y_{1:N}
	\end{pmatrix}, 
\\
\sigma_{\bm{f},N}^2(\bm{x}_{*}) &=
	k_{\bm{f}}\left(
	\hat{\bm{x}}_{*}, \hat{\bm{x}}_{*}
	\right)
	- \bm{v}_{\bm{f}}^T
	\Omega_{\bm{f}}^{-1}
	\bm{v}_{\bm{f}}.
\end{split}
\end{align}

The vector $\bm{v}_{f}$ represents the covariances between the training points, aggregated over the source points $\hat{\bm{x}}_{s, 1:N_{\text{source}}}$
and the target training points $\hat{\bm{x}}_{1:N}$, and the target test point $\hat{\bm{x}}_{*}$.
It is defined as:
\begin{align}
\bm{v}_{f} &=
\begin{pmatrix}
k_{\bm{f}}(\hat{\bm{x}}_{s, 1:N_{\text{source}}}, \hat{\bm{x}}_{*}) \\
k_{\bm{f}}(\hat{\bm{x}}_{1:N}, \hat{\bm{x}}_{*})
\end{pmatrix}
\text{($\in \mathbb{R} ^{(N_{\text{source}}+N) \times 1}$)}.
\end{align}
The matrix $\Omega_{\bm{f}}$ combines the kernel matrices and noise variances for both source and target tasks, forming a block structure:
\begin{align}
\label{eqn-covariance-block}
\Omega_{\bm{f}} &=
\begin{pmatrix}
K_{f_s} + \sigma_{f_s}^2 I_{N_{\text{source}}} &
K_{f_s, f} \\
K_{f_s, f}^T &
K_{f} + \sigma_{f}^2 I_{N}
\end{pmatrix}
\text{($\in \mathbb{R} ^{(N_{\text{source}}+N) \times (N_{\text{source}}+N)}$)},
\end{align}
where $K_{f_s} = k_{\bm{f}}( \hat{\bm{x}}_{s, 1:N_{\text{source}}}, \hat{\bm{x}}_{s, 1:N_{\text{source}}} )$ denotes the kernel matrix between the source data points,
$K_{f_s, f} = k_{\bm{f}}( \hat{\bm{x}}_{s, 1:N_{\text{source}}}, \hat{\bm{x}}_{1:N} )$ between source and target training points, and
$K_{f} = k_{\bm{f}}( \hat{\bm{x}}_{1:N}, \hat{\bm{x}}_{1:N} )$ between target training points.
For brevity, we omitted the task index from the predictive terms $\mu_{\bm{f}, N}(\bm{x}_{*})$ and 
$\sigma_{\bm{f},N}^2(\bm{x}_{*})$, as this paper focuses exclusively on predictions for the target task.
For brevity, we present formulas only for the main function; the safety functions are analogous.


The time complexity of the predictive distribution is $\mathcal{O}\left( (N_{\text{source}} + N)^3 \right)$ due to the inversion of the Gram matrix $\Omega_{\bm{f}}$.
Similarly, the runtime for estimating the likelihood, and consequently for training the hyperparameters, is also  $\mathcal{O}\left( (N_{\text{source}} + N)^3 \right)$.
While this higher time complexity introduces additional computational overhead, it is offset by the benefit of improved data efficiency through the joint modeling of source and target tasks.
Our main paper considers one single source task, while~\cref{appendix-mogp_detail} elaborate the GP formulation of multiple source tasks.

\subsection{Safe AL with Transfer Task GPs}

In comparison to the conventional safe AL~\crefp{alg-SL}, we employ multitask GPs to model the target task jointly with the source data.
As introduced in~\cref{section-our_transfer_method-MOGP}, the multitask functions $\bm{g} \in \{\bm{f}, \bm{q}^{1},...,\bm{q}^{J}\}$ are assumed to be GPs.
At an unobserved point $\bm{x} \in \mathcal{X}$, $
p\left(
\bm{g}(\bm{x}, t) | \mathcal{D}_{N}, \mathcal{D}_{N_{\text{source}}}^{\text{source}}
\right) = \mathcal{N}\left(
\mu_{\bm{g}, N}(\bm{x}), \sigma_{\bm{g},N}^2(\bm{x})
\right)$ ($t$ is the target task index,~\cref{eqn2-TransferGP_posterior}).
The safe set and the acquisition function may then incorporate the source task information:
\begin{align}\label{eqn5-transfer_sal_acq}
\begin{split}
\mathcal{S}_N &= \cap_{j=1}^{J} \{ \bm{x} \in \mathcal{X}_{\text{pool}} | \mu_{\bm{q}^{j}, N}(\bm{x}) - \beta^{1/2} \sigma_{\bm{q}^{j}, N}(\bm{x}) \geq T_j\},\\
a\left( \bm{x} \right) &=
\sum_{\bm{g} \in \{\bm{f}, \bm{q}^{1}, ..., \bm{q}^{J}\} } H_{\bm{g}} \left[ \bm{x}|\mathcal{D}_N, \mathcal{D}_{N_{\text{source}}}^{\text{source}} \right]\\
&= \sum_{\bm{g} \in \{\bm{f}, \bm{q}^{1}, ..., \bm{q}^{J}\} } \frac{1}{2} \log\left( 2 \pi e \sigma^2_{\bm{g}, N}(\bm{x}) \right),\\
\bm{x}_*
&= \text{argmax}_{\bm{x} \in \mathcal{S}_N } a\left( \bm{x} \right).
\end{split}
\end{align}
In contrast to the standard safe AL, $a(\cdot)$ and $\mathcal{S}_{N}$ here depend on the observed target data $\mathcal{D}_N$ and the source data $\mathcal{D}_{N_{\text{source}}}^{\text{source}}$, as they rely on  $\bm{q}$ and $\bm{f}$ (multitask functions in bold symbols),
which represent the multitask GPs based on source and target data.

The whole learning process is summarized in~\cref{alg-fullSTL}.
Its computational complexity is dominated by fitting the GPs (line 2).
Common fitting techniques include Type II ML, Type II MAP and Bayesian treatment~\citep{Snoek_etal12bo, riis2022bayesian} over kernel and noise parameters~\citep{GPbook}. 
All of these approaches have in common that they require computing the marginal likelihoods,
\begin{eqnarray*}
\mathcal{N}\left(
\begin{pmatrix} y_{s, 1:N_{\text{source}}} \\ y_{1:N} \end{pmatrix} | \bm{0}, \Omega_{\bm{f}}
\right) \text{ and } \mathcal{N}\left(
\begin{pmatrix} z_{s, 1:N_{\text{source}}}^j \\ z_{1:N}^j \end{pmatrix} | \bm{0}, \Omega_{\bm{q}^{j}}
\right),
\end{eqnarray*} 
for each safety constraint $j=1,...,J$.
In this work, we do not consider Bayesian treatments due to the high computational cost of Monte Carlo sampling.
Obtaining $\Omega_{\bm{f}}^{-1}$ (and $\Omega_{\bm{q}^{j}}^{-1}, j=1,...,J$) for the marginal likelihood takes $\mathcal{O}\left( (N_{\text{source}} + N)^3 \right)$ time, where $N_{\text{source}}$ can be large in our set-up.
Moreover, the process must be iterated for $N=N_{\text{init}},...,N_{\text{init}}+N_{\text{query}}$ adding to the computational burden.
In the next section, we demonstrate how the computational burden can be significantly reduced by pre-computing the source-specific terms necessary for the matrix inversion.

\begin{algorithm}[b]
\caption{Full safe transfer AL}
\label{alg-fullSTL}
\begin{algorithmic}[1]
\Require \cref{assump1-data_generation_process},~\cref{assump3-data_generation_process_source},~\cref{assump3-mogp}, $\mathcal{D}_{N_{\text{init}}}, \mathcal{D}_{N_{\text{source}}}^{\text{source}}, \mathcal{X}_{\text{pool}}$, $\beta$ or $\alpha$, $N_{\text{query}}$, thresholds $T_1,...,T_J$
\For{$N=N_{\text{init}}, ..., N_{\text{init}} + N_{\text{query}} - 1$}
\State Fit GPs $\bm{f}, \bm{q}_1, ..., \bm{q}_J$ with $\mathcal{D}_{N}, \mathcal{D}_{N_{\text{source}}}^{\text{source}}$
\State $\mathcal{S}_N \gets \cap_{j=1}^{J} \{ \bm{x} \in \mathcal{X}_{\text{pool}} | \mu_{\bm{q}^{j}, N}(\bm{x}) - \beta^{1/2} \sigma_{\bm{q}^{j}, N}(\bm{x}) \geq T_j\}$~\crefp{eqn5-transfer_sal_acq}
\State $\bm{x}_*\gets \text{argmax}_{\bm{x} \in \mathcal{S}_N } \ a(\bm{x} ),
\ a(\bm{x} )=
\sum_{\bm{g} \in \{\bm{f}, \bm{q}^{1}, ..., \bm{q}^{J}\} } H_{\bm{f}} \left[ \bm{x}|\mathcal{D}_N, \mathcal{D}_{N_{\text{source}}}^{\text{source}} \right]$
\State Query $\bm{x}_*$ to get evaluations $y_*$ and $\bm{z}_*$
\State $\mathcal{D}_{N+1} \gets \mathcal{D}_{N} \cup \{ \bm{x}_*, y_*, \bm{z}_*  \}, \mathcal{X}_{\text{pool}} \gets \mathcal{X}_{\text{pool}} \setminus \{ \bm{x}_* \}$
\EndFor
\State \textbf{Return}  $\mathcal{D}_{N_{\text{init}} + N_{\text{query}}}$, trained models $\bm{f}, \bm{q}_1, ..., \bm{q}_J$
\end{algorithmic}
\end{algorithm}

\begin{algorithm}[b]
\caption{Modularized safe transfer AL}\label{alg-modSTL}
\begin{algorithmic}[1]
\Require \cref{assump1-data_generation_process},~\cref{assump3-data_generation_process_source},~\cref{assump3-mogp}, $\mathcal{D}_{N_{\text{init}}}, \mathcal{D}_{N_{\text{source}}}^{\text{source}}, \mathcal{X}_{\text{pool}}$, $\beta$ or $\alpha$, $N_{\text{query}}$, thresholds $T_1,...,T_J$
\State Fit GPs $\bm{f}, \bm{q}_1, ..., \bm{q}_J$ with $\mathcal{D}_{N_{\text{source}}}^{\text{source}}$
\State Fix source specific parameters $\theta_{f_s}, \theta_{q^{j}_{s}}, \sigma_{f_s}, \sigma_{q^{j}_{s}}$, $\forall j=1,...,J$
\State Compute and fix $L_{f_s}, L_{q^{j}_{s}}$, $\forall j=1,...,J$ (line 5, 6, 7 below faster)
\For{$N=N_{\text{init}}, ..., N_{\text{init}}+N_{\text{query}} - 1$}
\State Fit GPs with $\mathcal{D}_{N}$ and $\mathcal{D}_{N_{\text{source}}}^{\text{source}}$ (free parameters $\theta_{f}, \theta_{q^{j}}, \sigma_{f}, \sigma_{q^{j}}$, $\forall j=1,...,J$)
\State $\mathcal{S}_N \gets \cap_{j=1}^{J} \{ \bm{x} \in \mathcal{X}_{\text{pool}} | \mu_{\bm{q}^{j}, N}(\bm{x}) - \beta^{1/2} \sigma_{\bm{q}^{j}, N}(\bm{x}) \geq T_j\}$~\crefp{eqn5-transfer_sal_acq}
\State $\bm{x}_*\gets \text{argmax}_{\bm{x} \in \mathcal{S}_N } \ a(\bm{x} ),
\ a(\bm{x} )=
\sum_{\bm{g} \in \{\bm{f}, \bm{q}^{1}, ..., \bm{q}^{J}\} } H_{\bm{f}} \left[ \bm{x}|\mathcal{D}_N, \mathcal{D}_{N_{\text{source}}}^{\text{source}} \right]$
\State Query $\bm{x}_*$ to get evaluations $y_*$ and $\bm{z}_*$
\State $\mathcal{D}_{N+1} \gets \mathcal{D}_{N} \cup \{ \bm{x}_*, y_*, \bm{z}_*  \}, \mathcal{X}_{\text{pool}} \gets \mathcal{X}_{\text{pool}} \setminus \{ \bm{x}_* \}$
\EndFor
\State \textbf{Return} $\mathcal{D}_{N_{\text{init}} + N_{\text{query}}}$, trained models $\bm{f}, \bm{q}_1, ..., \bm{q}_J$
\end{algorithmic}
\end{algorithm}

\subsection{Source Pre-Computation}
In this section, we propose an efficient algorithm to mitigate the computational burden
 of repeatedly calculating
 $\Omega_{\bm{f}}^{-1}$ and $\Omega_{\bm{q}^{j}}^{-1}$ in full.
For clarity, we describe the approach for
 $\Omega_{\bm{f}}^{-1}$, the same principles apply to $\Omega_{\bm{q}^{j}}^{-1}$ for all $j=1,\ldots, J$.
 
For GP models, the matrix inversion is routinely achieved by performing a Cholesky decomposition $L(\Omega_{\bm{f}})$, which has cubic complexity.
This decomposes $ \Omega_{\bm{f}}$ as $ \Omega_{\bm{f}} = L(\Omega_{\bm{f}})L(\Omega_{\bm{f}})^T$, where $L(\Omega_{\bm{f}})$ is a lower triangular matrix~\citep{GPbook}.
Once the decomposition is obtained, operations such as $L(\Omega_{\bm{f}})^{-1}C$, for any matrix $C$, can be efficiently computed by solving a linear system with minor complexity.
The Cholesky decomposition is well known for its numerically stability and computationally efficiency, making it a widely preferred approach for efficient GP computations.

We propose to perform the Cholesky decomposition in two steps, as described below.
The key idea is to precompute the source-specific terms of the Cholesky decomposition, which account for a large amount of the computational costs.
Importantly, our technique is general and can be applied to any multi-output kernel.
Recall from~\cref{eqn-covariance-block} 
that the covariance $\Omega_{\bm{f}}$ has a block structure, in which the source block $K_{f_s} + \sigma_{f_s}^2 I_{N_{\text{source}}}$ has size $N_{\text{source}} \times N_{\text{source}}$ that dominates the computation.
The Cholesky decomposition can also be expressed as block structure,
\begin{eqnarray*}
L(\Omega_{\bm{f}}) = 
\begin{pmatrix}
L_{f_s} & 0 \\
L_{f_s,f} & L_f \\
\end{pmatrix},
\end{eqnarray*}
where the source block $L_{f_s}$ can be precomputed independently of the remaining covariances~\citep{press1988numerical}.
Once $L_{f_s}$ is obtained, it is then used to compute the cross-term $L_{f_s,f}$ and target block $ L_f$ that are both a function of the source block $L_{f_s}$(details in~\cref{appendix-mogp_detail-modularized_mogp_computation}). 
If the source covariance, $K_{f_s} + \sigma_{f_s}^2 I_{N_{\text{source}}}$, remains unchanged between different covariances $\Omega_{\bm{f}}$, its precomputed Cholesky decomposition $L_{f_s}$ can be reused, significantly reducing computational overhead.


\paragraph{Fixed source parameters for efficient training:}
To leverage the precomputed Cholesky decomposition $L_{f_s}$ during our safe transfer AL scheme, the parameters governing the source covariance, $K_{f_s} + \sigma_{f_s}^2 I_{N_{\text{source}}}$, must remain fixed throughout the algorithm.

To achieve this, we split the kernel parameters $\bm{\theta}_{\bm{f}}$ into two groups, $\bm{\theta}_{\bm{f}} = (\theta_{f_s}, \theta_{f})$, where $\theta_{f_s}$ include all parameters required for computing $K_{f_s}$ and $\theta_f$ contains the remaining parameters.
We first train on the source data $\mathcal{D}_{N_{\text{source}}}^{\text{source}}$ alone, then fix $\theta_{f_s}$ and $\sigma_{f_s}^2$. 
Once these parameters are fixed, the Cholesky decomposition $L_{f_s}$ can be precomputed and reused across all subsequent iterations when acquiring the target dataset $\mathcal{D}_{N}$.
During this phase, we can still update the parameters $\theta_{f}$ and the target noise variance $\sigma_f^2$.

The learning procedure is summarized in~\cref{alg-modSTL}.
In each iteration (line 5 of~\cref{alg-modSTL}), the time complexity reduces from  $\mathcal{O}\left( (N_{\text{source}} + N)^3 \right)$ to $\mathcal{O}(N_{\text{source}}^2 N) + \mathcal{O}(N_{\text{source}} N^2) + \mathcal{O}(N^3)$.
We provide mathematical details in the~\cref{appendix-mogp_detail-modularized_mogp_computation}.
Note that our approach offers a trade-off: it reduces parameter flexibility in exchange for computational efficiency. 
We will discuss the pros and cons of this approach, depending on the kernel choice, in more detail in the following section.


\subsection{Kernel Selection}
\label{section-our_transfer_method-separable_kernel}

Multitask kernels are often defined as a matrix of functions~\citep{JournelHuijbregts1976miningGeo, alvarez2012kernels}, where each element maps $\mathcal{X} \times \mathcal{X} \rightarrow \mathbb{R}$ similar to a standard kernel.
The task indices determine which element of the matrix is used. 
Specifically, for task indices $i, i'\in \{s, t\}$, the kernel can be expressed as
\begin{eqnarray*}
k_{\bm{g}}( (\cdot, i), (\cdot, i')) =
\begin{pmatrix}
k_{\bm{g}}\left( (\cdot, s), (\cdot, s) \right) &
k_{\bm{g}}\left( (\cdot, s), (\cdot, t) \right)\\
k_{\bm{g}}\left( (\cdot, t), (\cdot, s) \right) & k_{\bm{g}}\left( (\cdot, t), (\cdot, t) \right)
\end{pmatrix}_{i, i'},
\end{eqnarray*}
where the dots $\cdot$ are placeholders for input data from $\mathcal{X}$.
Here, each $\bm{g}\in \{ \bm{f}, \bm{q}^{1},...,\bm{q}^{J} \}$ is a multi-output GP correlating source and target tasks for the main and safety functions.

\paragraph{Linear model of coregionalization (LMC):} 
A widely investigated multi-output framework is the linear model of coregionalization (LMC) which we also use in our experiments.
In our setup, the kernel is defined as
\begin{eqnarray*}
k_{\bm{g}}( (\cdot, \{s, t\}), (\cdot, \{s, t\}))=
\sum_{l=1}^{2}
\left(
W_l W_l^T +
\begin{pmatrix}
\kappa_s & 0 \\
0 & \kappa \\
\end{pmatrix}
\right)
\otimes k_{l}(\cdot, \cdot),
\end{eqnarray*}
where $\otimes$ denote the Kronecker product. 
We assume two latent effects and each latent effect is specified by the base kernel
$k_{l}(\cdot, \cdot): \mathcal{X} \times \mathcal{X} \rightarrow \mathbb{R}$.
The parameters  $W_l \in \mathbb{R}^{2 \times 1}, \kappa_s, \kappa >0$ model the task correlations induced by the $l$-th latent pattern encoded by $k_l$~\citep{alvarez2012kernels}.
Here, each $\bm{g}$ has its own kernel, but for brevity, we omit $\bm{g}$ in the parameter subscripts.
Furthermore, if $k_l$ includes a variance scaling term, e.g. Mat{\'e}rn kernels, it is fixed to $1$ because the scale can be absorbed into $W_l, \kappa_s$ and $\kappa$.

This kernel design is tied to our experimental setup and facilitates the transfer of information from the source to the target task.
However, when paired with~\cref{alg-modSTL}, training can become unstable, because the algorithm assumes that the kernel parameters can be cleanly separated between source and task terms. 
In the case of the LMC, this separation is not straightforward:
The latent components 
$W_l$ encode shared task correlations, while 
$\kappa_s$ and $\kappa$ represent task-specific effects. 
Training on source data alone provides insufficient information to disentangle these shared and individual contributions, potentially leading to suboptimal solutions that destabilize the training process.


	
\paragraph{Hierarchical GP (HGP):}In~\cite{poloczek2017MultiInfoSourceOptim, Marco2017sim2realRLviaBO, Tighineanuetal22transferGPbo}, the authors consider a hierarchical GP (HGP) framework, where the kernel is defined as:
\begin{eqnarray*}
k_{\bm{g}}( (\cdot, \{s, t\}), (\cdot, \{s, t\}))=
\begin{pmatrix}
k_s(\cdot, \cdot) & k_s(\cdot, \cdot)\\
k_s(\cdot, \cdot) & k_s(\cdot, \cdot) + k_t(\cdot, \cdot)
\end{pmatrix},
\end{eqnarray*}
with $k_s, k_t: \mathcal{X} \times \mathcal{X} \rightarrow \mathbb{R}$ as base kernels.
HGP is a variant of LMC, where the target task is modeled as the sum of the source kernel $k_s$ and the target-specific residual $k_t$~\citep{Tighineanuetal22transferGPbo}.
This formulation has the benefit that the fitting of terms $k_s$ and $k_t$
can be easily decoupled, making HGP particularly well suited for the use of~\cref{alg-modSTL}. 
	
In~\cite{Tighineanuetal22transferGPbo}, the authors derived an ensembling technique that also supports source pre-computation.
While their approach is equivalent to our method when applied to HGP, our framework generalizes to any multi-output kernel, provided that the fitting of the source and target parameters can be decoupled. 
In contrast, the ensembling technique is explicitly tailored to HGP and does not generalize to other kernel structures.
	
\paragraph{Kernel selection in our experiments:}
In our experiments, we perform~\cref{alg-modSTL} with HGP as our main method, and~\cref{alg-fullSTL} with LMC and HGP as ablation methods.
While our main method is more computationally efficient, LMC offers greater flexibility in model task correlations.
Running HGP with~\cref{alg-fullSTL} and~\cref{alg-modSTL} allows us to study the effect of sequential parameter learning against joint parameter learning, with the latter having an increased runtime. For both HGP and LMC, we construct the kernels using Matérn-5/2 kernels with $D$ lengthscale parameters.
This choice is not restrictive and can be replaced with other base kernels suited to specific applications. 

Although we did not pair~\cref{alg-modSTL} with LMC as discussed above, our modularized computation scheme can still provide benefits in closely related settings, e.g. (i) datasets with multiple sources or (ii) sequential learning frameworks where GPs are refitted only after a batch of query points has been acquired.

\section{Experiments}\label{section-experiments}

\begin{figure*}[t]
	\vskip 0.2in
	\begin{center}
		\centerline{\includegraphics[width=0.95\textwidth]{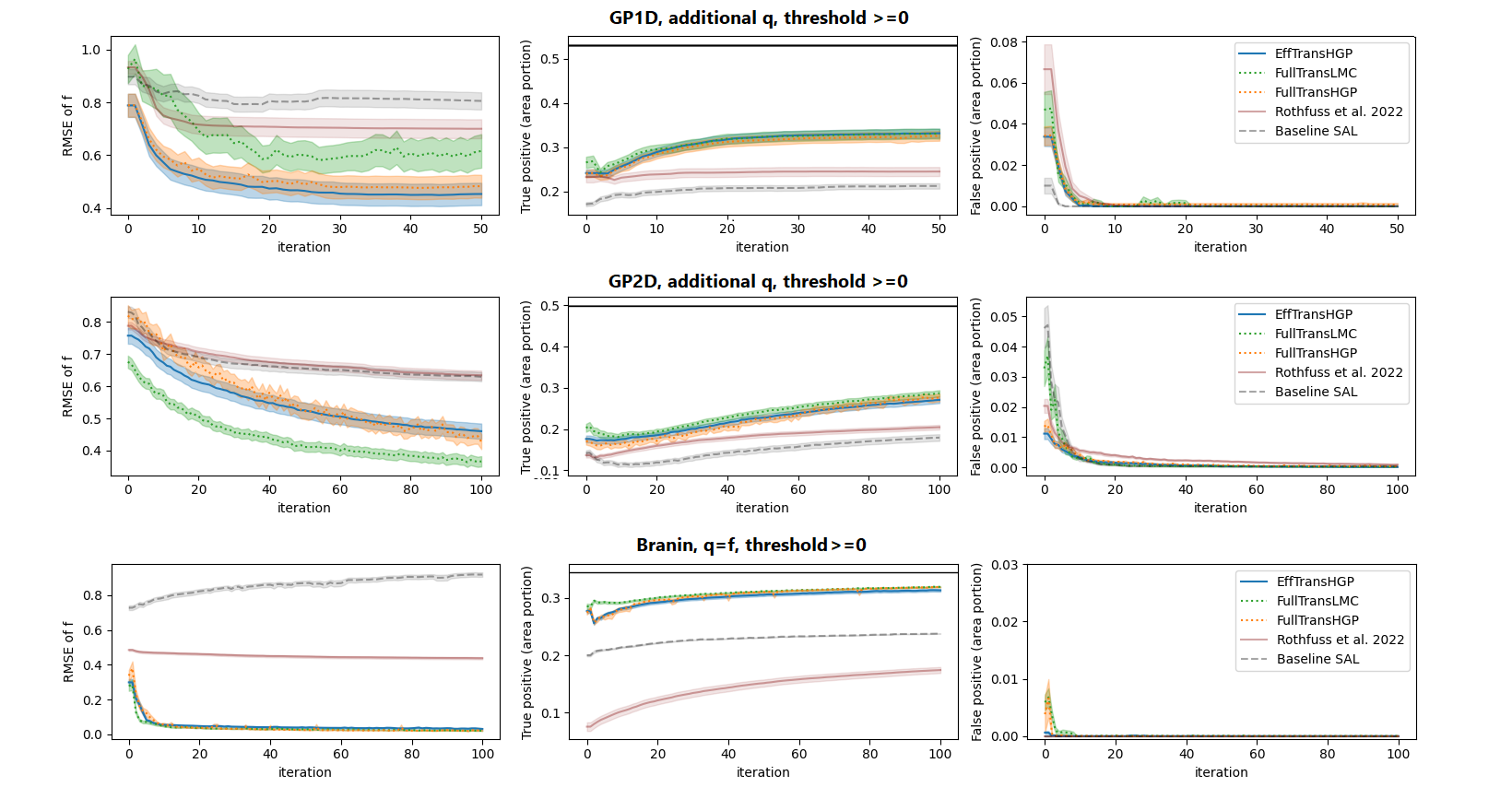}}
		\centerline{\includegraphics[width=0.95\textwidth]{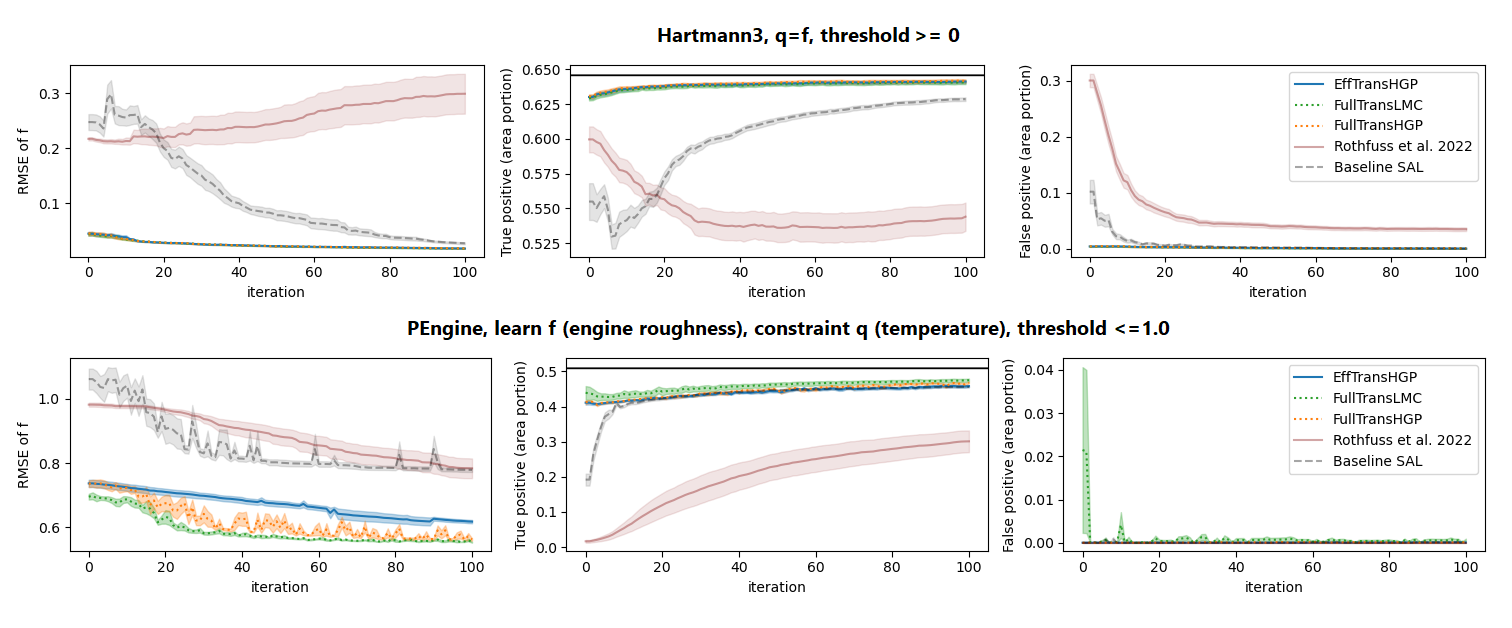}}
		\centerline{\includegraphics[width=0.95\textwidth]{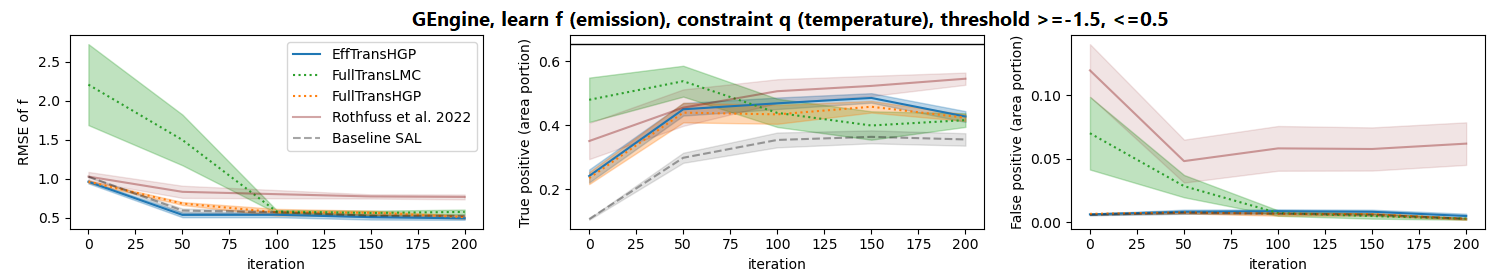}}
		\caption{
        Empirical performance across all six benchmark datasets: RMSE to assess model convergence, TP rate to measure the coverage of the safe space explored, and FP rate to evaluate the safety of each approach. Both TP and FP compute the rates to the area of $\mathcal{X}_{\text{pool}}$. The ground truth safe area portion for each dataset is indicated by a black line in the second column. Our approach generally shows improved convergence in terms of model performance and the extent of explored safe regions, while maintaining safety levels comparable to the baseline SAL. 
        On GEngine, we additionally provide a zoomed-in RMSE figure~\crefp{figure4-gengine_zoomin}.
		}
		\label{main-figure}
	\end{center}
	\vskip -0.2in
\end{figure*}

In this section, we empirically evaluate our approach against state-of-the-art competitors on a range of synthetic and real-world datasets.
We first provide details on the experimental setup in Section~\ref{subsection-experiment-details}. 
Then, we analyze whether our transfer learning scheme is more data-efficient than conventional methods in Section~\ref{subsection-experiments-power}, 
whether it facilitates the learning of disconnected safe regions in Section~\ref{subsection-experiments-disconnected}, 
and how the runtime of our modularized approach compares in Section~\ref{subsection-experiments-runtime}.

Our code is available at \githublink.

\subsection{Experimental Details}\label{subsection-experiment-details}

First, we describe comparison partners and the datasets we use in our experiments.

\subsubsection{Comparison Partners} 
We compare five different methods:
\textbf{1)} EffTransHGP:~\cref{alg-modSTL} with multi-output HGP,
\textbf{2)} FullTransHGP:~\cref{alg-fullSTL} with multi-output HGP,
\textbf{3)} FullTransLMC:~\cref{alg-fullSTL} with multi-output LMC,
\textbf{4)} Rothfuss2022: GP model meta learned with the source data by applying~\cite{rothfuss_meta-learning_2022}, and
\textbf{5)} SAL: the conventional~\cref{alg-SL} with single-output GPs.

The first three methods are our proposed approaches, listed in order of increasing complexity. 
The HGP kernel is a variant of the LMC kernel.
We test two variations of the HGP: one using our modularized implementation~\crefp{alg-modSTL}, with a runtime complexity comparable to the single-task approach, 
and another one using a naive implementation~\crefp{alg-fullSTL} that has a similar runtime complexity as LMC.
For the safety tolerance, we always fix $\beta=4$, which corresponds to $\alpha= 1 - \Phi(\beta^{1/2}) = 0.02275$~\crefp{eqn3-safe_set,eqn5-transfer_sal_acq}, implying that each fitted GP safety model allows $2.275 \%$ unsafe tolerance.
For the baseline following~\cite{rothfuss_meta-learning_2022}, the GP model parameters are meta learned up-front using source data, and remain fixed throughout the experiments.
While the authors of the original paper applied this approach to safe BO problems, we modify the acquisition function to entropy, transforming it into a safe AL method.
All methods use Mat{\'e}rn-5/2 kernels as the base kernels.
%
To be consistent with~\cite{rothfuss_meta-learning_2022}, we consider noisy variables when we compute the safe set (model $z^{j}\geq T_j$ instead of $q^{j} \geq T_j$ for all $j=1,...,J$) in our experiments. We elaborate in~\cref{appendix-safeset_noisefree_vs_noisy} that our theoretical analysis of~\cref{section4-gp_no_jump} extends to this case. 

\begin{table*}[t]
    \centering
    \caption{
    Dataset Summary: For each dataset, we list the input dimension $D$, the size of the source dataset $N_{\text{source}}$, the size of the initial target dataset $N_{\text{init}}$, the number of queries $N_{\text{query}}$, decription, safety threshold and whether the disjoint safe regions can be tracked. Datasets are listed in order of increasing complexity.
    Each task has one safety variable.
    }
    \label{table-datasets}
    \begin{tabular}{l|ccccccc}
        \toprule
        Dataset    & $D$ & $N_{\text{source}}$ & $N_{\text{init}}$ & $N_{\text{query}}$ & Description & Threshold & Disjoint regions  \\
        \midrule
        GP1D       & 1   & 100                 & 10                & 50                 & Synthetic, $\bm{f}\neq\bm{q}$, & $\geq 0$ & Tracked \\
        GP2D       & 2   & 250                 & 20                & 100                & Synthetic, $\bm{f}\neq\bm{q}$, & $\geq 0$ & Tracked \\
        Branin     & 2   & 100                 & 20                & 100                & Synthetic, $\bm{f}=\bm{q}$, & $\geq 0$ & Tracked\\
        Hartmann3  & 3   & 100                 & 20                & 100                & Synthetic, $\bm{f}=\bm{q}$ & $\geq 0$ & Intractable \\
        PEngine    & 2   & 500                 & 20                & 100                & Semi-real-world, $\bm{f}\neq\bm{q}$ & $\leq 1.0$ & Intractable \\
        GEngine    & 13  & 500                 & 20                & 200                & Real-world, $\bm{f}\neq\bm{q}$ & $\geq -1.5$, $\leq 0.5$ & Intractable \\
        \bottomrule
    \end{tabular}
\end{table*}

 \subsubsection{Datasets}

We benchmark our methods on six datasets.
An overview of the datasets is given in Table~\ref{table-datasets}.

\paragraph{Synthetic Datasets:}
We first create two low-dimensional synthetic datasets, GP1D ($D=1$) and GP2D ($D=2$), generating multi-output GP samples following algorithm 1 of ~\cite{KanHenSejSri18}. 
For each dataset, we treat the first output as the source task and the second as the target task. 
Each dataset has a main function $\bm{f}$ and an additional safety function $\bm{q}$.
We generate 10 datasets and repeat each experiment five times for each method on every dataset.
For the Branin dataset, we follow the settings from~\cite{rothfuss_meta-learning_2022, Tighineanuetal22transferGPbo} to produce five datasets and run five repetitions for each method on each dataset. 
Unlike GP1D and GP2D, Branin uses the same function for both main and safety tasks.
In these initial experiments, we simulate multiple datasets but retain only those in which the target task has at least two disjoint safe regions, with each disjoint region also having a safe counterpart in the source dataset. 
This design aligns with our use of the Matérn-5/2 kernel, which measures similarity between data points based on proximity. 
Our fourth synthetic dataset is the Hartmann3 dataset ($D=3$), created using the settings from~\cite{rothfuss_meta-learning_2022, Tighineanuetal22transferGPbo}.
We generate five datasets and repeat experiments on each datasets five times. 
Here, the source and initial target datasets are sampled randomly, unlike the structured, disjoint safe regions in GP1D, GP2D, and Branin. 
All datasets are normalized, and the constraint thresholds are set to zero.
Further details on our synthetic datasets are provided in~\cref{appendix-experiment_details-numerical}.

\paragraph{Semi-Real-World Dataset (PEngine):}
The PEngine dataset consists of two datasets measured on the same engine prototype under varying conditions.
The outputs temperature, roughness, HC, and NOx emissions are recorded.
We perform separate AL experiments to learn roughness~\crefp{main-figure} and temperature (Appendix~\cref{figureS4}), both constrained by a normalized temperature $q$, threshold on noisy observation $z \leq 1.0$, resulting in a safe set covering approximately 52.93\% of the input space.\footnote{
\label{footnote-experiment_safety_notation}
In general, we use the notation 
$\bm{z}=\{z^1, \ldots, z^J\}$ to represent $J$ safety constraints.
However, since all datasets in our experiments involve only a single safety function, we simplify the notation to $z$.
}
The upper bound constraint is equivalent to $-z \geq -1.0$ as described in our~\cref{section-problem_statement}, $-z$ being the negative noisy temperature.
The raw datasets contain four input variables: two free variables and two contextual variables, with the contextual inputs recorded with noise. 
To fix the contextual inputs at constant values, we interpolate these noisy values using a multi-output GP simulator trained on the full datasets. 
This allows us to perform active learning experiments solely on the two-dimensional space of the free variables, creating a semi-simulated environment. 
Further details are provided in~\cref{appendix-experiment_details-numerical}.
Our GitHub repository provides a link to the dataset.

\paragraph{Real-World Dataset (GEngine):}
Our final benchmark is a high-dimensional, real-world problem involving two datasets, each recorded by a related but distinct engine (one as the source and the other as the target task) from~\cite{cyli2022}. 
The original dataset are split into training and test sets, and we conduct AL experiments on the training sets, while RMSE and safe set performance are evaluated on the target test set. 
These datasets are dynamic, and our model applies a history structure by concatenating relevant past points into the inputs, resulting in an input dimension of $D=13$.
The recording include emissions and temperature, and we learn the normalized emission ($f$), subject to normalized temperature $q$, threshold on noisy value $-1.5\leq z \leq 0.5$.\footref{footnote-experiment_safety_notation}
The upper bound on temperature is crucial for safety, while the lower bound increases robustness against outliers.
Overall, the safe region covers approximately $65\%$ of the target dataset.
For the source task, we sample the data under a different constraint of $-2\leq z_s \leq 0.5$ to make the model more resistant to outliers.
More details can be found in~\cref{appendix-experiment_details-numerical}.
Our GitHub repository provides a link to the dataset.

\subsection{Modeling Performance \& Safety Coverage}\label{subsection-experiments-power}
In the following, we study the empirical performance of our algorithms to find out whether our methods can accelerate space exploration and model convergence while maintaining safety.

\paragraph{Metrics:}
We evaluate model convergence of the main function $\bm{f}$ using root mean square error (RMSE) between the GP predictive mean and test $y$ sampled from true safe regions.
To measure the performance of our safety function $\bm{q}$, we use the area of $\mathcal{S}_N$, as this indicates the explorable coverage of the space.
Specifically, we consider the area of $\mathcal{S}_N \cap \mathcal{S}_{\text{true}}$ (true positive or TP area, the larger the better) and $\mathcal{S}_N \cap \left(\mathcal{X}_{\text{pool}} \setminus \mathcal{S}_{\text{true}} \right)$ (false positive or FP area, the smaller the better).
Here, $\mathcal{S}_{\text{true}} \subseteq \mathcal{X}_{\text{pool}}$ denotes the set of true safe candidate inputs, which we can precompute as we use a fixed data pool.
$\mathcal{S}_{\text{true}}$ takes noise-free variables except for GEngine where only noisy data are available.
Area of $\mathcal{X}_{\text{pool}}, \mathcal{S}_N, \mathcal{S}_{\text{true}}$ are all measured by counting the number of points.

\paragraph{Results:}
We report results in~\cref{main-figure}.

\paragraph{Results on GP1D, GP2D, Branin:}
We begin by focusing on the GP1D, GP2D, and Branin datasets, which have been simulated to contain multiple disconnected safe regions. 
On these datasets, only methods capable of jumping between regions can achieve optimal performance. 
In~\cref{main-figure}, we observe that our transfer learning approaches achieve lower RMSE and significantly greater safe set coverage than competing methods, while maintaining small false detection rate of safe area.
These results suggest that our methods can successfully identify and explore disconnected safe regions, while our competitor methods cannot. 
We will conduct an in-depth analysis of this aspect in the next section.
The higher RMSE of our competing methods can be partially attributed to the evaluation approach: test points are sampled from the entire safe area, including regions that competing methods fail to explore.
Additional safe query ratios, provided in Appendix~\cref{table-safety_al}, confirm that our methods maintain high levels of safety.

\paragraph{Results on Hartmann, PEngine:}
In the Hartmann and PEngine experiments, our transfer learning approaches demonstrate superior performance, achieving lower RMSEs and broader safe area coverage with fewer data points than competing methods (see~\cref{main-figure}). 
Since SAL eventually covers the entire safe area by the end of the iterations, we hypothesize that the target task do not contain clearly separated disjoint regions. 
Nonetheless, conventional SAL requires more queries to achieve the same performance, as they lack the efficiency of our approach.

\begin{wrapfigure}{R}{0.4\textwidth}
	\vspace{-10pt}
	\begin{minipage}{\linewidth}
	\centerline{\includegraphics[width=\textwidth]{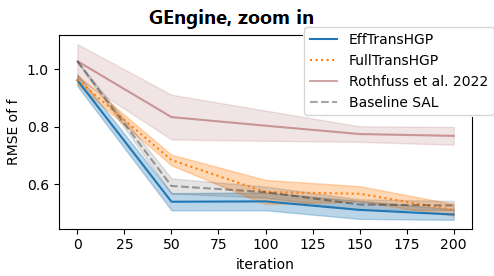}}
	\captionof{figure}{The RMSE zoom-in version of GEngine in~\cref{main-figure}.}
	\label{figure4-gengine_zoomin}
	\end{minipage}
	\vspace{-10pt}
	\end{wrapfigure}

\paragraph{Results on GEngine:}
Our final dataset, GEngine, has a larger input space, resulting in more hyperparameters and making GP fitting more computationally expensive (see also Table~\ref{table-infer_time}). 
Given that each individual query minimally affects the GP hyperparameters, we update them every 50 queries to enhance runtime efficiency and report results at the same interval.
Overall, the HGP-based transfer learning approaches clearly outperform competitors, as they explore the safe set with significantly fewer target task queries while achieving better or equal test error and false safe positive rates. 
Zooming into the RMSE results in~\cref{figure4-gengine_zoomin}, we find that the HGP approaches learns the main function as well as the baseline SAL~. 
Training the LMC model, however, appears to be more challenging;
only after the second training (iteration 100), the RMSE stabilizes and the number of false positives reduces. 
Initially, LMC seems to be overconfident regarding safety conditions, which we think can be attributed to overfitting caused by the larger number of hyperparameters due to the higher input dimension.

In the main experiments, $N_{\text{source}}$ (the number of source data points) is fixed for each dataset.
In our~\cref{appendix-ablation}, we provide ablation studies on the Branin dataset, in which we vary the number of source data points and number of source tasks.

\paragraph{Summary:}
Our approaches generally demonstrate improved convergence in terms of model performance and the extent of explored safe regions, while maintaining safety levels comparable to the baseline SAL. 
The benefits of our methods are most pronounced when multiple unconnected safe regions exist, as our methods are the only one capable of finding them.
Among the three variants of our approach, we observe that LMC struggles when the input space is high-dimensional and data is scarce, potentially due to the larger number of hyperparameters. 
In contrast, the HGP-based methods show consistently strong performance across all experiments.

\subsection{Disconnected Regions}\label{subsection-experiments-disconnected}

\begin{table}[t]
	\caption{
Identified Disjoint Safe Regions: We count the number of safe regions explored by the queries.
The total numbers of queries are listed in~\cref{table-datasets}.
Transfer learning discovers multiple disjoint safe regions while baselines stick to neighborhood of the initial region.
} \label{table-discovered_regions}
	\begin{center}
		\begin{tabular}{r|ccc}
			\toprule
			\textbf{Methods}
			&\textbf{GP1D}
			&\textbf{GP2D}
			&\textbf{Branin}\\
			\hline
			EffTransHGP
			  & $ 1.79 \pm 0.07 $
			  & $ 2.77 \pm 0.13 $
			  & $ 2 \pm 0 $\\
			FullTransHGP
			  & $ 1.78 \pm 0.07 $
			  & $ 3 \pm 0.14213 $
			  & $ 2 \pm 0 $\\
			FullTransLMC
			  & $ 1.78 \pm 0.08 $
			  & $ 2.68 \pm 0.14 $
			  & $ 2 \pm 0 $\\
			Rothfuss2022
			  & $ 1.22 \pm 0.05 $
			  & $ 1.07 \pm 0.03 $
			  & $ 1 \pm 0 $\\
			SAL
			  & $ 1 \pm 0 $
			  & $ 1.29 \pm 0.09 $
			  & $ 1 \pm 0 $\\
			\bottomrule
		\end{tabular}\\
	\end{center}
\end{table}	

Next, we examine in more detail whether the increased safe coverage observed in the previous section can be attributed to our transfer learning approaches effectively jumping between disconnected regions.

We analyse the number of disjoint regions for our synthetic problems with input dimension $D=1$ or $D=2$ (GP1D, GP2D, Brainin).
For these datasets, it is analytically and computationally possible to cluster the disconnected safe regions via connected component labeling (CCL) algorithms~\citep{Heetal2017_ccl}.
Please see~\cref{appendix-experiment_details-ccl} for further discussion of the CCL algorithm and its applicability.
This allows us to track, in each experiment iteration, the specific safe region to which each observation belongs and count the number of disconnected regions (see Appendix~\cref{figureS3_mogp_branin}).
At the end of the AL algorithm, we report the number of explored safe regions in~\cref{table-discovered_regions}.
We say a region is explored if at least one query is in the region.
This is valid because the safe set can expand from the at least one point.
The results confirm the ability of our transfer learning approaches to explore disjoint safe regions, while the baseline methods cannot jump to disconnected regions.
Notably, the Branin function is smooth and has two well-defined safe regions, while the GP data exhibit high stochasticity, leading to a range of small or large safe regions scattered throughout the space.
While limited exploration is expected for the single-task approach SAL, it is surprising that the meta-learning approach Rothfuss2022 also fails to reach disconnected regions.
This could be due to having only a single source task, which is uncommon for meta-learning as it typically involves multiple source tasks to differentiate between common and task-specific effects.

For the remaining datasets (Hartmann3, PEngine and GEngine), we cannot count the number of disconnected regions since the CCL algorithm cannot be applied.
This is due to its limitations in dealing with noisy measurements (PEngine, GEngine) and dimensions greater than $D=2$ (Hartmann, GEngine).

Our findings demonstrate that our transfer learning approaches effectively identify and explore multiple disjoint safe regions when they are present, a capability lacking in competing methods.

\subsection{Runtime Analysis}\label{subsection-experiments-runtime}

\begin{table*}[t]
	\caption{Training Time of $\bm{f}$ and $\bm{q}$ (in seconds) at the last AL training: 
	We observe that runtime increases sequentially from SAL to EffTransHGP, then to FullTransHGP, and finally to FullTransLMC. 
	Rothfuss2022 performs only an initial training upfront which is not included in our runtime estimate, resulting in zero traing time.} \label{table-infer_time}
	\begin{center}
	\begin{tabular}{r|ccccc}\toprule
	\textbf{Datasets}
	&EffTransHGP
	&FullTransHGP
	&FullTransLMC
	&Rothfuss2022
	&SAL\\
	\hline
	\textbf{GP1D}
	& $ 8.947 \pm 0.198 $
	& $ 9.171 \pm 0.133 $
	& $ 26.56 \pm 0.628 $
	& $ 0.0 \pm 0.0 $
	& $ 6.881 \pm 0.083 $\\
	\textbf{GP2D}
	& $ 10.73 \pm 0.190 $
	& $ 39.31 \pm 0.639 $
	& $ 202.8 \pm 12.43 $
	& $ 0.0 \pm 0.0 $
	& $ 8.044 \pm 0.142 $ \\
	\textbf{Branin}
	& $ 3.754 \pm 0.121 $
	& $ 8.129 \pm 0.267 $
	& $ 21.16 \pm 1.207 $
	& $ 0.0 \pm 0.0 $
	& $ 4.691 \pm 0.078 $ \\
	\textbf{Hartmann3}
	& $ 3.662 \pm 0.089 $
	& $ 9.092 \pm 0.467 $
	& $ 34.43 \pm 1.664 $
	& $ 0.0 \pm 0.0 $
	& $ 4.073 \pm 0.083 $ \\
	\textbf{PEngine}
	& $ 9.596 \pm 0.418 $
	& $ 124.99 \pm 5.608 $
	& $ 615.7 \pm 27.99 $
	& $ 0.0 \pm 0.0 $
	& $ 4.686 \pm 0.243 $ \\
	\textbf{GEngine}
	& $ 18.525 \pm 2.508 $
	& $ 503.11 \pm 63.94 $
	& $ 4357.8 \pm 661.4 $
	& $ 0.0 \pm 0.0 $
	& $ 10.485 \pm 0.578 $ \\
	\bottomrule
	\end{tabular}
	\end{center}
	\end{table*}

Finally, we report training times in \cref{table-infer_time}, measured as the time (in seconds)
required to optimize the GP hyperparamters at the final iteration.

We observe that runtime increases sequentially from SAL < EffTransHGP < FullTransHGP < FullTransLMC, which aligns with our theoretical findings in Section~\ref{section-our_transfer_method}.
While both, SAL and EffTransHGP, scale cubically with the number of target points $N$, EffTransHGP takes longer due to the increased number of hyperparameters to optimize.
FullTransHGP and FullTransLMC, in contrast, scale cubically with the combined number of source and target data $N_{\text{source}} + N$, with FullTransLMC requiring additional runtime due to an even larger number of hyperparameters.

The flexibility of our transfer approaches is inversely proportional to the training time.
However, in our experiments, we do not observe a significant advantage of the FullTransLMC approach over HGP, likely due to the increased hyperparameter count in FullTransLMC, which can lead to overfitting issues.
In summary, HGP proves to be the strongest approach, offering high efficiency without compromising on performance.

	\section{Conclusion}\label{section-conclusion}
	
	We propose a safe transfer sequential learning to facilitate real-world experiments.
	We demonstrate its pronounced acceleration of learning, evidenced by faster RMSE reduction and a greater safe set coverage.
	Additionally, our modularized multi-output modeling 1) retains the potential for global GP safe learning and 2) alleviates the cubic complexity from the source data, significantly reducing the runtime. 
	
	\paragraph{Limitations:}
	Our modularized method is in theory compatible with any multi-output kernel, in contrast to the ensemble technique in~\cite{Tighineanuetal22transferGPbo} which is limited to a specific kernel structure.
	However, one limitation of source precomputation is that it requires to fix correct source relevant hyperparameters solely with source data.
    For example, HGP is well-suited due to its separable source-target structure while LMC, which learns joint patterns of tasks, may not correctly optimize with source data only.
    
   While we only explored linear task correlations in this work, more sophisticated multi-output kernels, such as those in~\cite{alvarez2019non}, or the use of more complex base kernels, could support richer multitask correlations. However, investigating these approaches is beyond the scope of this paper (see, e.g.,~\cite{Bitzer2022kernel_search} for kernel selection strategies).
    
When no correlation exists between the source and the target data, two outcomes are possible depending on the kernel design: (i) if the multi-output kernel includes the standard single-task kernel as a special case, performance may revert to that of baseline methods; (ii) if the standard kernel is not included as a special case, the signal may not be effectively modeled, resulting in suboptimal performance.

	\paragraph{Future work:}
 In this paper, we focus on problems of hundreds or up to thousands of data points (source and target data).
 Scaling further to tens of thousands or millions of data points may require approximations, such as sparse GP models~\citep{pmlr-v5-titsias09a,pmlr-v38-hensman15}, which use a limited set of inducing points to represent the original data.
 However, the optimal selection strategy for inducing points for sequential learning approaches is still an open research question~\citep{pmlr-v206-moss23a,pescadorbarrios2024howbigbigenough}.
 For instance, the safety model requires inducing points that effectively represent the safe set, while the inducing points of the acquisition model need to be updated after each query (or batch of queries) to appropriately reflect changes in uncertainty.
 
\section*{Acknowledgements}
This work was supported by Bosch Center for Artificial Intelligence, which provided finacial support, computers and GPU clusters.
The Bosch Group is carbon neutral. Administration, manufacturing and research activities do no longer leave a carbon footprint. This also includes GPU clusters on which the experiments have been performed.

	\bibliography{ref}
	\bibliographystyle{tmlr}
	
\newpage	
 \appendix
\section{Appendix Overview}

\cref{appendix-r_delta_relation_for_common_kernels} lists commonly used kernels and the $r$-$\delta$ relation needed for our theoretical analysis.
\cref{appendix-gp_no_jump} provides the proof of our main theorem.
In~\cref{appendix-mogp_detail}, we demonstrate the math of our source pre-computation technique as well as general transfer task GPs with more than one source tasks.
\cref{appendix-safeset_noisefree_vs_noisy} extends the safe set by considering observation noises in the predictive models.
\cref{appendix-experiment_details} contains the experiment details and \cref{appendix-ablation} the ablation studies, additional plots and tables.

\section{Common Kernels and $r$-$\delta$ Relation}\label{appendix-r_delta_relation_for_common_kernels}
	
Our main theorem use~\cref{property1-kernel_convergence}, which is restated here, to measure the covariance with respect to the distance of data:
 \kernelDeltaR*
Notice that this property is weaker than $k$ being strictly decreasing (see e.g.~\cite{Lederer19GPvar}).
In addition, it does not explicitly force stationarity, while not all stationary kernels have this property, e.g. cosine kernel $k(\bm{x}, \bm{x}')=cos\left( \| \bm{x} - \bm{x}' \|_2 \right)$ does not follow this definition.
	
Here we want to find the exact $r$ for commonly used kernels, given a $\delta$.
The following kernels (denoted by $k(\cdot, \cdot)$) are described in their standard forms.
In the experiments, we often add a lengthscale $l > 0$ and variance $k_{scale} > 0$, i.e. $k_{parameterized}(\bm{x}, \bm{x}')=k_{scale} k(\bm{x}/l, \bm{x}'/l)$ where $k_{scale}$ and $l$ are trainable parameters.
The lengthscale $l$ can also be a vector, where each component is a scaling factor of the corresponding dimension of the data.
	
\paragraph{RBF kernel} \ \\
$k(\bm{x}, \bm{x}')=\exp \left( -\|\bm{x} - \bm{x}'\|^2/2 \right) $:\\
$k(\bm{x}, \bm{x}') \leq \delta \Leftrightarrow \|\bm{x} - \bm{x}'\| \geq \sqrt{ \log \frac{1}{\delta^2} }$.
\begin{flalign*}
\text{E.g. }
	\delta \leq 0.3 &\Leftarrow \|\bm{x} - \bm{x}'\| \geq 1.552&&\\
	\delta \leq 0.1 &\Leftarrow \|\bm{x} - \bm{x}'\| \geq 2.146&&\\
	\delta \leq 0.002 &\Leftarrow \|\bm{x} - \bm{x}'\| \geq 3.526&&\\
	\end{flalign*}
	
\paragraph{Mat{\'e}rn-1/2 kernel} \ \\
$k(\bm{x}, \bm{x}')=\exp \left( -\|\bm{x} - \bm{x}'\| \right)$:
$k(\bm{x}, \bm{x}') \leq \delta \Leftrightarrow \|\bm{x} - \bm{x}'\| \geq \log \frac{1}{\delta} $.
	\begin{flalign*}
	\text{E.g. }
	\delta \leq 0.3 &\Leftarrow \|\bm{x} - \bm{x}'\| \geq 1.204 &&\\
	\delta \leq 0.1 &\Leftarrow \|\bm{x} - \bm{x}'\| \geq 2.303 &&\\
	\delta \leq 0.002 &\Leftarrow \|\bm{x} - \bm{x}'\| \geq 6.217&&\\
	\end{flalign*}
	
\paragraph{Mat{\'e}rn-3/2 kernel}\ \\
$k(\bm{x}, \bm{x}')=\left( 1 + \sqrt{3} \|\bm{x} - \bm{x}'\| \right) \exp \left( -\sqrt{3} \|\bm{x} - \bm{x}'\| \right)$:
	\begin{flalign*}
	\text{E.g. }
	\delta \leq 0.3 &\Leftarrow \|\bm{x} - \bm{x}'\| \geq 1.409 &&\\
	\delta \leq 0.1 &\Leftarrow \|\bm{x} - \bm{x}'\| \geq 2.246&&\\
	\delta \leq 0.002 &\Leftarrow \|\bm{x} - \bm{x}'\| \geq 4.886&&\\
	\end{flalign*}
	
\paragraph{Mat{\'e}rn-5/2 kernel}\ \\
$k(\bm{x}, \bm{x}')=\left( 1 + \sqrt{5} \|\bm{x} - \bm{x}'\| +\frac{5}{3} \|\bm{x} - \bm{x}'\|^2 \right) \exp \left( -\sqrt{5} \|\bm{x} - \bm{x}'\| \right)$:
	\begin{flalign*}
	\text{E.g. }
	\delta \leq 0.3 &\Leftarrow \|\bm{x} - \bm{x}'\| \geq 1.457 &&\\
	\delta \leq 0.1 &\Leftarrow \|\bm{x} - \bm{x}'\| \geq 2.214&&\\
	\delta \leq 0.002 &\Leftarrow \|\bm{x} - \bm{x}'\| \geq 4.485&&\\
\end{flalign*}%

\section{GP Local Exploration - Proof}\label{appendix-gp_no_jump}

In our main script, we provide a bound of the safety probability.
In this section, we provide the proof of this theorem.

We first introduce some necessary theoretical properties in~\cref{appendix-gp_no_jump-additional_lemmas}, and then use the properties to prove~\cref{thm-BoundSafety} and~\cref{thm-beta_of_threshold} in~\cref{appendix-gp_no_jump-proof}.
	
\subsection{Additional Lemmas}\label{appendix-gp_no_jump-additional_lemmas}
	
	\begin{definition}\label{def-kernel_gram}
		Let $k: \mathcal{X} \times \mathcal{X} \rightarrow \mathbb{R}$ be a kernel, $\bm{A} \subseteq \mathcal{X}$ be any dataset of finite number of elements, and let $\sigma$ be any positive real number, denote $\Omega_{k, \bm{A}, \sigma^2} \coloneqq k(\bm{A}, \bm{A}) + \sigma^2 I$.
	\end{definition}
	
	\begin{definition}\label{def-weight_function}
		Given a kernel $k: \mathcal{X} \times \mathcal{X} \rightarrow \mathbb{R}$, dataset $\bm{A} \subseteq \mathcal{X}$, and some positive real number $\sigma$, then for $\bm{x} \in \mathcal{X}$, the $k$-, $\bm{A}$-, and $\sigma^2$-dependent function $\bm{h}(\bm{x}) = k(\bm{A}, \bm{x})^T \Omega_{k, \bm{A}, \sigma^2}^{-1}$ is called a weight function~\citep{Silverman1984equivalentkernel}.
	\end{definition}
	
	\begin{proposition}\label{proposition-Hv_norm}
		$C \in \mathbb{R}^{M \times M}$ is a positive definite matrix and $\bm{b} \in \mathbb{R}^M$ is a vector.
		$\lambda_{max}$ is the maximum eigenvalue of $C$.
		We have $\| C\bm{b} \|_2 \leq \lambda_{max} \| \bm{b} \|_2$.
	\end{proposition}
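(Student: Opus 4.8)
The plan is to invoke the spectral theorem for the symmetric positive definite matrix $C$ and reduce the inequality to a coordinate-wise comparison in the eigenbasis. First I would note that since $C$ is (symmetric) positive definite, it admits an orthonormal eigendecomposition $C = Q \Lambda Q^T$, where $Q \in \mathbb{R}^{M\times M}$ is orthogonal and $\Lambda = \mathrm{diag}(\lambda_1, \ldots, \lambda_M)$ collects the eigenvalues, all of which are real, positive, and bounded above by $\lambda_{max}$.

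Next I would exploit that orthogonal transformations preserve the Euclidean norm. Setting $\bm{w} = Q^T \bm{b}$, we have $\|\bm{w}\|_2 = \|\bm{b}\|_2$ and $C\bm{b} = Q \Lambda \bm{w}$, so that $\|C\bm{b}\|_2 = \|\Lambda \bm{w}\|_2$. The core estimate is then purely diagonal:
\begin{align*}
\|\Lambda \bm{w}\|_2^2 = \sum_{i=1}^{M} \lambda_i^2 w_i^2 \leq \lambda_{max}^2 \sum_{i=1}^{M} w_i^2 = \lambda_{max}^2 \|\bm{w}\|_2^2.
\end{align*}
Taking square roots and substituting $\|\bm{w}\|_2 = \|\bm{b}\|_2$ yields the claim. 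Equivalently, one may phrase this as identifying the induced operator norm: for a symmetric positive definite matrix the $\ell_2$ operator norm equals its largest eigenvalue, so $\|C\bm{b}\|_2 \le \|C\|_{\mathrm{op}}\,\|\bm{b}\|_2 = \lambda_{max}\|\bm{b}\|_2$.

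There is no substantial obstacle here; this is an elementary linear-algebra fact. The only point worth stating explicitly is that positive definiteness guarantees symmetry and hence the applicability of the spectral theorem, which makes the eigenvalues real and allows diagonalization by an orthogonal (norm-preserving) matrix. I expect this proposition to feed into the proof of~\cref{thm-BoundSafety} by controlling the norm of the image of the observation vector under the weight function $\bm{h}(\bm{x})$~\crefp{def-weight_function}, via the extreme eigenvalues of the Gram matrix $\Omega_{k,\bm{A},\sigma^2}$~\crefp{def-kernel_gram}, which in turn are governed by~\cref{property1-kernel_convergence} once the test point is sufficiently far from the data.
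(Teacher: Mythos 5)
Your proof is correct and follows essentially the same route as the paper: both arguments diagonalize $C$ in an orthonormal eigenbasis (your change of variables $\bm{w}=Q^T\bm{b}$ is just the matrix-form restatement of the paper's expansion $\bm{b}=\sum_i b_i\bm{e}_i$) and then bound $\sum_i \lambda_i^2 b_i^2$ by $\lambda_{max}^2\sum_i b_i^2$. The only cosmetic difference is your closing remark identifying $\lambda_{max}$ with the operator norm, which the paper does not state explicitly but which adds nothing beyond the same computation.
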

	\begin{proof}[Proof of~\cref{proposition-Hv_norm}]\ \\
		\label{pf-Hv_norm}
		Because $C$ is positive definite (symmetric), we can find orthonormal eigenvectors $\{ \bm{e}_1, ..., \bm{e}_M \}$ of $C$ that form a basis of $\mathbb{R}^{M}$.
		Let $\lambda_i$ be the eigenvalue corresponding to $\bm{e}_i$, we have $\lambda_i > 0$.
		
		As $\{ \bm{e}_1, ..., \bm{e}_M \}$ is a basis, there exist $b_1, ..., b_M \in \mathbb{R}$ s.t. $\bm{b} = \sum_{i=1}^{M} b_i \bm{e}_i$.
		Since  $\{ \bm{e}_i \}$ is orthonormal, $\| \bm{b} \|_2^2 = \sum_{i}b_i^2$.
		Then 
		\begin{align*}
		\| C\bm{b} \|_2
		= \| \sum_{i=1}^{M} b_i \lambda_i \bm{e}_i \|_2
		&= \sqrt{ \sum_{i=1}^{M} b_i^2 \lambda_i^2 }\\
		&\leq \sqrt{ \sum_{i=1}^{M} b_i^2 \lambda_{max}^2 }
		= \lambda_{max} \sqrt{ \sum_{i=1}^{M} b_i^2 }
		= \lambda_{max} \| \bm{b} \|_2
		\end{align*}.
		
	\end{proof}
	
	\begin{proposition}\label{proposition-BoundNoisyKNNLower}
		$\forall \bm{A} \subseteq \mathcal{X}$, any kernel $k$, and any positive real number $\sigma$, an eigenvalue $\lambda$ of $\Omega_{k, \bm{A}, \sigma^2}$~\crefp{def-kernel_gram} must satisfy $\lambda \geq \sigma^2$.
	\end{proposition}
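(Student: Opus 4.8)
The plan is to reduce the statement to the elementary fact that adding a multiple of the identity to a symmetric matrix simply shifts its spectrum. First I would recall that, by definition, a kernel $k$ is a symmetric positive-semidefinite function, so the Gram matrix $K \coloneqq k(\bm{A}, \bm{A})$ is symmetric and positive semidefinite for any finite $\bm{A} \subseteq \mathcal{X}$. In particular, every eigenvalue $\mu$ of $K$ satisfies $\mu \geq 0$.

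Next I would observe that $\Omega_{k, \bm{A}, \sigma^2} = K + \sigma^2 I$ shares its eigenvectors with $K$: if $\bm{e}$ is an eigenvector of $K$ with $K\bm{e} = \mu \bm{e}$, then
\begin{align*}
\Omega_{k, \bm{A}, \sigma^2}\, \bm{e} = (K + \sigma^2 I)\bm{e} = \mu \bm{e} + \sigma^2 \bm{e} = (\mu + \sigma^2)\, \bm{e}.
\end{align*}
Because $K$ is symmetric, its eigenvectors span the whole space, so this correspondence is exhaustive: the eigenvalues of $\Omega_{k, \bm{A}, \sigma^2}$ are exactly $\{\mu_i + \sigma^2\}$, where $\{\mu_i\}$ are the eigenvalues of $K$. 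Hence any eigenvalue $\lambda$ of $\Omega_{k, \bm{A}, \sigma^2}$ can be written as $\lambda = \mu + \sigma^2$ for some eigenvalue $\mu \geq 0$ of $K$, which immediately gives $\lambda = \mu + \sigma^2 \geq \sigma^2$.

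There is no real obstacle here; the only point requiring care is the justification that $K$ is positive semidefinite, which is nothing more than the defining property of a (valid) kernel and guarantees $\mu \geq 0$. If one prefers to avoid invoking the full eigendecomposition, an equally short alternative is to use the variational (Rayleigh quotient) characterization: for any eigenvector $\bm{e}$ of $\Omega_{k, \bm{A}, \sigma^2}$ with unit norm and eigenvalue $\lambda$, we have $\lambda = \bm{e}^T \Omega_{k, \bm{A}, \sigma^2}\, \bm{e} = \bm{e}^T K \bm{e} + \sigma^2 \geq \sigma^2$, since $\bm{e}^T K \bm{e} \geq 0$ by positive semidefiniteness of $K$. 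Either route is a couple of lines, so I would present whichever is most consistent with the surrounding lemmas in the appendix.
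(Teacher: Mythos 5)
Your proof is correct. It rests on the same two pillars as the paper's proof — positive semidefiniteness of the Gram matrix $K = k(\bm{A},\bm{A})$ and the observation that adding $\sigma^2 I$ lifts the spectrum by $\sigma^2$ — but it formalizes the second step differently. The paper invokes Weyl's inequality for the sum of the two symmetric matrices $K$ and $\sigma^2 I$, obtaining $\lambda \geq \lambda_K + \sigma^2 \geq \sigma^2$ where $\lambda_K$ is the smallest eigenvalue of $K$. You instead note that $\sigma^2 I$ commutes with everything, so $K$ and $K + \sigma^2 I$ share eigenvectors and the spectrum shifts \emph{exactly}: every eigenvalue of $\Omega_{k,\bm{A},\sigma^2}$ equals $\mu + \sigma^2$ for some eigenvalue $\mu \geq 0$ of $K$. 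Your route is more elementary (no external inequality needed) and strictly sharper, since it identifies the full spectrum rather than only bounding it from below; the paper's route is a one-line appeal to a standard named result, at the cost of importing a tool that is far more general than what the degenerate case (one summand a multiple of the identity) requires. Your Rayleigh-quotient variant, $\lambda = \bm{e}^T K \bm{e} + \sigma^2 \geq \sigma^2$ for a unit eigenvector $\bm{e}$, is arguably the cleanest of the three and would serve equally well in the appendix, since nothing downstream (in particular \cref{corollary-BoundGPposterior}) uses more than the lower bound $\lambda \geq \sigma^2$.
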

	\begin{proof}[Proof of~\cref{proposition-BoundNoisyKNNLower}]\ \\
		\label{pf-BoundNoisyKNNLower}
		Let $\bm{K} \coloneqq k(\bm{A}, \bm{A})$.
		We know that
		\begin{enumerate}
			\item $\bm{K}$ is positive semidefinite, so it has only non-negative eigenvalues, denote the minimal one by $\lambda_K$, and
			\item $\sigma^2$ is the only eigenvalue of $\sigma^2 I$.
		\end{enumerate}
		Then Weyl's inequality immediately gives us the result:
		$\lambda
		\geq \lambda_K + \sigma^2
		\geq \sigma^2$.
	\end{proof}
	
	\begin{corollary}\label{corollary-BoundGPposterior}
		We are given $\forall \bm{x}_* \in \mathcal{X}$, $\bm{A} \subseteq \mathcal{X}$, any kernel $k$ with correlation weakened by distance~\crefp{property1-kernel_convergence}, and any positive real number $\sigma$.
		Let $M\coloneqq \text{ number of elements of } \bm{A} $, and let $\bm{B} \in \mathbb{R}^M$ be a vector.
		Then $\forall \delta > 0, \exists r>0$ s.t. when $ \text{min}_{\bm{x}' \in \bm{A} } \| \bm{x}_* - \bm{x}' \| \geq r$, we have
		\begin{enumerate}
			\item $| \bm{h}(\bm{x}_{*}) \bm{B} | \leq \sqrt{M} \delta \| \bm{B} \| / \sigma^2$ (see also~\cref{def-weight_function}),
			
			\item $k(\bm{x}_{*}, \bm{x}_{*}) - k(\bm{A}, \bm{x}_{*})^T \Omega_{k, \bm{A}, \sigma^2}^{-1} k(\bm{A}, \bm{x}_{*})
			\geq k(\bm{x}_{*}, \bm{x}_{*}) - M \delta^2 / \sigma^2$ (see also~\cref{def-kernel_gram}).
		\end{enumerate}
		
	\end{corollary}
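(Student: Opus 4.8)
The plan is to reduce both inequalities to two elementary facts: a componentwise bound on the kernel vector $k(\bm{A}, \bm{x}_*)$ coming from \cref{property1-kernel_convergence}, and a uniform bound on the largest eigenvalue of $\Omega_{k, \bm{A}, \sigma^2}^{-1}$. Write $\Omega \coloneqq \Omega_{k, \bm{A}, \sigma^2}$ and $\bm{k}_* \coloneqq k(\bm{A}, \bm{x}_*) \in \mathbb{R}^M$, so that the weight function of \cref{def-weight_function} is $\bm{h}(\bm{x}_*) = \bm{k}_*^T \Omega^{-1}$. First I would fix the target $\delta > 0$ and invoke \cref{property1-kernel_convergence} to obtain the radius $r > 0$ such that $\|\bm{x}_* - \bm{x}'\| \geq r$ implies $k(\bm{x}_*, \bm{x}') \leq \delta$. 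Under the hypothesis $\min_{\bm{x}' \in \bm{A}}\|\bm{x}_* - \bm{x}'\| \geq r$, every one of the $M$ entries of $\bm{k}_*$ is then controlled by $\delta$, which yields $\|\bm{k}_*\| \leq \sqrt{M}\,\delta$.

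The second ingredient is the operator norm of $\Omega^{-1}$. By \cref{proposition-BoundNoisyKNNLower}, every eigenvalue of $\Omega$ is at least $\sigma^2$, so the largest eigenvalue of the positive-definite matrix $\Omega^{-1}$ is at most $1/\sigma^2$. This is exactly the constant that converts the norm of $\bm{k}_*$ into the $\sigma^2$-denominators appearing in both claims.

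Both statements then follow by combining these two facts with \cref{proposition-Hv_norm}. For the first, apply Cauchy--Schwarz and then \cref{proposition-Hv_norm} to $C = \Omega^{-1}$:
\[
|\bm{h}(\bm{x}_*)\bm{B}| = |\bm{k}_*^T \Omega^{-1}\bm{B}| \leq \|\bm{k}_*\|\,\|\Omega^{-1}\bm{B}\| \leq \|\bm{k}_*\|\,\frac{\|\bm{B}\|}{\sigma^2} \leq \frac{\sqrt{M}\,\delta\,\|\bm{B}\|}{\sigma^2}.
\]
For the second, it suffices to bound the quadratic form $\bm{k}_*^T\Omega^{-1}\bm{k}_*$: again by Cauchy--Schwarz and \cref{proposition-Hv_norm},
\[
\bm{k}_*^T\Omega^{-1}\bm{k}_* \leq \|\bm{k}_*\|\,\|\Omega^{-1}\bm{k}_*\| \leq \frac{\|\bm{k}_*\|^2}{\sigma^2} \leq \frac{M\delta^2}{\sigma^2},
\]
and subtracting this from $k(\bm{x}_*, \bm{x}_*)$ gives the stated lower bound on the predictive variance.

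The only subtle point — and the step I would flag as the main obstacle — is passing from the one-sided kernel bound $k(\bm{x}_*, \bm{x}') \leq \delta$ furnished by \cref{property1-kernel_convergence} to the norm estimate $\|\bm{k}_*\| \leq \sqrt{M}\,\delta$, which really requires the two-sided bound $|k(\bm{x}_*, \bm{x}')| \leq \delta$ on each entry. For the stationary kernels of interest here (RBF, Mat\'ern) the kernel values are nonnegative, so the trivial lower bound $0 \leq k(\bm{x}_*, \bm{x}')$ closes this gap automatically; I would either invoke nonnegativity explicitly or read \cref{property1-kernel_convergence} with absolute values for the kernels under consideration. Everything else is a mechanical chaining of Cauchy--Schwarz with the eigenvalue bounds already established in \cref{proposition-Hv_norm,proposition-BoundNoisyKNNLower}.
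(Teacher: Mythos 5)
Your proof is correct and follows essentially the same route as the paper's: the componentwise kernel bound from \cref{property1-kernel_convergence}, the eigenvalue bound $\lambda_{\min}(\Omega_{k,\bm{A},\sigma^2}) \geq \sigma^2$ from \cref{proposition-BoundNoisyKNNLower}, and Cauchy--Schwarz combined with \cref{proposition-Hv_norm} for both items. The nonnegativity subtlety you flag is legitimate, but the paper resolves it the same way you propose, by asserting that the components of $k(\bm{x}_*, \bm{A})$ lie in $[0, \delta]$, i.e.\ implicitly restricting attention to nonnegative-valued kernels such as RBF and Mat\'ern.
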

	\begin{proof}[Proof of~\cref{corollary-BoundGPposterior}]\ \\
		\label{pf-BoundGPposterior}
		Let $\bm{K} \coloneqq k(\bm{A}, \bm{A})$.
		
		\cref{proposition-BoundNoisyKNNLower} implies that the eigenvalues of $\left( \bm{K} + \sigma^2 I \right)^{-1}$ are bounded by $\frac{1}{\sigma^2}$.
		
		In addition,~\cref{property1-kernel_convergence} gives us $ \text{min}_{\bm{x}' \in \bm{A}} \| \bm{x}_* - \bm{x}' \| \geq r \Rightarrow$ all components of row vector $k(\bm{x}_*, \bm{A})$ are in region $[0, \delta]$.
		\begin{enumerate}
			\item Apply Cauchy-Schwarz inequality (line 1) and~\cref{proposition-Hv_norm} (line 2), we obtain
			\begin{align*}
			| k(\bm{A}, \bm{x}_{*})^T \left( k(\bm{A}, \bm{A}) + \sigma^2 I \right)^{-1} \bm{B} |
			&\leq \| k(\bm{A}, \bm{x}_{*})^T \|
			\| \left( \bm{K} + \sigma^2 I \right)^{-1} \bm{B} \| \\
			&\leq \| k(\bm{A}, \bm{x}_{*}) \| \frac{1}{\sigma^2} \| \bm{B} \|\\
			&\leq  \| (\delta, ..., \delta) \| \frac{1}{\sigma^2} \| \bm{B} \|\\
			&\leq \frac{\sqrt{M} \delta \| \bm{B} \|}{\sigma^2}.
			\end{align*}
			
			\item $\left( \bm{K} + \sigma^2 I \right)^{-1}$ is positive definite Hermititian matrix, so
			\begin{align*}
			k(\bm{A}, \bm{x}_{*})^T \left( \bm{K} + \sigma^2 I \right)^{-1} k(\bm{A}, \bm{x}_{*})
			&\leq \frac{1}{\sigma^2} \| k(\bm{A}, \bm{x}_{*}) \|^2\\
			&\leq \frac{1}{\sigma^2} M \delta^2.
			\end{align*}
			
			Then, we immediately see that
			\begin{align*}
			k(\bm{x}_{*}, \bm{x}_{*}) -
			k(\bm{A}, \bm{x}_{*})^T \left( \bm{K} + \sigma^2 I \right)^{-1} k(\bm{A}, \bm{x}_{*})
			&\geq k(\bm{x}_{*}, \bm{x}_{*}) -  \frac{1}{\sigma^2} \| k(\bm{A}, \bm{x}_{*}) \|^2\\
			&\geq k(\bm{x}_{*}, \bm{x}_{*}) - \frac{1}{\sigma^2} M \delta^2.
			\end{align*}
			
		\end{enumerate}
	\end{proof}
	
	\begin{remark}\label{remark-NormalCDF}
		$\Phi$ is the cumulative density function (CDF) of a standard Gaussian $\mathcal{N}(0,1)$.
        $p(x \leq T)=\Phi(T)$.
	$p(x \leq -T) = \Phi(-T)= 1-\Phi(T) = p(x \geq T)$.
	\end{remark}
	
\subsection{Main Proof}\label{appendix-gp_no_jump-proof}
The theorem is restated again.
\thmGPnoJump*
	
	\begin{proof}\ \\
		From~\cref{eqn1-GP_posterior} in the main script, we know that
		\begin{align*}
		p\left( q^{j}(\bm{x}_*) | \bm{x}_{1:N}, z_{1:N}^j \right)
		&= \mathcal{N}\left( \bm{x}_* | \mu_{q^{j}, N}(\bm{x}_*), \sigma_{q^{j}, N}^2(\bm{x}_*) \right)\\
		\mu_{q^{j}, N}(\bm{x}_*)
		&= k_{q^{j}}(\bm{x}_{1:N}, \bm{x}_*)^T 
		\left( k_{q^{j}}(\bm{x}_{1:N}, \bm{x}_{1:N}) + \sigma_{q^{j}}^2 I_{N} \right)^{-1} z_{1:N}^j \\
		\sigma_{q^{j}, N}^2(\bm{x}_*)
		&= k_{q^{j}}(\bm{x}_*, \bm{x}_*) - k_{q^{j}}(\bm{x}_{1:N}, \bm{x}_*)^T \left( k_{q^{j}}(\bm{x}_{1:N}, \bm{x}_{1:N}) + \sigma_{q^{j}}^2 I_{N} \right)^{-1} k_{q^{j}}(\bm{x}_{1:N}, \bm{x}_*).\\
		\end{align*}
		
		We also know that~\crefp{remark-NormalCDF}
		\begin{align*}
		p\left( (q^{j}(\bm{x}_*) \geq T_j) | \bm{x}_{1:N}, z_{1:N}^j \right)
		&= 1 - \Phi\left( \frac{T_j-\mu_{q^{j}, N}(\bm{x}_*)}{\sigma_{q^{j}, N}(\bm{x}_*)} \right)\\
		&= \Phi\left( \frac{\mu_{q^{j}, N}(\bm{x}_*) - T_j}{\sigma_{q^{j}, N}(\bm{x}_*)} \right).
		\end{align*}
		
		From \cref{corollary-BoundGPposterior}, we get
		$\frac{\mu_{q^{j}, N}(\bm{x}_*) - T_j}{\sigma_{q^{j}, N}(\bm{x}_*)}
		\leq \frac{\sqrt{N}\delta \|z_{1:N}^j\| / \sigma_{q^{j}}^2 - T_j}{ \sqrt{k_{q^{j}}(\bm{x}_*, \bm{x}_*) - N\delta^2/\sigma_{q^{j}}^2} }$.
		This is valid because we assume $\delta < \sqrt{k_{scale}^j} \sigma_{q^{j}} / \sqrt{N}$.
		Then with $\| z_{1:N}^j \| \leq \sqrt{N}$ and the fact that $\Phi$ is an increasing function, we immediately see the result
		\begin{align*}
		p\left( (q^{j}(\bm{x}_*) \geq T_j) | \bm{x}_{1:N}, z_{1:N}^j \right) \leq \Phi\left( \frac{N\delta/\sigma_{q^{j}}^2 - T_j}{\sqrt{k_{scale}^j-(\sqrt{N}\delta/\sigma_{q^{j}})^2}} \right).
		\end{align*} 
	\end{proof}

Then, we would like to prove the~\cref{thm-beta_of_threshold} which is restated here.
\betaOfT*
\begin{proof}
This can be proved by substituting the constants.

Condition (1) $T_j \geq 0, \beta^{1/2}>0$:
\begin{align*}
\frac{N\delta/\sigma_{q^{j}}^2 - T_j}{\sqrt{k_{scale}^j-(\sqrt{N}\delta/\sigma_{q^{j}})^2}}
&\leq
\frac{N\delta/\sigma_{q^{j}}^2}{\sqrt{k_{scale}^j-(\sqrt{N}\delta/\sigma_{q^{j}})^2}},
\end{align*}
$lim_{\delta\rightarrow 0^+} \frac{N\delta/\sigma_{q^{j}}^2}{\sqrt{k_{scale}^j-(\sqrt{N}\delta/\sigma_{q^{j}})^2}} = 0$ guarantees $\exists \delta \in (0, \sqrt{k_{scale}^j} \sigma_{q^{j}} / \sqrt{N})$ s.t. $\frac{N\delta/\sigma_{q^{j}}^2}{\sqrt{k_{scale}^j-(\sqrt{N}\delta/\sigma_{q^{j}})^2}} \leq \beta^{1/2}$, for $\beta^{1/2}>0$.
Then because $\Phi$ is strictly increasing, the same $\delta$ gives $\Phi\left( \frac{N\delta/\sigma_{q^{j}}^2 - T_j}{\sqrt{k_{scale}^j-(\sqrt{N}\delta/\sigma_{q^{j}})^2}} \right) \leq \Phi(\beta^{1/2})$.

Condition (2) $T_j < 0, \beta^{1/2} > \frac{|T_j|}{ \sqrt{k_{scale}^j} }$:
We see here that $lim_{\delta\rightarrow 0^+} \frac{N\delta/\sigma_{q^{j}}^2 - T_j}{\sqrt{k_{scale}^j-(\sqrt{N}\delta/\sigma_{q^{j}})^2}}=\frac{-T_j}{\sqrt{k_{scale}^j}} < \beta^{1/2}$.
Therefore, there must exist $\delta \in (0, \sqrt{k_{scale}^j} \sigma_{q^{j}} / \sqrt{N})$ s.t. $\Phi\left( \frac{N\delta/\sigma_{q^{j}}^2 - T_j}{\sqrt{k_{scale}^j-(\sqrt{N}\delta/\sigma_{q^{j}})^2}} \right) \leq \Phi(\beta^{1/2})$.
\end{proof}
	
\section{Multi-output GPs with Source Pre-Computation}\label{appendix-mogp_detail}

\subsection{Two-steps Cholesky Decomposition}\label{appendix-mogp_detail-modularized_mogp_computation}

Given a multi-output GP $\bm{g} \sim \mathcal{GP}\left( 0, k_{\bm{g}} \right)$, $\bm{g} \in \{ \bm{f}, \bm{q}^{1},...,\bm{q}^{J} \}$, where $k_{\bm{g}}$ is an arbitrary kernel, the main computational challenge is to compute the inverse or Cholesky decomposition of
	\begin{align*}
	\Omega_{\bm{g}} =
	\begin{pmatrix}
	K_{g_s} + \sigma_{g_s}^2 I_{N_{\text{source}}} &
	K_{g_s, g} \\
	K_{g_s, g}^T &
	K_{g} + \sigma_{g}^2 I_{N}
	\end{pmatrix}.
	\end{align*}
	
	Such computation has time complexity $\mathcal{O}\left( (N_{\text{source}} + N)^3 \right)$.
	We wish to avoid this computation repeatedly.
	As in our main script, $k_{\bm{g}}$ is parameterized and we write the parameters as $\bm{\theta}_{\bm{g}} = (\theta_{g_s}, \theta_{g})$, where $k_{\bm{g}}\left( (\cdot, s), (\cdot, s) \right)$ is independent of $\theta_{g}$.
	
	Here we propose to fix $K_{g_s}$ ($\theta_{g_s}$ must be fixed) and $\sigma_{g_s}^2$ and precompute the Cholesky decomposition of the source components, $L_{g_s} = L(K_{g_s} + \sigma_{g_s}^2 I_{N_{\text{source}}})$, then
	\begin{align}\label{eqn5-modularized_cholesky}
	\begin{split}
	L\left( \Omega_{\bm{g}} \right) &=
	\begin{pmatrix}
	L_{g_s}
	& \bm{0} \\
	\left( L_{g_s}^{-1} K_{g_s,g} \right)^{T}
	& L\left( \tilde{K_t} \right)
	\end{pmatrix},\\
	\tilde{K_t}&=
	K_{g} + \sigma_{g}^2 I_{N}
	- \left( L_{g_s}^{-1} K_{g_s, g} \right)^{T}
	L_{g_s}^{-1} K_{g_s, g}.
	\end{split}
	\end{align}
	This is obtained from the definition of Cholesky decomposition, i.e. $\Omega_{\bm{g}} = L\left( \Omega_{\bm{g}} \right) L\left( \Omega_{\bm{g}} \right)^T$, and from the fact that a Cholesky decomposition exists and is unique for any positive definite matrix.
	
	The complexity of computing $L\left( \Omega_{\bm{g}} \right)$ thus becomes $\mathcal{O}(N_{\text{source}}^2N) + \mathcal{O}(N_{\text{source}}N^2) + \mathcal{O}(N^3)$ instead of $\mathcal{O}\left( (N_{\text{source}} + N)^3 \right)$.
	In particular, computing $L_{g_s}^{-1} K_{\bm{g},st}$ is $\mathcal{O}(N_{\text{source}}^2N)$, acquiring matrix product $\hat{K_t}$ is $\mathcal{O}(N_{\text{source}}N^2)$ and Cholesky decomposition $L(\hat{K_t})$ is $\mathcal{O}(N^3)$.
	
	The learning procedure is summarized in~\cref{alg-modSTL} in the main script.
	We prepare a safe learning experiment with $\mathcal{D}_{N_{\text{source}}}^{\text{source}}$ and initial $\mathcal{D}_N$; we fix $\theta_{f_s}, \theta_{q^{j}_{s}}, \sigma_{f_s}, \sigma_{q^{j}_{s}}, j=1,...,J$ to appropriate values, and we precompute $L_{f_s}, L_{q^{j}_{s}}$.
	During the experiment, the fitting and inference of GPs (for data acquisition) are achieved by incorporating~\cref{eqn5-modularized_cholesky} in~\cref{eqn2-TransferGP_posterior} of the main script~\crefp{section-our_transfer_method}. 

\subsection{Transfer Task GPs beyond One Source Tasks}\label{appendix-mogp_detail-mogp_more_source}

We extend~\cref{section-our_transfer_method-MOGP} beyond one single source task.
Let us say we have a total of $P$ source tasks, and the source task index is $s=1,...,P$.
In our main paper, $\mathcal{D}_{N_{\text{source}}}^{\text{source}}$ is the source data with only one task.
Here, $\mathcal{D}_{N_{\text{source}}}^{\text{source}} \coloneqq \cup_{s=1}^{P} \mathcal{D}_{M_{s}}^{s} \subseteq \mathcal{X} \times \mathbb{R} \times \mathbb{R}$, $\mathcal{D}_{M_{s}}^{s}=\{
\bm{x}_{s, 1:M_{s}}, y_{s, 1:M_{s}}, \bm{z}_{s, 1:M_{s}}
\}$ is the dataset of source task indexed by $s$, $M_{s}$ is the number of data of task $s$, and $N_{\text{source}}=\sum_{s}^{P} M_{s}$ is still the number of data of all $P$ source tasks jointly.

We now want to write down the predictive distributions for each $\bm{g}\in \{ \bm{f}, \bm{q}^{1},...,\bm{q}^{J} \}$.
Similar to~\cref{section-our_transfer_method-MOGP}, $\hat{\bm{x}}_{s, 1:M_{s}}=\{(\bm{x}_{s,n}, s)\}_{n=1}^{M_{s}} \subseteq \mathcal{X}\times \{ \text{task indices} \}$ denotes the input data with task index.
The data can be plugged in as how it was in~\cref{section-our_transfer_method-MOGP}, and the predictive distributions have only minor changes.
We write $\bm{f}$ as an example below in~\cref{eqn4s-TransferGP_posterior_multi_source}, while $\bm{q}^{1},...,\bm{q}^{J}$ are analogous.
$\hat{\bm{x}}_* = (\bm{x}_*, t), \bm{x}_* \in \mathcal{X}$ is again a test point and $t$ is the index of target task.
We color the modification compared to single source task~\crefp{eqn2-TransferGP_posterior}.

\begin{align}
\begin{split}\label{eqn4s-TransferGP_posterior_multi_source}
p\left( \bm{f}(\bm{x}_{*}, t) | \mathcal{D}_N, \mathcal{D}_{N_{\text{source}}}^{\text{source}} \right) &= \mathcal{N}\left( \mu_{\bm{f}, N}(\bm{x}_{*}) , \sigma_{\bm{f}, N}^2(\bm{x}_{*}) \right),\\
\mu_{\bm{f}, N}(\bm{x}_{*}) &=
\bm{v}_{f}^T
\Omega_{\bm{f}}^{-1}
\begin{pmatrix}
{\color{blue} y_{1, 1:M_{1}} } \\ {\color{blue} \vdots } \\ {\color{blue} y_{P, 1:M_{P}} } \\ y_{1:N}
\end{pmatrix},
\\
\sigma_{\bm{f},N}^2(\bm{x}_{*}) &=
k_{\bm{f}}\left(
\hat{\bm{x}}_{*}, \hat{\bm{x}}_{*}
\right)
- \bm{v}_{\bm{f}}^T
\Omega_{\bm{f}}^{-1}
\bm{v}_{\bm{f}},\\
\bm{v}_{f} &=
\begin{pmatrix}
{\color{blue} k_{\bm{f}}(\hat{\bm{x}}_{1, 1:M_{1}}, \hat{\bm{x}}_{*}) } \\
{\color{blue} \vdots } \\
{\color{blue} k_{\bm{f}}(\hat{\bm{x}}_{P,1:{M_{P}}}, \hat{\bm{x}}_{*}) }\\
k_{\bm{f}}(\hat{\bm{x}}_{1:N}, \hat{\bm{x}}_{*})
\end{pmatrix},
\\
\Omega_{\bm{f}} &=
(K_{N_{\text{source}}+N})
+\begin{pmatrix}
{\color{blue} \sigma_{f_{1}}^2 I_{M_{1}} } & 0 & & \\
0 & {\color{blue} \ddots } & 0 & \\
& 0 & {\color{blue} \sigma_{f_{P}}^2 I_{M_{P}} } & 0\\
& & 0 & \sigma_{f}^2 I_{N}
\end{pmatrix},
\end{split}
\end{align}
where $[K_{N_{\text{source}}+N}]_{i, j}= k_{\bm{f}}([\hat{\bm{x}}_{\cup}]_i, [\hat{\bm{x}}_{\cup}]_j)$, and $\hat{\bm{x}}_{\cup}$ is a joint expression of source and target data $({\color{blue} \hat{\bm{x}}_{s=1, 1:M_{1}},...,\hat{\bm{x}}_{s=P, 1:M_{P}} }, \hat{\bm{x}}_{1:N})$ placed exactly in this order.
The GP model $\bm{f}$ is governed by the multitask kernel $k_{\bm{f}}$ and noise parameters $\sigma_{f_s}^2, \sigma_{f}^2$, where $\sigma_{f_s}^2$ is a noise variance of source task $s=1,...,P$.
The pre-computation will fix the part of all source tasks (still the top left $N_{\text{source}}$ by $N_{\text{source}}$ block of $\Omega_{\bm{f}}$).

\paragraph{Multitask Kernels:}
Few examples of actual GP models, i.e. actual kernels, are described as the following.
The LMC, linear model of corregionalization, can be taken simply by adding more dimension:
\begin{align*}
k_{\bm{g}}((\cdot, \cdot), (\cdot, \cdot))=
\sum_{l=1}^{P+1}
\left(
\bm{W}_l \bm{W}_l^T + 
\begin{pmatrix}
\kappa_{1} & 0 &  &  \\
0 & \ddots & 0 &  \\
 & 0 & \kappa_{P} & 0 \\
 &  & 0 & \kappa
\end{pmatrix}
\right)
\otimes k_{l}(\cdot, \cdot),
\end{align*}
where $\bm{g}$ is a multitask function but does not matter to the expression here, each $k_{l}: \mathcal{X} \times \mathcal{X} \rightarrow \mathbb{R}$ is a standard kernel such as a Mat{\'e}rn-5/2 kernel encoding the $l$-th latent pattern, $\otimes$ is a Kronecker product, and $\bm{W}_l \in \mathbb{R}^{(P + 1) \times 1}$ and $\kappa_{1},...,\kappa_{P}, \kappa > 0$ are task scale parameters~\citep{alvarez2012kernels}.
$l$ is a numbering index used only here.

The HGP can be extended in two ways, models in~\cite{poloczek2017MultiInfoSourceOptim} or in~\cite{Tighineanuetal22transferGPbo}.
Here we take the model from~\cite{ Tighineanuetal22transferGPbo}:
\begin{align*}
k_{\bm{g}}((\cdot, \cdot), (\cdot, \cdot))=
\sum_{i=0}^{P}
\begin{pmatrix}
\bm{0}^{i \times i} & \bm{0}^{i \times (P + 1 - i)}\\
\bm{0}^{(P + 1 - i) \times i} & \bm{1}^{(P + 1 - i) \times (P + 1 - i)}
\end{pmatrix}
\otimes k_i(\cdot, \cdot),
\end{align*}
where $\bm{0}^{m \times n}$ and $\bm{1}^{m \times n}$ are matrices of shape $m$ by $n$ with all elements being zero and one, respectively, $m, n=0, ..., P$.
$k_i(\cdot, \cdot)$ is a standard kernel such as a Mat{\'e}rn-5/2 kernel, $i$ is a numbering index.

\begin{figure}[h]
\centering
\includegraphics[width=0.3\columnwidth]{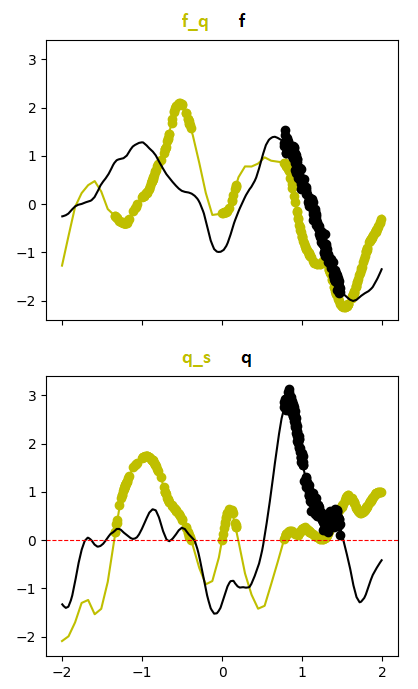}
\caption{
Example simulated GP data of $D=1$, $\bm{f}$ is the function we want to learn (top), under an additional safety function $\bm{q}$ (constraint $\geq 0$, bottom).
The curves are true source (yellow) and target (black) functions.
The dots are safe source data and a pool of initial target ticket (this pool of target data are more than those actually used in the experiments).
}\label{figureS_mogp1Dz}
\end{figure}
\begin{figure}[h]
\begin{center}
\centerline{\includegraphics[width=0.8\columnwidth]{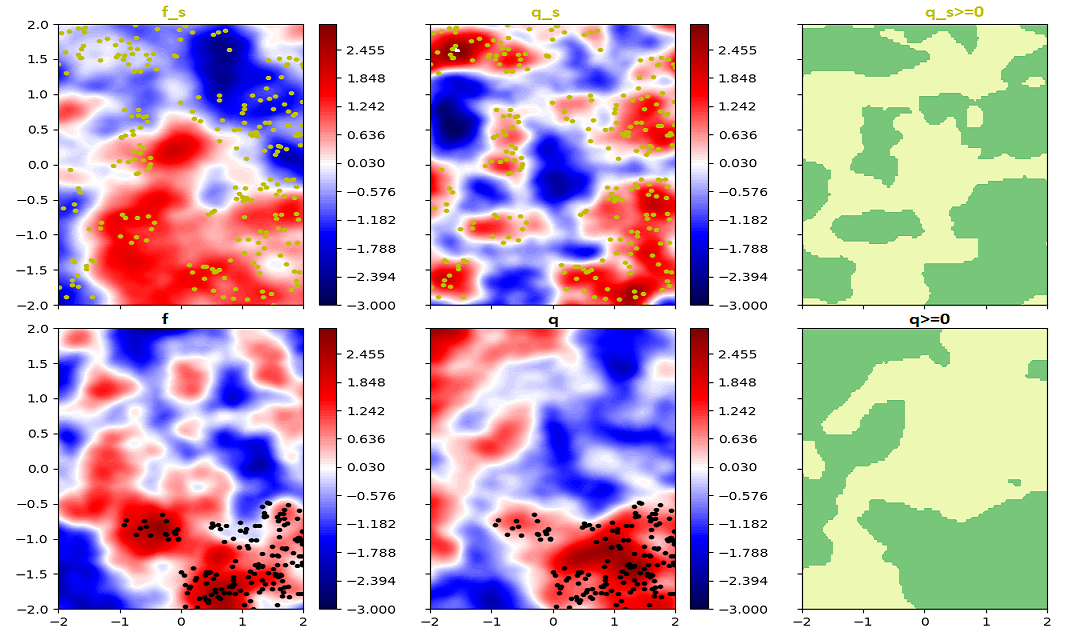}}
\caption{
Example simulated GP data of $D=2$, $\bm{f}$ is the function we want to learn (left), with an additional safety function $\bm{q}$ (middle), and the green is true safe regions $\bm{q} \geq 0$ (right).
The top is source task and the bottom is target task.
The dots are safe source data and a pool of initial target ticket (this pool of target data are more than those actually used in the experiments).
}
\label{figureS_mogp2Dz}
\end{center}
\vskip -0.2in
\end{figure}

\section{Safe Set: Noise-free v.s. Noisy Variables}\label{appendix-safeset_noisefree_vs_noisy}

The safe set can be calculated on noise-free $q^1,...,q^J$ or noisy variables $z^1,...,z^J$. The first is useful when the system's criticality depends on the noise-free value and is common in the literature, e.g.~\cite{berkenkamp2020bayesian}. The second is useful when the criticality depends on the noisy value as e.g. this noisy value triggers an emergency stop. This second scenario of noisy safety values is considered in our main comparison partners work \cite{rothfuss_meta-learning_2022}. Therefore, we consider noisy safety values in our experiments.

The result of local exploration in our theoretical analysis are presented on noise-free variables. This leads to the stronger theoretical statement: if noise is added to the safety variables, their uncertainty becomes larger. Therefore, the safe set becomes smaller making exploration even more local. 


$\forall j=1,...,J$, let us say $z^{j}(\bm{x})$ is the predictive noisy value at $\bm{x}$.
We can model with a single-task GP $p\left( z^{j}(\bm{x}) | q^{j}(\bm{x}) \right) = \mathcal{N}\left( q^{j}(\bm{x}), \sigma_{q^{j}}^2 \right) = \mathcal{N}\left( \mu_{q^{j},N}(\bm{x}), \sigma_{q^{j},N}^2(\bm{x}) + \sigma_{q^{j}}^2 \right)$ or a multitask GP
$p\left( z^{j}(\bm{x}) | \bm{q}^{j}(\bm{x}, t) \right) = \mathcal{N}\left( \mu_{\bm{q}^{j},N}(\bm{x}), \sigma_{\bm{q}^{j},N}^2(\bm{x}) + \sigma_{q^{j}}^2 \right)$ based on the Gaussian noise assumption~\crefp{assump1-data_generation_process} and the fact that Gaussian distributions have additive variances.
Consequently, a safe set computed by~\cref{alg-SL,alg-fullSTL,alg-modSTL} is $\mathcal{S}_N = \cap_{j=1}^{J} \{ \bm{x} \in \mathcal{X}_{\text{pool}} | \mu_{q^{j}, N}(\bm{x}) - \beta^{1/2} \sqrt{ \sigma_{q^{j}, N}^2(\bm{x}) + \sigma_{q^{j}}^2 } \geq T_j\}$ (single task,~\cref{eqn3-safe_set}) or $\mathcal{S}_N = \cap_{j=1}^{J} \{ \bm{x} \in \mathcal{X}_{\text{pool}} | \mu_{\bm{q}^{j}, N}(\bm{x}) - \beta^{1/2} \sqrt{ \sigma_{\bm{q}^{j}, N}^2(\bm{x}) + \sigma_{q^{j}}^2 } \geq T_j\}$ (transfer task,~\cref{eqn5-transfer_sal_acq}).
One can see that the uncertainty is larger and the safe set is thus smaller compared to noise-free modeling.

The theoretical result is not affected because $p\left( (q^{j}(\bm{x}_*) \geq T_j) | \bm{x}_{1:N}, z_{1:N}^j \right)$, the safety likelihood quantified in our~\cref{thm-BoundSafety}, is larger than $p\left( (z^{j}(\bm{x}_*) \geq T_j) | \bm{x}_{1:N}, z_{1:N}^j \right)$, indicating that a noisy safe set is bounded by an even smaller value and, therefore, is clearly also bounded by the previously derived quantity.


\section{Experiment \& Numerical Details}\label{appendix-experiment_details}
\subsection{Labeling Safe Regions}\label{appendix-experiment_details-ccl}
The goal is to label disjoint safe regions, so that we may track the exploration of each land.
We access safety values as binary labels of equidistant grids (as if these are pixels).
This is always possible for synthetic problems.
We then perform connected component labeling (CCL, see~\cite{Heetal2017_ccl}) to the safety classes.
This algorithm will cluster safe pixels into connected lands.
When $D=1$, this labeling is trivial.
When $D=2$, we consider 4-neighbors of each pixel~\citep{Heetal2017_ccl}.
For noise-free ground truth safety values, the CCL is deterministic.
This algorithm can however be computationally intractable on high dimension (number of grids grows exponentially), and can be inacurrate over real data because the observations are noisy and the grid values need interpolation from the measurements.
	
After clustering the safe regions over grids, we identify which safe region each test point $\bm{x}_*$ belongs to by searching the grid nearest to $\bm{x}_*$.
The accuracy can be guaranteed by considering grids denser than the pool.
This is computationally possible only for $D=1, 2$.
See main~\cref{table-discovered_regions} and the queried regions count of~\cref{figureS3_mogp_branin} for the results.

\subsection{Numerical Settup \& Datasets}\label{appendix-experiment_details-numerical}

For our main experiments (\cref{alg-SL},~\cref{alg-fullSTL},~\cref{alg-modSTL}), we set
$N_{\text{init}}$ (number of initial observed target data),
$N_{\text{source}}$ (number of observed source data),
$N_{\text{query}}$ (number of AL queries/ learning iterations) and 
$N_{\text{pool}}$ (size of discretized input space $\mathcal{X}_{\text{pool}}$) 
as follows:
\begin{enumerate}
\item GP1D: $N_{\text{source}}=100$, $N_{\text{init}}=10$, $N_{\text{query}} = 50$, $N_{\text{pool}}=5000$, constraints $q \geq 0$ up to noise;
\item GP2D: $N_{\text{source}}=250$, $N_{\text{init}}=20$, $N_{\text{query}} = 100$, $N_{\text{pool}}=5000$, constraints $q \geq 0$ up to noise;
\item Branin \& Hartmann3: $N_{\text{source}}=100$, $N_{\text{init}}=20$, $N_{\text{query}} = 100$, $N_{\text{pool}}=5000$, $q=f \geq 0$ up to noise;
\item PEngine: $N_{\text{source}}=500$, $N_{\text{init}}=20$, $N_{\text{query}} = 100$, and $N_{\text{pool}}=3000$, constraints $q \leq 1$ up to noise;
\item GEngine: $N_{\text{source}}=500$, $N_{\text{init}}=20$, $N_{\text{query}} = 200$, $N_{\text{pool}}=10000$, $-1.5 \leq q \leq 0.5$ up to noise.
\end{enumerate}

In the following, we describe in details how to prepare each dataset.

\subsubsection{Synthetic Datasets of Tractable Safe Regions}
We first sample source and target test functions and then sample initial observations from the functions.
With GP1D, GP2D and Branin problems, we reject the sampled functions unless all of the following conditions are satisfied:
	(i) the target task has at least two disjoint safe regions,
	(ii) each of these regions has a common safe area shared with the source, and
	(iii) for at least two disjoint target safe regions, each aforementioned shared area is larger than 5\% of the overall space (in total, at least 10\% of the space is safe for both the source and the target tasks).
	
	\paragraph{GP Data:}
	We generate datasets of two outputs.
	The first output is treated as our source task and the second output as the target task.
	
	To generate the multi-output GP datasets, we use GPs with zero mean prior and multi-output kernel
	$
	\sum_{l=1}^{2} W_l W_l^T \otimes k_{l}(\cdot, \cdot),
	$
	where $\otimes$ is the Kronecker product, each $W_l$ is a 2 by 2 matrix and $k_{l}$ is a unit variance Mat{\'e}rn-5/2 kernel~\citep{alvarez2012kernels}.
	All components of $W_l$ are generated in the following way:
	we randomly sample from a uniform distribution over interval $[-1, 1)$,
	and then the matrix is normalized such that each row of $W_l$ has norm $1$.
	Each $k_{l}$ has an unit variance and a vector of lengthscale parameters, consisting of $D$ components.
	For GP1D and GP2D problems, each component of the lengthscale is sampled from a uniform distribution over interval $[0.1, 1)$.
	We adapt algorithm 1 of~\cite{KanHenSejSri18} for GP sampling, detailed as follows:
	\begin{enumerate}
		\item sample input dataset $\bm{X} \in \mathbb{R}^{n \times D}$ within interval $[-2, 2]$, and $n=100^D$.
		\item for $l=1, 2$, compute Gram matrix $K_l = k_{l}(\bm{X}, \bm{X})$.
		\item compute Cholesky decomposition $L_l = L(W_l W_l^T \otimes K_l) = L(W_l W_l^T) \otimes L(K_l)$ (i.e. $W_l W_l^T \otimes K_l = L_l L_l^T$, $L_l \in \mathbb{R}^{2*n \times 2*n}$).
		\item for $l=1, 2$, draw $u_l \sim \mathcal{N}(\bm{0}, I_{2*n})$ ($u_l \in \mathbb{R}^{(2*n) \times 1}$).
		\item obtain noise-free output dataset $\bm{F} = \sum_{l=1}^2 L_l u_l$
		\item reshape $\bm{F} =
		\begin{pmatrix}
		\bm{f}(\bm{X}, s) \\ \bm{f}(\bm{X}, t)
		\end{pmatrix} \in \mathbb{R}^{2*n \times 1}$
		into $\bm{F} =
		\begin{pmatrix}
		\bm{f}(\bm{X}, s) & \bm{f}(\bm{X}, t)
		\end{pmatrix} \in \mathbb{R}^{n \times 2}$.
		\item normalize $\bm{F}$ again s.t. each column has mean $0$ and unit variance.
		\item generate initial observations (more than needed in the experiments, always sampled from the largest safe region shared between the source and the target).
	\end{enumerate}
	
	During the AL experiments, the generated data $\bm{X}$ and $\bm{F}$ are treated as grids.
	We construct an oracle on continuous space $[-2, 2]^D$ by interpolation.
	During the experiments, the training data and test data are blurred with a Gaussian noise of standard deviation $0.01$ $\mathcal{N}\left(0, 0.01^2\right)$.
	
	Once we sample the GP hyperparameters, we sample one main function $\bm{f}$ and an additional safety function from the GP.
	During the experiments, the constraint is set to $z_s, z \geq 0$ ($z_s, z$ are noisy $q_s, q$).
	For each dimension, we generate 10 datasets and repeat the AL experiments 5 times for each dataset.
	We illustrate examples of $\bm{X}$ and $\bm{F}$ in~\cref{figureS_mogp1Dz} and~\cref{figureS_mogp2Dz}.
	
	\paragraph{Branin Data:}
	The Branin function is a function defined over $(x_1, x_2) \in \mathcal{X}=[-5, 10] \times [0, 15]$ as
	\begin{align*}
	f_{a, b, c, r, s, t}\left((x_1, x_2)\right) &= a(x_2 - bx_1^2 + cx_1 -r) + s(1-t) cos(x_1) + s,
	\end{align*}
	where $a, b, c, r, s, t$ are constants.
	It is common to set $(a, b, c, r, s, t)=(1, \frac{5.1}{4 \pi^2}, \frac{5}{\pi}, 6, 10, \frac{1}{8 \pi})$, which is our setting for target task.
	
	We take the numerical setting of~\cite{Tighineanuetal22transferGPbo, rothfuss_meta-learning_2022} to generate five different source datasets (and later repeat $5$ experiments for each dataset):
	\begin{align*}
	a\sim &Uniform(0.5, 1.5),\\
	b\sim &Uniform(0.1, 0.15),\\
	c\sim &Uniform(1.0, 2.0),\\
	r\sim &Uniform(5.0, 7.0),\\
	s\sim &Uniform(8.0, 12.0),\\
	t\sim &Uniform(0.03, 0.05).
	\end{align*}
	
	After obtaining the constants for our experiments, we sample noise free data points and use the samples to normalize our output
	\begin{align*}
	f_{a, b, c, r, s, t}\left((x_1, x_2)\right)_{normalize}
	&=\frac{
		f_{a, b, c, r, s, t}\left((x_1, x_2)\right) - mean(f_{a, b, c, r, s, t})
	}{
		std(f_{a, b, c, r, s, t})
	}.
	\end{align*}
	Then we set safety constraint $y \geq 0$ ($y$ is noisy $f$) and sample initial safe data.
	The sampling noise is Gaussian $\mathcal{N}\left(0, 0.01^2\right)$ during the experiments.
	
\subsubsection{Hartmann3, PEngine, Gengine}

\paragraph{Hartmann3 Data:}
Unlike GP and Branin data, we do not enforce disjoint safe regions, and do not track safe regions during the learning.
The task generation is not restricted to any safe region characteristics.

The Hartmann3 function is a function defined over $\bm{x} \in \mathcal{X}=[0, 1]^3$ as
\begin{align*}
f_{a_1, a_2, a_3, a_4}\left((x_1, x_2, x_3)\right) &= -\sum_{i}^4 a_i exp\left(
	-\sum_{j=1}^3 A_{i,j}(x_j - P_{i,j})^2
	\right),\\
	\bm{A}&=\begin{pmatrix}
	3 & 10 & 30\\
	0.1 & 10 & 35\\
	3 & 10 & 30\\
	0.1 & 10 & 35
	\end{pmatrix},\\
	\bm{P}&=10^{-4}\begin{pmatrix}
	3689 & 1170 & 2673\\
	4699 & 4387 & 7470\\
	1091 & 8732 & 5547\\
	381 & 5743 & 8828
	\end{pmatrix},
	\end{align*}
	where $a_1, a_2, a_3, a_4$ are constants.
	It is common to set $(a_1, a_2, a_3, a_4)=(1, 1.2, 3, 3.2)$, which is our setting for target task.
	
	We take the numerical setting of~\cite{Tighineanuetal22transferGPbo} to generate five different source datasets (and later repeat $5$ experiments for each dataset):
	\begin{align*}
	a_1\sim &Uniform(1.0, 1.02),\\
	a_2\sim &Uniform(1.18, 1.2),\\
	a_3\sim &Uniform(2.8, 3.0),\\
	a_4\sim &Uniform(3.2, 3.4).
	\end{align*}
	
	After obtaining the constants for our experiments, we sample noise free data points and use the samples to normalize our output
	\begin{align*}
	f_{a_1, a_2, a_3, a_4}\left((x_1, x_2, x_3)\right)_{normalize}
	&=\frac{
		f_{a_1, a_2, a_3, a_4}\left((x_1, x_2, x_3)\right) - mean(f_{a_1, a_2, a_3, a_4})
	}{
		std(f_{a_1, a_2, a_3, a_4})
	}.
	\end{align*}
	Then we set safety constraint $y \geq 0$ ($y$ is noisy $f$) and sample initial safe data.
	The sampling noise is Gaussian during the experiments  $\mathcal{N}\left(0, 0.01^2\right)$.
	
	\paragraph{PEngine Data:}
	
	We have 2 datasets, measured from the same prototype of engine under different conditions.
	Both datasets measure the temperature, roughness, emission HC, and emission NOx.
	The inputs are engine speed, relative cylinder air charge, position of camshaft phaser and air-fuel-ratio.
	The contextual input variables "position of camshaft phaser" and "air-fuel-ratio" are desired to be fixed.
	These two contextual inputs are recorded with noise, so we interpolate the values with a multi-output GP simulator.
	We construct a LMC trained with the 2 datasets, each task as one output.
	During the training, we split each of the datasets (both safe and unsafe) into 60\% training data and 40\% test data.
	After the model parameters are selected, the trained models along with full dataset are utilized as our GP simulators (one simulator for each output channel, e.g. temperature simulator, roughness simulator, etc).
	The first output of each GP simulator is the source task and the second output the target task.
	The simulators provide GP predictive mean as the observations.
	During the AL experiments, the input space is a rectangle spanned from the datasets, and $\mathcal{X}_{\text{pool}}$ is a discretization of this space from the simulators with $N_{\text{pool}}=3000$.
	We set $N_{\text{source}}=500$, $N=20$ (initially) and we query for 100 iterations ($N=20+100$).
	When we fit the models for simulators, the test RMSEs (60\% training and 40\% test data) of roughness is around 0.45 and of temperature around 0.25.
	
	In a sequential learning experiment, the surrogate models are trainable GP models.
	These surrogate models interact with the simulators, i.e. take $\mathcal{X}_{\text{pool}}$ from the simulators, infer the safety and query from $\mathcal{X}_{\text{pool}}$, and then obtain observations from the simulators.
	In our main~\crefrange{alg-SL}{alg-modSTL}, the surrogate models are the GP models while the GP simulators are systems that respond to queries $\bm{x}_*$.
    
        \paragraph{GEngine Data:}
        \begin{figure}[h]
		\begin{center}
			\centerline{\includegraphics[width=\columnwidth]{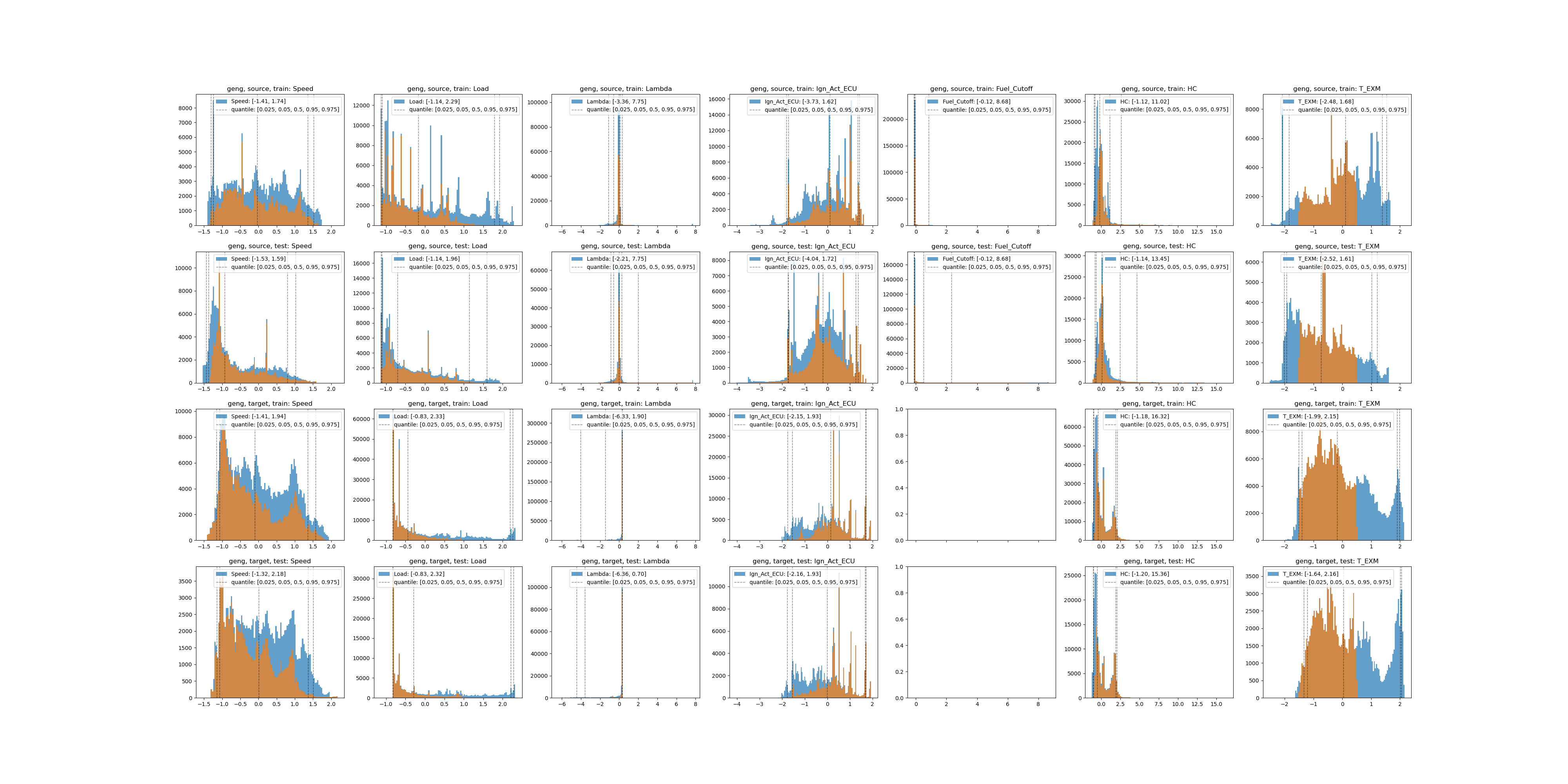}}
			\caption{
				The historgram of GEngine data.
                The first $5$ columns are inputs without NX history structure, the second last column is the output we model with $f, f_s$, and the last column is the temperature constraint variable.
                The rows are the following in order: (1) source task training set, (2) source task test set (not used in the experiments), (3) target task training set, and (4) target task test set.
                Blues are the histograms of raw data, and oranges are subsets if we add constraints on the temperature channel.
			}
			\label{figureS_gengine_data}
		\end{center}
	\end{figure}
	This problem has two datasets, one taken as the source task and one as the target task.
    Both datasets were published by~\cite{cyli2022}.
 Each dataset is split into training set and test set.
    The original datasets have the following inputs:
    (1) the first dataset has speed, load, lambda, ignition angle, and fuel cutoff (dimension $D=5$) which we take as the source task
    (2) speed, load, lambda, and ignition angle ($D=4$, no fuel cutoff) which we take as the target task.
    The $5$th input of the source data, fuel cutoff, is irrelevant and we exclude it (not used in the original paper).
    Please see~\cref{figureS_gengine_data} for the data histogram.
    The datasets are dynamic and are available with a nonlinear exogenous (NX) history structure, concatenating the relevant past points into the inputs (handled by~\cite{cyli2022} in their published code).
    The final input dimension of this problem is $D=13$.
As outputs, the source dataset measures the temperature, emission particle numbers, CO, CO2, HC, NOx, O2 and temperature.
    The target dataset measures particle numbers, HC, NOx and temperature.
    We take HC as our main learning output and temperature as the constraint variable.

 Both the source and target datasets have hundreds of thousands of data, but~\cite{cyli2022} discover that the performance saturates with few thousand randomly selected points or with few hundred actively selected points.
 We thus decide to run our experiments with $N_{\text{pool}}=10000$, a random subset of the training set.
 This pool subset is sampled before we compute the acquisition scores in each iteration.
 Furthermore, we start our AL experiments with $N_{\text{init}}=20$ and we query for $200$ iterations.
 The initial target data are sampled from the following input domain (written in the original space, no NX history structure here) $[-1, -0.7] \times (-\infty, -0.5] \times [0, 0.5] \times [0, 0.2]$.
 This domain is chosen by taking the density peak of the inputs, see row $3$ of~\cref{figureS_gengine_data} for the data histogram.
 Note that values of datasets were normalized.
 
 In this problem, the effect of one single query on the GP hyperparameters is not obvious.
 Therefore, to speed up the experiments, we train the hyperparameters only every $50$ queries (and at the beginning).
    The constraint is temperature $-1.5\leq z \leq 0.5$, and source temperature $-2\leq z_s \leq 0.5$.
    The temperature lower bound matters only to the outliers, it is the upper bound $0.5$ that plays the major role.
    The overall safe set is around $65\%$ of the input space (target test set).

\section{Ablation Studies and Further Experiments}\label{appendix-ablation}
	
	In this section, we provide ablation studies on the size of source dataset.
	
\paragraph{One Source Task, Varied $N_{\text{source}}$:}
\begin{figure}[h]
\begin{center}
\centerline{\includegraphics[width=\columnwidth]{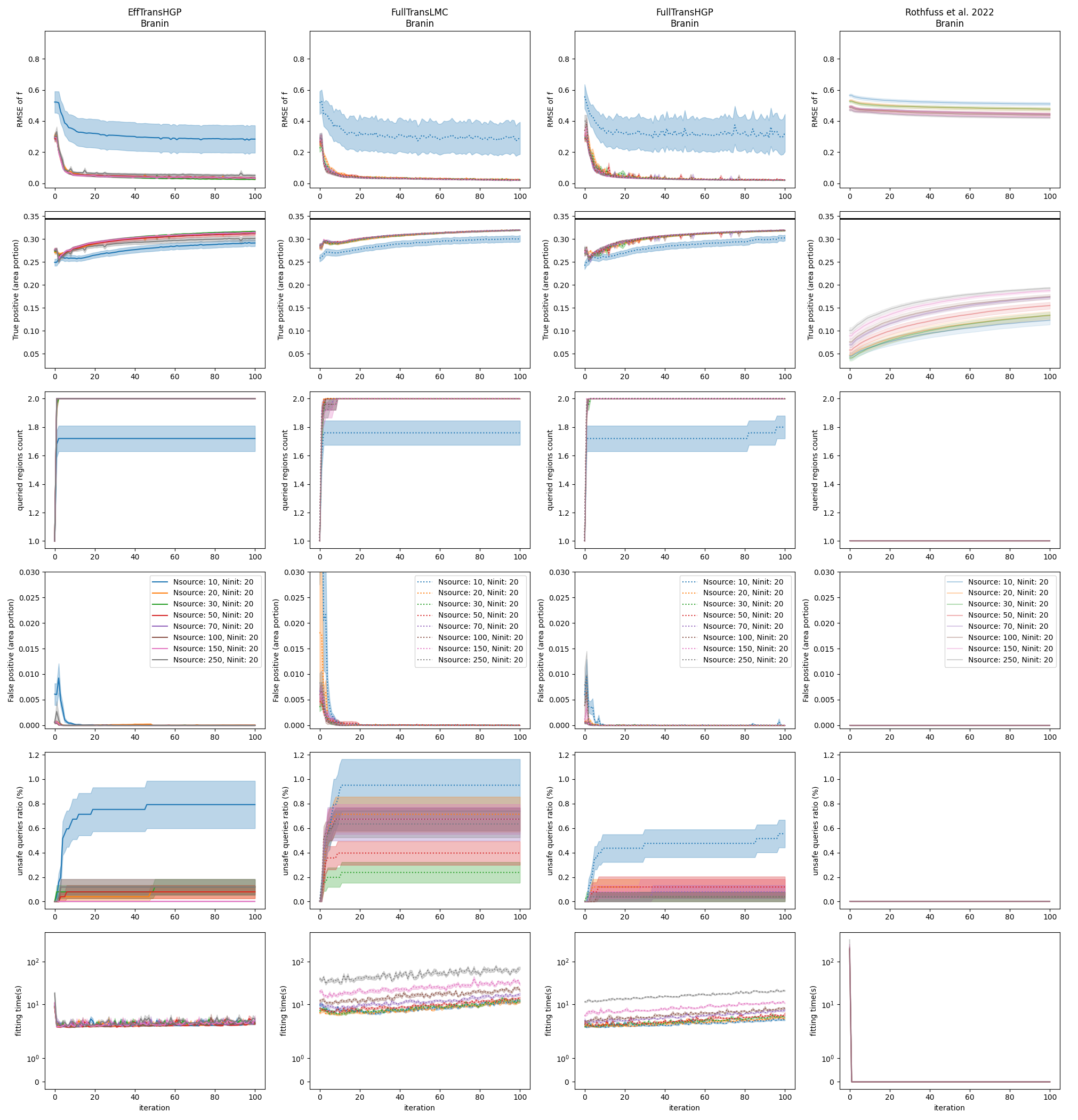}}
\caption{
Safe AL experiments: Branin data with different number of source data.
Each multitask method is plotted in one column.
The results are mean and one standard error of 25 experiments per setting.
$\mathcal{X}_{\text{pool}}$ is discretized from $\mathcal{X}$ with $N_{\text{pool}}=5000$.
The TP/FP areas are computed as number of TP/FP points divided by $N_{\text{pool}}$ (i.e. TP/FP as portion of $\mathcal{X}_{\text{pool}}$).
The third row shows the number of disjoint safe regions explored by the queries.
The fifth row, the unsafe queries ratio, are presented as percentage of number of iterations (e.g. at the $2$nd-iteration out of a total of $100$ iterations, one of the two queries is unsafe, then the ratio is $1$ divided by $100$).
The last row demonstrates the model fitting time.
At the first iteration (iter $0$-th), this includes the time for fitting both the source components and the target components (EffTransHGP).
With Rothfuss et al. 2022, source fitting is the meta learning phase.
}
\label{figureS3_ablation_ns}
\end{center}
\end{figure}

\begin{figure}[h]
\begin{center}
\centerline{\includegraphics[width=\columnwidth]{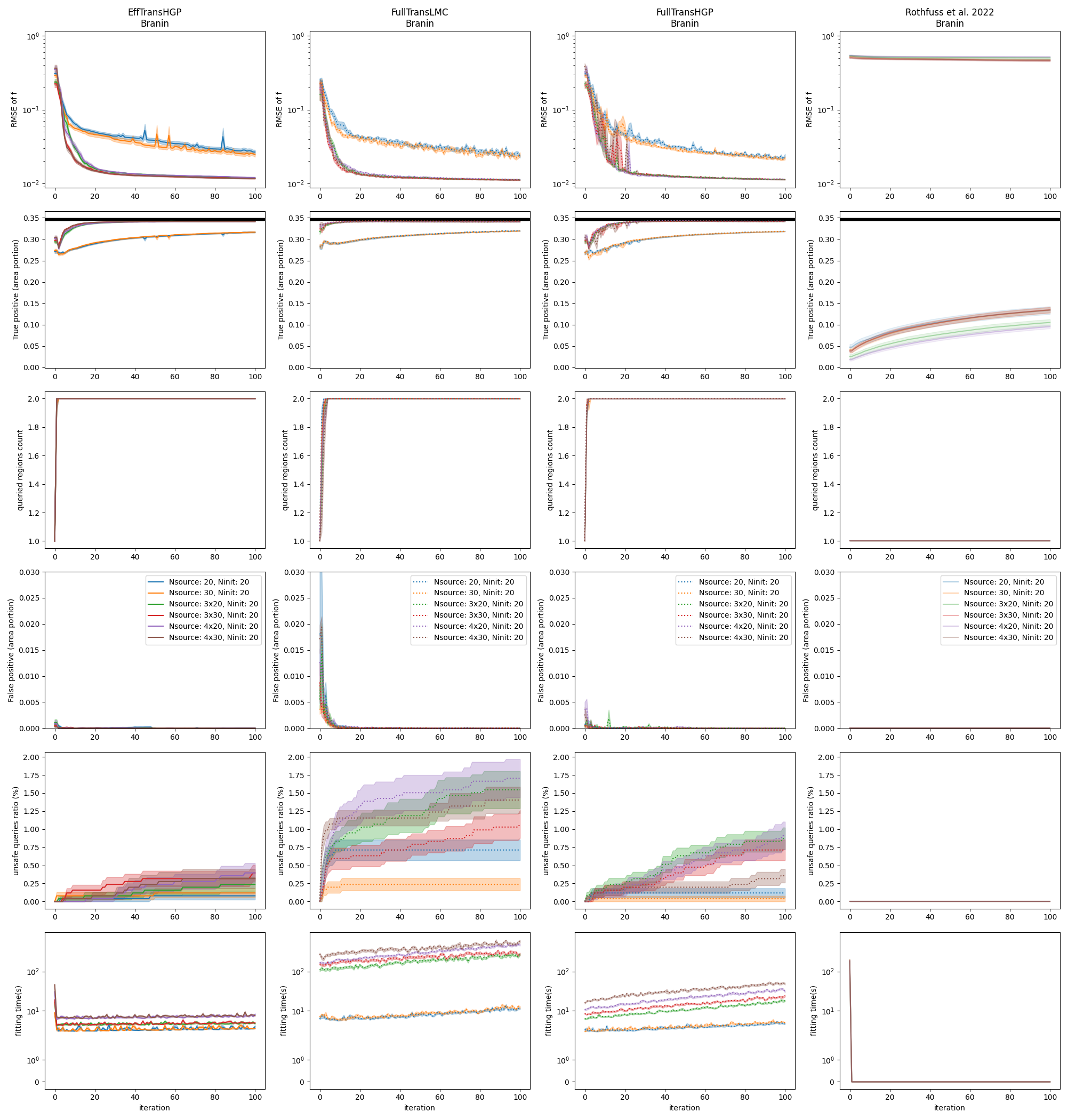}}
\caption{
Safe AL experiments with more than one source tasks: Branin data with multiple source tasks.
Each multitask method is plotted in one column.
We consider $1$, $3$ or $4$ source tasks and sample $20$ or $30$ data points per task.
The remaining setting is the same as described in~\cref{figureS3_ablation_ns}.
RMSE plots are plotted in log scale.
}
\label{figureS4_ablation_multi_sources}
\end{center}
\end{figure}

We perform experiments on the Branin function.
The results are presented in~\cref{figureS3_ablation_ns}.
The first conclusion is that all of the multitask methods outperform baseline safe AL (safe AL result shown in~\cref{main-figure}).
Note again that the RMSEs are evaluated on the entire space while the baseline safe AL explore only one safe region.
In addition, we observe that more source data result in better performances, i.e. lower RMSE and larger safe set coverage (TF area), while there exist a saturation level at around $N_{\text{source}}=100$.
	
\paragraph{Multiple Source Tasks:}
	
Next, we wish to manipulate the number of source tasks.
The transfer task GP formulation and the exact models are described in~\cref{appendix-mogp_detail-mogp_more_source}.
We take LMC and HGP with Mat{\'e}rn-5/2 kernels as the base kernels.
In this study, we generate source data with constraints, but discard the disjoint safe regions requirement when we sample the source tasks and data (in~\cref{main-figure}, the data are generated s.t. source and target task has large enough shared safe area).
We consider $1$, $3$ or $4$ source tasks, and we generate $20$ or $30$ data points per task~\crefp{figureS4_ablation_multi_sources}.
	In general, we see that $3$ source tasks significantly outperform $1$ source task while the performance saturates as adding $10$ more points per source task seems to benefit more than adding one more source task.
	Note here that all source data are generated independently, i.e. the observations of each task are not restricted to the same input locations.
	
	\paragraph{Further Plots and Experiments:}
	
	The main~\cref{table-discovered_regions} and~\cref{table-infer_time} present only the summary results.
	In~\cref{figureS3_mogp_branin}, we additionally provide the region clustering and fitting time w.r.t. AL iterations.
	Furthermore,~\cref{table-safety_al} counts the AL selected queries which, after a safety measurements are accessed, actually satisfy the safety constraints.
	This table is a sanity check that the methods are selecting points safely.
	
	With the PEngine datasets, we perform additional experiments of learning $\bm{f}=\bm{q}=$temperature, and the results are shown in~\cref{figureS4}.

	\begin{figure}[h]
		\begin{center}
			\centerline{\includegraphics[width=\columnwidth]{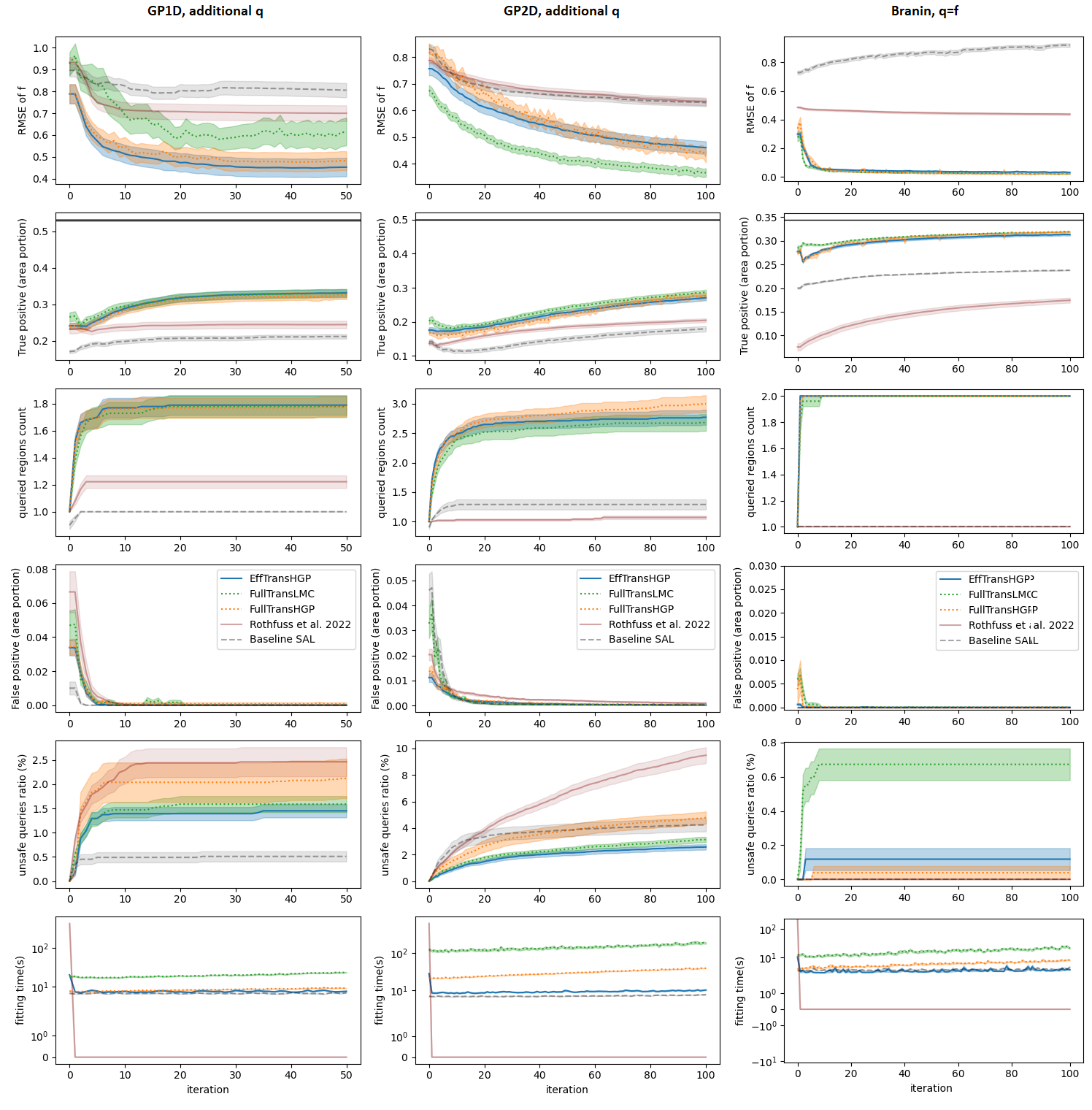}}
			\caption{
				Safe AL experiments on three benchmark datasets: GP data with $\mathcal{X}=[-2, 2]^D$, $D=1$ or $2$, constrained to $z \geq 0$, and the benchmark Branin function with constraint $y \geq 0$.
				The results are mean and one standard error of 100 (GP data) or 25 (Branin data) experiments.
				$\mathcal{X}_{\text{pool}}$ is discretized from $\mathcal{X}$ with $N_{\text{pool}}=5000$.
				We set $N_{\text{source}} = 100$ and $N$ is from $10$ (0th iteration) to $60$ (50th iteration) for GP1D, $N_{\text{source}} = 250, N$ is $20$ to $120$ for GP2D, and $N_{\text{source}} = 100, N$ is $20$ to $120$ for Branin.
				The first, second and fourth rows are presented in~\cref{main-figure} of the main paper.
				The TP/FP areas are computed as number of TP/FP points divided by $N_{\text{pool}}$ (i.e. TP/FP as portion of $\mathcal{X}_{\text{pool}}$).
				The third row shows the number of disjoint safe regions explored by the queries (main~\cref{table-discovered_regions} is taken from the last iteration here).
				The fifth row, the unsafe queries ratio, are presented as percentage of number of iterations (e.g. at the $2$nd-iteration out of a total of $50$ iterations, one of the two queries is unsafe, then the ratio is $1$ divided by $50$).
				The last row demonstrates the model fitting time.
				At the first iteration (iter $0$-th), this includes the time for fitting both the source components and the target components (EffTransHGP).
				With Rothfuss et al. 2022, source fitting is the meta learning phase.
			}
			\label{figureS3_mogp_branin}
		\end{center}
	\end{figure}
	
	\begin{table}
		\caption{Ratio of Safe Queries} \label{table-safety_al}
		\begin{center}
			\begin{tabular}{r|ccccc}
				\toprule
				\textbf{Methods}
				&\textbf{GP1D}
				&\textbf{GP2D}
				&\textbf{Branin}
				&\textbf{Hartmann3}
				&\textbf{GEngine}\\
				$N_{\text{query}}$
				&$50$
				&$100$
				&$100$
				&$100$
				&$200$\\
				\hline
				EffTransHGP
				& $ 0.986 \pm 0.001 $
				& $ 0.974 \pm 0.002 $
				& $ 0.999 \pm 0.0006 $
				& $ 0.972 \pm 0.003 $
				& $ 0.936 \pm 0.003 $\\
				FullTransHGP
				& $ 0.979 \pm 0.004 $
				& $ 0.952 \pm 0.005 $
				& $ 0.9996 \pm 0.0004 $
				& $ 0.972 \pm 0.003 $
				& $ 0.947 \pm 0.01 $\\
				FullTransLMC
				& $ 0.984 \pm 0.002 $
				& $ 0.969 \pm 0.002 $
				& $ 0.993 \pm 0.0009 $
				& $ 0.968 \pm 0.003 $
				& $ 0.91 \pm 0.008 $\\
				Rothfuss2022
				& $ 0.975 \pm 0.003 $
				& $ 0.905 \pm 0.006 $
				& $ 1.0 \pm 0.0 $
				& $ 0.84 \pm 0.011 $
				& $ 0.765 \pm 0.035 $\\
				SAL
				& $ 0.995 \pm 0.001 $
				& $ 0.958 \pm 0.005 $
				& $ 1.0 \pm 0.0 $
				& $ 0.966 \pm 0.002 $
				& $ 0.954 \pm 0.005 $\\
				\bottomrule
			\end{tabular}
		\end{center}
		Ratio of all queries selected by the methods which are safe in the ground truth (initial data not included, see~\cref{section-experiments} for the experiments).
		This is a sanity check in additional to FP safe set area, demonstrates that all the methods are safe during the experiments.
  Note that our benchmark problems all have around $35$\% to $65$\% of the space unsafe.
    Note that $\beta=4$ implies that, with a well-fitted safety GP, we tolerate a $2.275$\% probability of unsafe evaluations.
		PEngine results are not shown because the queries are all safe (the modeling FP safe set area is almost zero in this problem, see~\cref{main-figure} and~\cref{figureS4}).
	\end{table}
	
	\begin{figure}[h]
		\vskip 0.2in
		\begin{center}
			\centerline{\includegraphics[width=\columnwidth]{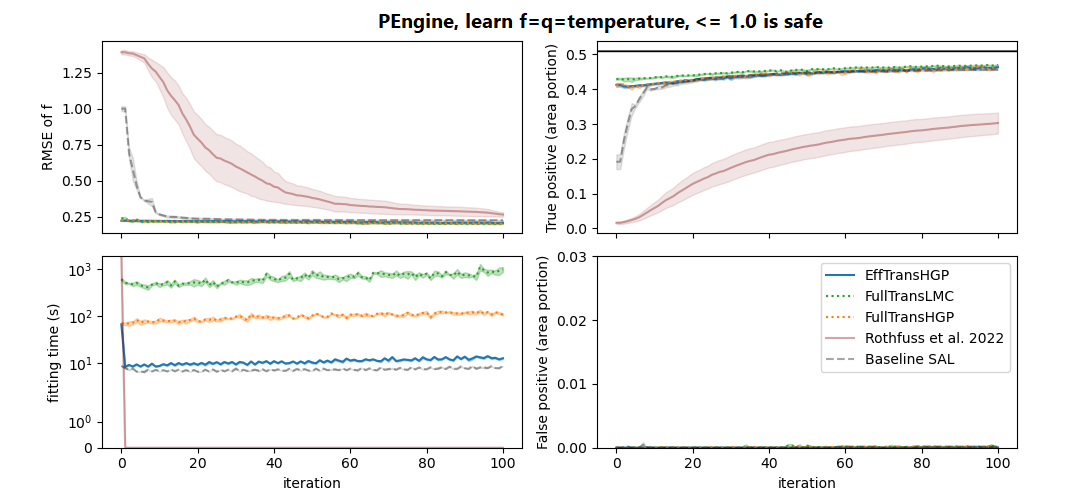}}
			\caption{
				Safe AL experiments on PEngine temperature, AL on $f$ (temperature) constrained by $q=f \leq 1.0 $.
				Baseline is safe AL without source data.
				Transfer is LMC without modularization.
				Efficient\_transfer is HGP with fixed and pre-computed source knowledge.
				$N_{\text{source}} = 500$, $N$is from $20$ to $120$.
				The results are mean and one standard error of 5 repetitions.
				The fitting time is in seconds.
			}
			\label{figureS4}
		\end{center}
		\vskip -0.2in
	\end{figure}
	
	\newpage
	
	\clearpage 

\end{document}